\PassOptionsToPackage{ruled,vlined,algo2e}{algorithm2e}
\documentclass{article}
\usepackage[margin=1in]{geometry}
\usepackage[utf8]{inputenc}
\usepackage[T1]{fontenc}
\usepackage{microtype}
\usepackage{amsmath,amssymb,amsthm}
\usepackage{mathtools}
\usepackage{bm}
\usepackage{booktabs}
\usepackage{multirow}
\usepackage{graphicx}
\usepackage{xcolor}
\usepackage{framed}
\usepackage{enumitem}
\usepackage{tikz}
\usetikzlibrary{patterns}
\usepackage{pgfplots}
\pgfplotsset{compat=1.18}
\usepackage{algorithm2e}
\usepackage[round]{natbib}
\usepackage[pagebackref,colorlinks=true,linkcolor=blue,citecolor=blue,urlcolor=blue]{hyperref}
\usepackage[capitalize,noabbrev]{cleveref}
\hypersetup{pdftitle={Asymmetric Likelihood-Free Hypothesis Testing},pdfauthor={Gunjan Kumar, Yash Pote, Jonathan Scarlett}}
\usepackage{thmtools}
\usepackage{thm-restate}
\renewcommand{\geq}{\geqslant}
\renewcommand{\leq}{\leqslant}
\setlist[itemize]{topsep=2pt, partopsep=0pt}

\SetAlgoVlined
\SetAlCapHSkip{0pt}
\DontPrintSemicolon
\SetKwFor{For}{for}{do}{}
\SetKwFor{While}{while}{do}{}
\SetKwFor{ForEach}{foreach}{do}{}

\let\cite\citep
\renewcommand*{\backrefalt}[4]{\ifcase #1 \or $\uparrow$\,#2\else $\uparrow$\,#2\fi}

\theoremstyle{plain}
\newtheorem{theorem}{Theorem}
\newtheorem{lemma}[theorem]{Lemma}
\newtheorem{proposition}[theorem]{Proposition}

\theoremstyle{definition}

\newtheorem{remark}[theorem]{Remark}

\crefname{lemma}{Lemma}{Lemmas}
\Crefname{lemma}{Lemma}{Lemmas}
\crefname{corollary}{Corollary}{Corollaries}
\Crefname{corollary}{Corollary}{Corollaries}
\crefname{definition}{Definition}{Definitions}
\Crefname{definition}{Definition}{Definitions}
\crefname{remark}{Remark}{Remarks}
\Crefname{remark}{Remark}{Remarks}
\crefname{fact}{Fact}{Facts}
\Crefname{fact}{Fact}{Facts}
\crefname{proposition}{Proposition}{Propositions}
\Crefname{proposition}{Proposition}{Propositions}

\newcommand{\tv}{\mathrm{d_{TV}}}
\newcommand{\UST}{\mathrm{UST}}

\newcommand{\LF}{\mathsf{LF}}
\newcommand{\RLF}{R_{\LF}}
\providecommand{\E}{\mathbb{E}}
\providecommand{\Var}{\operatorname{Var}}

\newcommand{\Poi}{\operatorname{Poi}}
\newcommand{\poi}{\operatorname{poi}}

\title{Asymmetric Likelihood-Free Hypothesis Testing}

\author{
Gunjan Kumar\\
IIT Kanpur\\
\texttt{gunjan@cse.iitk.ac.in}
\and
Yash Pote\\
National University of Singapore\\
\texttt{yashppote@gmail.com}
\and
Jonathan Scarlett\\
National University of Singapore\\
\texttt{scarlett@comp.nus.edu.sg}
}
\date{}

\begin{document}
\maketitle
\begin{abstract}
In likelihood-free hypothesis testing (LFHT), given samples from two unknown distributions $P,Q$ on $[n]$ with $\tv(P,Q)\geq\varepsilon$, and from a third distribution $D\in\{P,Q\}$, the goal is to identify whether $D=P$ or $D=Q$. We study asymmetric likelihood-free hypothesis testing, where the sample budgets $s_P$ and $s_Q$ for $P$ and $Q$ may differ, and characterize the feasible triples $(s_P,s_Q,s_D)$ up to logarithmic factors. We establish that unequal reference budgets can give a polynomial saving. For instance, for constant separation, $n$ samples from $P$ and $n^{3/4}$ from $Q$ allow testing with $\widetilde O(n^{1/4})$ samples from $D$, whereas both existing baselines---symmetric LFHT and unequal-sample closeness testing---require $\Omega(\sqrt n)$ samples from $D$.

We further apply our results to non-adaptive distribution 2-clustering, where the sample allocation among $k$ distributions must be fixed before observing any data. We obtain matching upper and lower bounds for this problem up to logarithmic factors in terms of the domain size, number of distributions, smaller cluster size, and separation, both with one cluster distribution known and with both unknown. Comparing with known bounds for adaptive sample allocation gives a worst-case adaptivity gap of $\widetilde\Theta(k^{1/4})$.
\end{abstract}

\section{Introduction}\label{sec:intro}

Likelihood-free inference arises when data can be simulated from a model, but its likelihood cannot be evaluated. In likelihood-free hypothesis testing (LFHT)~\citep{GP24,GHP23}, we observe samples from two unknown reference distributions $P,Q$ on $[n]=\{1,\ldots,n\}$ at total variation distance $\tv(P,Q)\geq\varepsilon$, and from a target distribution $D\in\{P,Q\}$. The goal is to identify $D$ with probability at least $2/3$. We write $s_P,s_Q,s_D$ for the three sample budgets.

While prior work gives tight bounds in the symmetric case $s_P=s_Q$~\citep{GP24,GHP23}, the asymmetric case $s_P\ne s_Q$ is also of significant interest. For instance, an existing model may have many simulated samples available, while a new model has only a few. We ask how the extra samples from the first model reduce the number needed from the target distribution.

In this paper, we characterize the feasible triples $(s_P,s_Q,s_D)$ up to logarithmic factors. To exemplify the benefit of considering unequal samples, suppose that the separation $\varepsilon$ is constant, and consider $s_P=n$ and $s_Q=n^{3/4}$. Discarding excess $P$-samples to use symmetric LFHT requires $s_D=\Omega(\sqrt n)$. Ignoring $Q$ and testing closeness between $P$ and $D$ also requires $s_D=\Omega(\sqrt n)$. In contrast, using both reference batches, our bound gives $s_D=\widetilde\Theta(n^{1/4})$ (\cref{sec:main}), thus providing a polynomial saving. The characterization also extends to more than two hypotheses (\cref{sec:d-ary}). Here and below, $\widetilde O$ and $\widetilde\Theta$ hide polylogarithmic factors in $n$ and, for clustering, the number of distributions $k$.

\subsection{Application to Non-Adaptive Clustering} \label{sec:intro_clustering}

In distribution 2-clustering, each of $k$ input distributions equals one of two distributions $P,Q$ with $\tv(P,Q)\geq\varepsilon$, and the goal is to recover their partition using as few total samples as possible. Prior work~\citep{KPS26} allows adaptive sampling, i.e., earlier sample outcomes can be used to decide which distribution to sample from next. We determine the sample complexity under non-adaptive sampling, i.e., the allocation must be fixed before observing any samples. If the smaller cluster has known size $r$, and both cluster distributions are unknown, the optimal total budget is
\begin{align*}
T_{\mathrm{na}}(n,k,r,\varepsilon)
=\widetilde\Theta\!\left(
\frac{k^{2/3}n^{2/3}}{r^{1/3}\varepsilon^{4/3}}
+\frac{k\sqrt n}{\sqrt r\,\varepsilon^2}
+\frac{k}{\varepsilon^2}\right).
\end{align*}
We also obtain matching upper and lower bounds up to logarithmic factors when one cluster distribution is known (\cref{sec:clustering}). Comparing with the adaptive bounds of~\citet{KPS26} gives a worst-case adaptivity gap of $\widetilde\Theta(k^{1/4})$, resolving their open question. For example, when $n=k$, $r\asymp\sqrt k$, and $\varepsilon$ is constant, the adaptive and non-adaptive budgets are $\widetilde\Theta(k)$ and $\widetilde\Theta(k^{5/4})$, respectively.

The connection to LFHT comes from the observation that, once an exemplar from each cluster is identified, classifying the remaining distributions reduces to asymmetric LFHT, since the two exemplars may come with unequal sample budgets.

\section{Preliminaries and Background}\label{sec:setup}

For the problem setup in Section \ref{sec:intro}, let $\RLF(\varepsilon)$ be the set of nonnegative integer triples $(s_P,s_Q,s_D)$ for which a test using these sample budgets has error at most $1/3$ for every pair $P,Q$ with $\tv(P,Q)\geq\varepsilon$ and either choice $D\in\{P,Q\}$. Assume without loss of generality that $s_P\geq s_Q$, and set
\begin{align}
    \alpha:=\max\left\{1,\frac{n}{\max\{s_P,s_D\}}\right\}. \label{eq:alpha}
\end{align}
The factor $\alpha$ equals one once either $P$ or $D$ has at least $n$ samples. 

For a distribution $P$ on $[n]$, let $\|P\|_j^j=\sum_i P(i)^j$ and $\|P\|_\infty=\max_i P(i)$. For distributions $P,Q$, let $\chi^2(P\|Q)=\sum_x\frac{(P(x)-Q(x))^2}{Q(x)}$.  We will use the inequality $\tv(P,Q)\leq\frac{1}{2}\sqrt{\chi^2(P\|Q)}$, which follows from Cauchy--Schwarz. We write $\Poi(\lambda)$ for the Poisson distribution with mean $\lambda\geq0$ and $\poi_\lambda(m)=e^{-\lambda}\lambda^m/m!$ for its probability at a nonnegative integer $m$.

The notation $\asymp_{\log n}$ hides constant and polylogarithmic factors in $n$ in the sample budgets, and the symbols $\lesssim$, $\gtrsim$, and $\asymp$ hide constant factors that are independent of all problem parameters.  The hidden terms are often required to be sufficiently large without explicitly stating so.

\paragraph{Symmetric LFHT.}
Let $\RLF^{\mathrm{sym}}(\varepsilon)$ be the restriction of
$\RLF(\varepsilon)$ to $s_P=s_Q$.
\citet[Theorem~2]{GP24} show that
\begin{align}\label{eq:GP-sym}
	\RLF^{\mathrm{sym}}(\varepsilon)
	\asymp_{\log n}
	\left\{
	(s_P,s_D):
	s_D \geq \frac{1}{\varepsilon^2},\quad
	s_P \geq \frac{\sqrt{n\alpha}}{\varepsilon^2},\quad
	s_Ps_D \geq \frac{n\alpha}{\varepsilon^4}
	\right\}.
\end{align}
The three conditions roughly correspond to (i)  enough target samples for testing even if $P$ and $Q$ were known, (ii) enough reference samples such that $P$ and $Q$ are not too poorly sampled, and (iii) sufficient ``joint information'' through the product of sample counts.

\subsection{Related Work}

\paragraph{Distribution testing.}
Distribution testing refers to the task of testing whether a distribution has a given property using samples \cite{C22}. In \textit{identity testing}, one must distinguish between $\tv(P,Q)=0$ and $\tv(P,Q)\geq\varepsilon$ for a known distribution $P$ and an unknown distribution $Q$. Uniformity testing is a special case where $P$ is uniform. If both distributions are unknown, the problem is called \textit{closeness testing} \citep{BFRS00,CDVV14}. Our test uses ingredients similar to those of these closeness testers, such as estimators of squared $\ell_2$ distances and sample-based flattening of \citet{DK16}.

\paragraph{LFHT.}
In information theory, \textit{classification with empirically observed statistics} is a generalization of the Neyman--Pearson framework to situations where the candidate hypotheses are completely unknown but we instead have a set of training samples from them. This line of work was initiated by \citet{Z02} and \citet{G02}. Their work considered a fixed alphabet and studied the error exponents in terms of the size of the training and test sample sets. \citet{KWTV13} and \citet{HM12} let the alphabet grow with the sample size, under equal training lengths and the assumption that no symbol has probability much larger than under the uniform distribution, and ask when consistent classification is possible at all. \citet{GP24} revisited the problem under the name likelihood-free hypothesis testing (LFHT), and shifted the focus from asymptotic exponents to finite-sample guarantees. For a fixed error probability and equal reference budgets, they characterize the sample complexity in terms of the domain size and the TV separation, for several classes of distributions including the discrete distributions considered here. \citet{GHP23} show that tests based on classification achieve near-optimal sample complexity, including the dependence on the error probability, and give the lower-bound construction that we adapt in \cref{sec:lb}. These works already trade off the target budget $s_D$ against a common reference budget. We determine the tradeoff when the two reference budgets $s_P$ and $s_Q$ themselves differ.

\paragraph{Clustering of distributions.}
\citet{KPS26} study the clustering problem  from Section \ref{sec:intro_clustering} with \emph{adaptive strategies} and obtain matching upper and lower bounds up to logarithmic factors. Their algorithms first find an exemplar from each cluster and then classify the remaining distributions. In the non-adaptive setting, the allocation must be fixed before observing any samples, so the budgets for classification cannot depend on which exemplars are found. \citet{KPS26} ask whether this constraint makes non-adaptive sampling require strictly more samples. We answer this positively with a worst-case gap of $\widetilde\Theta(k^{1/4})$ where $k$ is the number of distributions.  Clustering from bandit feedback (with adaptive sampling) was also studied by \citet{YHTS25}, but it is less directly related since it focuses on the sample complexity as the error probability tends to zero with the instance fixed, leaving the dependence on $n$, $k$, and $r$ in lower-order terms.

\section{Main Result}\label{sec:main}
Our main result characterizes the feasible sample budgets for asymmetric LFHT; see also \cref{sec:d-ary} for the extension to more than two hypotheses.
\begin{theorem}\label{thm:char}
	For $n\geq 10$ and $\varepsilon\in(0,1/8]$, under the convention $s_P \geq s_Q$, we have
	\begin{align}\label{eq:char}
		\RLF(\varepsilon)
		\asymp_{\log n}
		\left\{
		(s_P,s_Q,s_D):
		s_D \geq \frac{1}{\varepsilon^2},\quad
		s_P \geq \frac{\sqrt{n\alpha}}{\varepsilon^2},\quad
		s_D(s_Q+s_D) \geq \frac{n\alpha}{\varepsilon^4}
		\right\},
	\end{align}
    where $\alpha$ is defined in \eqref{eq:alpha}.
\end{theorem}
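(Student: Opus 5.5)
The plan is to prove the two inclusions separately: achievability (the region is sufficient, up to logarithmic factors) and a converse (each of the three conditions is necessary). Throughout we Poissonize the sample sizes, which costs only constant factors, so that symbol counts are independent Poisson variables.

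\emph{Necessity.}
The condition $s_D\gtrsim 1/\varepsilon^2$ holds even when $P$ and $Q$ are known: two fixed distributions at distance $\varepsilon$ cannot be distinguished with fewer samples.

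For $s_P\gtrsim \sqrt{n\alpha}/\varepsilon^2$, we follow the symmetric lower bound of \citet{GP24}. If both reference budgets and $s_D$ are small, then even giving $Q$ infinitely many samples (so $Q$ is uniform and known), the pair $(P\text{-samples},D\text{-samples})$ must separate ``$P=D$ close to uniform'' from ``$P$ is a random $\varepsilon$-perturbation, $D=Q$''. This is closeness testing with unequal samples, whose lower bound gives exactly $\max\{s_P,s_D\}$-dependent thresholds matching $\sqrt{n\alpha}/\varepsilon^2$ once combined with $s_D\le s_P$-type case splits.

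For the product condition $s_D(s_Q+s_D)\gtrsim n\alpha/\varepsilon^4$ we adapt the construction of \citet{GHP23}. Let $Q$ be a random perturbation $Q(i)=(1+\varepsilon\sigma_i)/n$ with Rademacher signs, and give the tester $P=$ uniform for free (equivalently $s_P=\infty$). Under $D=Q$ versus $D=P$, the tester sees $s_Q$ samples of $Q$ and $s_D$ samples of $D$. We bound the $\chi^2$ (or Ingster second-moment) divergence between the two mixtures. Per coordinate, the cross terms scale like $\varepsilon^4 s_D(s_Q+s_D)/n^2$ in the flat regime, and summing over $n$ coordinates gives $\varepsilon^4 s_D(s_Q+s_D)/n$. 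When $\max\{s_P,s_D\}<n$, we instead use the mixture over supports of size roughly $\max\{s_P,s_D\}$ (spikes of mass $\approx 1/\max\{s_P,s_D\}$), which produces the factor $\alpha$. I expect this divergence computation to be the main obstacle: $s_P$ is large while $s_Q$ is small, and the hard instance must be uniform on the $P$ side yet scaled to the $P/D$ budget, which forces a careful choice of perturbation amplitudes on heavy elements. I would first try a Poissonized per-coordinate likelihood ratio and check it term by term.

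\emph{Sufficiency.}
First use part of the $P$ and $D$ samples to flatten the domain in the style of \citet{DK16}: split each element $i$ into $1+(\text{count of } i)$ buckets. This reduces $\|P\|_2^2$ and $\|D\|_2^2$ to about $1/\max\{s_P,s_D\}$ on a domain of size $O(n)$, which is where $\alpha$ enters. Then compute the statistic
\[
T=\widehat{\|D-Q\|_2^2}-\widehat{\|D-P\|_2^2},
\]
using unbiased U-statistic estimators built from counts, and decide $D=P$ iff $T>0$. The mean gap is $\|P-Q\|_2^2\ge 4\varepsilon^2/(n\alpha)$ after flattening. The variance of $T$ involves terms such as
\[
\frac{\|Q\|_2^2}{s_Q s_D},\quad \frac{\|Q\|_2^2}{s_Q^2},\quad \frac{\|P\|_2^2}{s_P^2},\quad \frac{\|P-Q\|_4^2}{s_D},\ \text{etc.}
\]
The key point is that the $\|Q\|_2^2/s_Q^2$ term (small-$s_Q$ self-collisions) cancels between the two estimators once the $Q$-collision term common to both is removed. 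This is why only $s_D(s_Q+s_D)$ appears rather than $s_Q^2$.

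Since $Q$ is not flattened, we bound its mass against the flattened $P$ and $D$. When $D=Q$ the $D$-flattening already flattens $Q$. When $D=P$, Cauchy--Schwarz shows any heavy $Q$ mass only enlarges $\|P-Q\|_2^2$ enough to dominate its own variance contribution. Chebyshev's inequality then shows the error is at most $1/3$ whenever the three conditions hold, with logarithmic slack absorbing the heavy-element and flattening bookkeeping.
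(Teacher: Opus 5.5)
There are genuine gaps in both directions.

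\textbf{Sufficiency.} In $T=\widehat{\|D-Q\|_2^2}-\widehat{\|D-P\|_2^2}$, the term that cancels is the $D$--$D$ collision estimate of $\|D\|_2^2$. The $Q$--$Q$ collision estimate of $\|Q\|_2^2$ appears only in the first estimator and survives, contributing variance $4\|\tilde q\|_3^3/s_Q+2\|\tilde q\|_2^2/s_Q^2$. So your explanation of why no $s_Q^2$ requirement appears is false. A single statistic of this form cannot cover the part of the region with $s_Q\ll s_D$, which even contains $s_Q=0$ once $s_D^2\gtrsim n\alpha/\varepsilon^4$.

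The missing step is a case split. If $s_D\gtrsim\sqrt{n\alpha}/\varepsilon^2$, discard $Q$ and run the unequal-sample closeness tester of \citet{DK16} on $P$ versus $D$. Its requirements $\min(s_P,s_D)\gtrsim\sqrt n/\varepsilon^2$ and $\min(s_P,s_D)^2\max(s_P,s_D)\gtrsim n^2/\varepsilon^4$ follow from the second condition and $\alpha\max(s_P,s_D)\geq n$. Otherwise, the log-inflated third condition forces $s_Q\gtrsim s_D$. Only then does your Cauchy--Schwarz idea, in the form $\|\tilde q\|_2^2\leq2\|\tilde p\|_2^2+2\Delta^2$, absorb $\|\tilde q\|_2^2/s_Q^2$ into $(\log n)^2[1/(s_Ps_Qs_D)+\Delta^2/(s_Qs_D)]$.

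Two further repairs are needed in that case. First, flatten with $P$- and $Q$-samples, not $D$-samples. When $D=Q$, the $D$-samples do not flatten $P$, so $\|\tilde p\|_2^2\lesssim1/\max\{s_P,s_D\}$ fails. When $s_D\ll s_Q$, they give only $\|\tilde q\|_\infty\lesssim\log n/s_D$. Then $\sum_i\tilde q_i^2\tilde d_i/s_D$ is bounded only by roughly $(\log n)\Delta^2/s_D^2$, which is not $o(\Delta^4)$ for, e.g., constant $\varepsilon$ and $s_D\approx n^{1/4}$. Second, keep the split domain of size $O(n)$: truncate the $P$-samples at $n$ and send $s_Q>n$ to the symmetric tester, so that $\Delta^2\gtrsim\varepsilon^2/n$. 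The factor $\alpha$ enters through $\|\tilde p\|_2^2\lesssim\log n/s_P$ with $s_P\asymp n/\alpha$, not through the gap. Your bound $\Delta^2\gtrsim\varepsilon^2/(n\alpha)$ is too weak to close Chebyshev.

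\textbf{Necessity.} Your argument for $s_P\gtrsim\sqrt{n\alpha}/\varepsilon^2$ goes the wrong way. Once $Q$ is known, the tester can identity-test $D$ against $Q$ with $O(\sqrt n/\varepsilon^2)$ $D$-samples and no $P$-samples. So this reduction cannot force anything on $s_P$; it is not closeness testing, since under one hypothesis $D$ is known. The condition concerns the \emph{larger} reference budget. It is needed even if $D$ is known exactly, because both references may be too poorly sampled to tell which one equals $D$. The correct reduction pads $Q$ instead: an asymmetric tester with budgets $(s_P,s_Q,s_D)$ yields a symmetric one with $(s_P,s_P,s_D)$ by discarding excess $Q$-samples. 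The symmetric lower bounds of \citet{GP24} then apply with the same $\alpha$.

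For the third condition, your flat construction with $P$ revealed is fine when $s_P\gtrsim n$, and the case $s_D>s_P$ follows directly from the second condition. But revealing $P$ can never give more than the $\alpha=1$ bound $s_D(s_Q+s_D)\gtrsim n/\varepsilon^4$, since that budget suffices once $P$ is known. For $s_D\leq s_P\ll n$ you need $s_Ps_D(s_Q+s_D)\gtrsim n^2/\varepsilon^4$ with $P$ hidden. The factor $\alpha$ does not come from shrinking the instance to about $\max\{s_P,s_D\}$ spikes; on such a support one can only hope for $s_D(s_Q+s_D)\gtrsim s_P/\varepsilon^4$. Nor does it come from perturbing heavy elements.

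The construction that works places the hidden sign tilts of $Q$ on light pairs of mass about $1/n$, where $P$ is flat. It adds $\Theta(s_P)$ randomly located heavy pairs of mass about $1/s_P$, on which $P,Q,D$ coincide and are unperturbed. The heavy pairs' uninformative collisions mask the informative $D$--$D$ and $D$--$Q$ collisions on light pairs. This gives a per-pair $\chi^2$ of order $\varepsilon^4s_Ps_D(s_Q+s_D)/n^3$, which tensorizes over the $\lfloor n/2\rfloor$ pairs.
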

The three conditions share a similar interpretation to the symmetric case.  The first condition is necessary even when both reference distributions are known. The second establishes that we require sufficiently many samples from at least one reference distribution. The third allows additional $D$-samples to compensate for fewer $Q$-samples.

\begin{figure}[t!]
    \centering
    \begin{tikzpicture}
        \begin{axis}[
            width=0.68\linewidth, height=0.56\linewidth,
            xmin=0, xmax=1, ymin=0, ymax=1.1,
            xlabel={$s_Q$}, ylabel={$s_D$},
            xtick={0,0.5,0.75,1},
            xticklabels={$1$,$\sqrt n$,$n^{3/4}$,$n$},
            ytick={0,0.25,0.5,0.6667,1},
            yticklabels={$1$,$n^{1/4}$,$\sqrt n$,$n^{2/3}$,$n$},
            ticklabel style={font=\small},
            axis lines=box, xtick pos=bottom, ytick pos=left,
            clip=false,
        ]
            \fill[violet!12] (axis cs:0,1.1) -- (axis cs:0,0.5)
                -- (axis cs:0.5,0.5) -- (axis cs:1,0) -- (axis cs:1,1.1) -- cycle;
            \fill[violet!17] (axis cs:0,1.1) -- (axis cs:0,0.5625)
                -- (axis cs:0.5625,0.5625) -- (axis cs:0.875,0.25)
                -- (axis cs:0.875,1.1) -- cycle;
            \fill[violet!32] (axis cs:0,1.1) -- (axis cs:0,0.6667)
                -- (axis cs:0.6667,0.6667) -- (axis cs:0.6667,1.1) -- cycle;
            \fill[violet!42] (axis cs:0,1.1) -- (axis cs:0,1)
                -- (axis cs:0.5,1) -- (axis cs:0.5,1.1) -- cycle;

            \draw[thick, violet!60!black] (axis cs:0,1)
                -- (axis cs:0.5,1) -- (axis cs:0.5,1.1);
            \draw[thick, violet!60!black] (axis cs:0,0.6667)
                -- (axis cs:0.6667,0.6667) -- (axis cs:0.6667,1.1);
            \draw[thick, violet!60!black] (axis cs:0,0.5625)
                -- (axis cs:0.5625,0.5625) -- (axis cs:0.875,0.25)
                -- (axis cs:0.875,1.1);
            \draw[very thick, violet!60!black] (axis cs:0,0.5)
                -- (axis cs:0.5,0.5) -- (axis cs:1,0);

            \node[font=\small, anchor=south west] at (axis cs:0.03,1.01)
                {$s_P=\sqrt n$};
            \node[font=\small, anchor=south west] at (axis cs:0.03,0.68)
                {$s_P=n^{2/3}$};
            \node[font=\scriptsize, inner sep=0pt] at (axis cs:0.19,0.53125)
                {$s_P=n^{7/8}$};
            \node[font=\small, anchor=north west] at (axis cs:0.03,0.49)
                {$s_P=n$};

            \node[gray!70!black, font=\small, align=center]
                at (axis cs:0.25,0.13) {infeasible for\\every $s_P$};
            \draw[densely dotted, black!60] (axis cs:0.75,0.25) -- (axis cs:0.75,0);
            \draw[densely dotted, black!60] (axis cs:0.75,0.25) -- (axis cs:0,0.25);
            \draw[densely dashed, black!60] (axis cs:0.75,0.5) -- (axis cs:1.03,0.5);
            \draw[densely dashed, black!60] (axis cs:0.75,0.25) -- (axis cs:1.03,0.25);
            \draw[<->, thick] (axis cs:1.03,0.49) -- (axis cs:1.03,0.26)
                node[midway, right, font=\small, align=left] {$n^{1/4}$\\saving};
            \draw[black, fill=white, thick] (axis cs:0.75,0.5) circle (2.5pt);
            \fill[black] (axis cs:0.75,0.25) circle (2pt);
        \end{axis}
    \end{tikzpicture}
    \caption{Slices of the feasible region of \cref{thm:char} at four values
    of $s_P$, with $\varepsilon$ constant. Constant and logarithmic factors
    are suppressed, and both axes are logarithmic.}
    \label{fig:region}
\end{figure}

The feasible region is a set in three dimensions. \Cref{fig:region} shows four slices of it at fixed values of $s_P$, with $\varepsilon$ constant. The lower boundary of each slice has up to three parts. On the horizontal part, the required $s_D$ does not depend on $s_Q$. On the diagonal part, $s_Qs_D\asymp n\alpha$, so additional $Q$-samples lower the required $s_D$. The vertical part comes from the convention $s_P \geq s_Q$. As $s_P$ grows, the lower boundary moves down until $s_P=n$, and stays the same for larger $s_P$, so additional $P$-samples no longer help. The diagonal part appears only when $s_P>n^{2/3}$. For smaller $s_P$, the horizontal part already lies at $s_D\geq s_P$, and additional $Q$-samples do not reduce the required $s_D$.

The dot in \cref{fig:region} marks the example from the introduction, with $\varepsilon$ constant, $s_P=n$, and $s_Q=n^{3/4}$. Here $\alpha=1$, and the third condition $s_D(n^{3/4}+s_D)\gtrsim n$ gives $s_D=\widetilde\Theta(n^{1/4})$. Both baselines need $s_D\gtrsim\sqrt n$, shown by the open circle.

\subsection{Proof Ideas}

We summarize some of the main ideas here, and defer the details to Sections \ref{sec:ub} and \ref{sec:lb}.

Our upper bound adapts the quadratic statistic used for symmetric LFHT~\citep{GP24}. First, the distributions $P,Q$ are transformed into flattened distributions $\tilde p,\tilde q$ on a common expanded domain, preserving their TV distance while ensuring $\|\tilde p\|_\infty=O( (\log n)/s_P)$ and $\|\tilde q\|_\infty=O((\log n)/s_Q)$ with high probability.
After flattening, our algorithm estimates the difference between the squared $\ell_2$ distances from $\tilde d$ to $\tilde p$ and $\tilde q$. Having more samples from $P$ helps control not only the variance terms involving $P$, but also those involving $Q$. Specifically, the inequality
$$
\|\tilde q\|_2^2\leq 2\|\tilde p\|_2^2+2\|\tilde p-\tilde q\|_2^2
$$
allows us to control the terms involving $Q$ in the variance of our test statistic using the better-sampled reference $P$ and the squared separation between $P$ and $Q$.  Although larger separation may increase the variance, it also provides a distinguishing signal, which grows faster than the standard deviation.

The lower bound adapts the heavy/light construction of~\citet{GHP23}. On light pairs of domain elements, $P$ has equal $O(1/n)$ mass while $Q$ has a random hidden deviation from uniformity.   A single sample from such a pair is uninformative for distinguishing these two cases, so \emph{collisions} are required in the sense of having either two $D$-samples or one $D$-sample and one $Q$-sample from the same pair. The collision requirements lead to an $s_D(s_Q+s_D)$ factor in the lower bound.  On heavy pairs, all of $P,Q,D$ coincide and have equal $O(1/s_P)$ mass; these pairs hide which collisions are informative, yielding an additional dependence on $s_P$ when it is below the domain size $n$. %

\section{Upper Bound}\label{sec:ub}

In this section, we prove the upper bound in \cref{thm:char}. Recall the assumption $s_P\geq s_Q$, and let $(s_P,s_Q,s_D)$ satisfy the three conditions in~\eqref{eq:char}, allowing polylogarithmic factors in $n$ as indicated.  We construct a test that distinguishes $D=P$ from $D=Q$ with error probability at most $\frac13$.

We specifically consider a triplet on the right-hand side of~\eqref{eq:char} and multiply all three budgets by $C \log n$ for sufficiently large $C$, then relabel the resulting budgets as $(s_P,s_Q,s_D)$.  This can only decrease $\alpha$, and the condition on $s_D(s_Q+s_D)$ in~\eqref{eq:char} gains an additional factor of $(\log n)^2$, as required by the variance analysis in \cref{sec:var}. We  then proceed via three cases:
\begin{enumerate}
    \item If $s_D\gtrsim \frac{\sqrt{n\alpha}}{\varepsilon^2}$, run the closeness tester of~\citet{DK16} on $P$ and $D$.
    \item Otherwise, if $s_Q>n$, run symmetric LFHT using $s_Q$ samples from each reference distribution.
    \item Otherwise, run \cref{alg:lfht-case1} below with the number of $P$-samples truncated to $n$ (discarding any beyond that if $s_P > n$).
\end{enumerate}
We proceed to show that the test succeeds in each case.

\paragraph{Case 1:} In this case, $s_D\gtrsim \frac{\sqrt{n\alpha}}{\varepsilon^2}$. By~\eqref{eq:char}, the same lower bound holds for $s_P$. We apply the closeness tester of~\citet[Proposition~2.11]{DK16} to $P$ and $D$. This tester succeeds provided
\begin{align}\label{eq:baselineB}
\min(s_P,s_D)
&\gtrsim \frac{\sqrt n}{\varepsilon^2},
\qquad\text{and}\qquad
\min(s_P,s_D)^2\max(s_P,s_D)
\gtrsim \frac{n^2}{\varepsilon^4}.
\end{align}
The first condition follows immediately from \eqref{eq:char} and $\alpha\geq1$. For the second condition, the case condition and~\eqref{eq:char} give $\min(s_P,s_D)^2\gtrsim \frac{n\alpha}{\varepsilon^4}$. Multiplying by $\max(s_P,s_D)$ and using $\alpha\max(s_P,s_D)\geq n$, we obtain
$$
\min(s_P,s_D)^2\max(s_P,s_D)
\gtrsim \frac{n\alpha\max(s_P,s_D)}{\varepsilon^4}
\geq \frac{n^2}{\varepsilon^4}.
$$
Hence, both conditions in~\eqref{eq:baselineB} are satisfied.

\paragraph{Case 2:} In this case we have $s_P\geq s_Q>n$, and hence $\alpha=1$. We use $s_Q$ samples from each reference distribution and apply the symmetric LFHT guarantee~\eqref{eq:GP-sym}. Since the reference budget exceeds $n$, this test succeeds provided $s_D\gtrsim \frac1{\varepsilon^2}$, $s_Q\gtrsim \frac{\sqrt n}{\varepsilon^2}$, and $s_Qs_D\gtrsim \frac{n}{\varepsilon^4}$. The first condition is already part of~\eqref{eq:char}. Moreover, since case 1 was already ruled out, we have $s_D^2\lesssim \frac{n}{\varepsilon^4}$, whereas the strengthened condition on $s_D(s_Q+s_D)$ gives $s_D(s_Q+s_D)\gtrsim(\log n)^2\frac{n}{\varepsilon^4}$. Hence, the $s_D^2$ contribution can be absorbed in the hidden constants, and we obtain $s_Qs_D\gtrsim(\log n)^2\frac{n}{\varepsilon^4}$. Dividing by $s_D\lesssim \frac{\sqrt n}{\varepsilon^2}$ yields $s_Q\gtrsim(\log n)^2\frac{\sqrt n}{\varepsilon^2}$ (thus implying the same for $s_P$). Hence, all the requirements of symmetric LFHT are met.

\paragraph{Case 3:} In this case, we have $s_Q\leq n$, and also $s_P \leq n$ by the above discarding rule.  Note that the value of $\alpha$ is unchanged under such discarding: if we originally had $s_P\leq n$ then $\alpha\asymp n/s_P$, whereas if we originally had $s_P>n$, then $\alpha=1$, and after truncation $s_P=n$, so again $\alpha=n/s_P=1$. The quantities $s_Q$ and $s_D$, as well as the lower bound on $s_Qs_D$, are unaffected.

Since case 1 was ruled out, we have $s_D^2\lesssim \frac{n\alpha}{\varepsilon^4}$. On the other hand, the above-mentioned strengthened condition on $s_D(s_Q+s_D)$ gives
$$
s_D(s_Q+s_D)\gtrsim (\log n)^2\frac{n\alpha}{\varepsilon^4}.
$$
Hence, the $s_D^2$ contribution can be absorbed in the hidden constants, and we obtain $s_Qs_D\gtrsim(\log n)^2\frac{n\alpha}{\varepsilon^4}$.  Again using $s_D\lesssim \frac{\sqrt{n\alpha}}{\varepsilon^2}$, it follows that
$$
s_Q\gtrsim (\log n)^2\frac{\sqrt{n\alpha}}{\varepsilon^2}\gtrsim s_D.
$$
In particular, since $s_P\geq s_Q$, we have $s_P\geq s_Q\gtrsim s_D$.

In summary, in case 3 we may assume the following conditions:
\begin{align}\label{eq:case1}
n \geq s_P\geq s_Q\gtrsim s_D,
\qquad
\alpha\asymp\frac{n}{s_P},
\qquad
s_Qs_D\gtrsim(\log n)^2\frac{n\alpha}{\varepsilon^4}.
\end{align}
It remains to specify the test (\cref{alg:lfht-case1} below) and show that it succeeds whenever~\eqref{eq:case1} holds.

\subsection{Our Algorithm}\label{sec:case1-UB}
Our algorithm is described in \Cref{alg:lfht-case1}. As stated, it uses a random number of samples, but we later convert it to fixed sample budgets with only a constant-factor increase in sample complexity.     
It first uses independent samples from $P$ and $Q$ to \emph{flatten} all three distributions on a common expanded domain, by splitting each element into several copies and distributing its probability mass uniformly among them (see \cref{sec:flatten}).  We let $\tilde p,\tilde q,\tilde d$ denote the
flattened distributions. We will see that flattening preserves total variation distances while controlling their
largest probabilities. 

The main quantity underlying our test is
$\|\tilde p-\tilde d\|_2^2-\|\tilde q-\tilde d\|_2^2$. Its sign identifies $D$: if $D=Q$, the quantity equals $\|\tilde p-\tilde q\|_2^2>0$, whereas if $D=P$, it equals $-\|\tilde p-\tilde q\|_2^2<0$. Our algorithm constructs an unbiased estimator of this quantity. Expanding the squares, the $\|\tilde d\|_2^2$ terms cancel, leaving
$$
\|\tilde p\|_2^2-\|\tilde q\|_2^2-2\sum_j\tilde p_j\tilde d_j+2\sum_j\tilde q_j\tilde d_j.
$$
The four terms comprising $T_{\LF}$ in \Cref{alg:lfht-case1} are unbiased estimators of $\|\tilde p\|_2^2$, $\|\tilde q\|_2^2$, $\sum_j\tilde p_j\tilde d_j$, and $\sum_j\tilde q_j\tilde d_j$, respectively. Thus, $T_{\LF}$ is an unbiased estimator of the desired quantity, and its sign determines the output of the test.
\begin{algorithm2e}[h!]
    \caption{Asymmetric LFHT in the regime $n \geq s_P\geq s_Q\gtrsim s_D$}\label{alg:lfht-case1}
    \DontPrintSemicolon
    \LinesNumbered
    \KwIn{Sample access to $P,Q,D$ with $\tv(P,Q)\geq\varepsilon$ and $D\in\{P,Q\}$, budgets $s_P,s_Q,s_D$.}
    \KwOut{$D=P$ or $D=Q$.}
    Draw $\lceil s_P/2\rceil$ samples from $P$ and $\lceil s_Q/2\rceil$ from $Q$ for flattening.\\
    Construct the common split domain $[K]$ as in \cref{sec:flatten}.\\
    Independently draw testing batches of sizes $\Poi(s_P)$, $\Poi(s_Q)$, and $\Poi(s_D)$ from $P,Q,D$, and map their samples to $[K]$ as in \cref{sec:flatten}. The mapped batches are samples from $\tilde p,\tilde q,\tilde d$.\\
    Let $N^P_j,N^Q_j,N^D_j$ be the numbers of mapped testing samples taking value $j \in [K]$.\\
    $T_{\LF}\gets\sum_{j=1}^{K}\!\Big[\frac{N^P_j(N^P_j-1)}{s_P^2}-\frac{N^Q_j(N^Q_j-1)}{s_Q^2}-2\frac{N^P_jN^D_j}{s_Ps_D}+2\frac{N^Q_jN^D_j}{s_Qs_D}\Big]$\\
    \If{$T_{\LF}<0$}{\Return $D=P$}
    \Else{\Return $D=Q$}
\end{algorithm2e}

\subsubsection{Flattening}\label{sec:flatten}
We use the sample-based flattening of~\citet[Definition~2.4]{DK16}
(also used by~\citet{GP24}). Let $\mathcal S$ be the multiset of the
$\lceil s_P/2\rceil$ and $\lceil s_Q/2\rceil$ flattening samples.
For each $i\in[n]$, let $B_i$ be one plus its number of occurrences in
$\mathcal S$, and split $i$ into $B_i$ elements. Since $s_Q\leq s_P\leq n$, the expanded domain has size
\begin{align*}
    K=\sum_{i=1}^n B_i=n+\lceil s_P/2\rceil+\lceil s_Q/2\rceil\leq2n+2.
\end{align*}
Identify $[K]$ with $\{(i,a):i\in[n],\ a\in[B_i]\}$. For any
distribution $r$ on $[n]$, its flattened version is
$\tilde r(i,a)=\frac{r(i)}{B_i}.$ 
To sample from $\tilde r$, take a sample $i$ from $r$ and independently
choose $a$ uniformly from $[B_i]$. We apply this same transformation to
all three testing batches, so $\tilde d\in\{\tilde p,\tilde q\}$.
By~\citet[Fact~2.5(ii)]{DK16}, splitting preserves total variation, so $\tv(\tilde p,\tilde q)=\tv(P,Q)\geq\varepsilon$. Moreover, we have the following $\ell_{\infty}$-norm guarantees.

\begin{restatable}{lemma}{lemflat}\label{lem:flat}
    With probability at least $1-1/n$ over $\mathcal S$,
    \begin{align*}
        \|\tilde p\|_\infty\leq\frac{32\log n}{s_P},\qquad
        \|\tilde q\|_\infty\leq\frac{32\log n}{s_Q}.
    \end{align*}
\end{restatable}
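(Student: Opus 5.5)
Since $\tilde p(i,a)=P(i)/B_i$, the plan is to show that every symbol $i$ with $P(i)$ large enough is hit many times by the $\lceil s_P/2\rceil$ flattening samples drawn from $P$. Then $B_i\geq 1+(\text{count of }i\text{ among the }P\text{-flattening samples})$ is proportional to $s_PP(i)$. The $Q$ bound is proved identically, using only the $\lceil s_Q/2\rceil$ samples from $Q$. Occurrences from the other batch only increase $B_i$, so they can be ignored.

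Fix $i$ and let $m=\lceil s_P/2\rceil$ and $X_i\sim\mathrm{Bin}(m,P(i))$ be its count among the $P$-flattening samples, so $B_i\geq 1+X_i$. We split into two cases according to the threshold $P(i)\leq 32\log n/s_P$.

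\emph{Small mass.} If $P(i)\leq 32\log n/s_P$, then $\tilde p(i,a)\leq P(i)\leq 32\log n/s_P$ deterministically, since $B_i\geq1$.

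\emph{Large mass.} If $P(i)>32\log n/s_P$, then $\mu:=\E X_i=mP(i)\geq s_PP(i)/2>16\log n$. The multiplicative Chernoff bound gives
\[
\Pr\bigl[X_i<\mu/2\bigr]\leq e^{-\mu/8}\leq e^{-2\log n}=n^{-2}.
\]
On the complement, $B_i\geq 1+\mu/2\geq s_PP(i)/4$. Hence $\tilde p(i,a)=P(i)/B_i\leq 4/s_P$, which is at most $32\log n/s_P$ for $n\geq 10$.

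A union bound over the $n$ symbols gives failure probability at most $n\cdot n^{-2}=1/n$ for the $P$ statement, and likewise $1/n$ for the $Q$ statement. This totals $2/n$, not the claimed $1/n$.

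To fix this, I would tighten the Chernoff step using the slack in the threshold rather than the factor $1/2$. Requiring $X_i\geq \mu/8$, say, gives a bound $e^{-c\mu}$ with $c=(7/8)^2/2>3/8$, and $\mu>16\log n$ then yields failure probability below $n^{-6}$. Since $B_i\geq 1+\mu/8\geq s_PP(i)/16$, we still get $\tilde p(i,a)\leq 16/s_P\leq 32\log n/s_P$ for $n\geq10$. The union bound over $2n$ events then totals at most $2n\cdot n^{-6}\leq 1/n$.

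The argument is routine. The only care needed is in the constant bookkeeping, namely choosing the Chernoff deviation so that both the $\ell_\infty$ constant $32$ and the overall $1/n$ failure probability hold simultaneously. Rounding from the ceiling in $m$ only helps.
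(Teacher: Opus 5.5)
Your proposal is correct and follows essentially the same argument as the paper: a threshold split at $P(i)=32\log n/s_P$ with the small-mass case trivial, a Chernoff lower-tail bound at a constant fraction of the mean in the large-mass case, and a union bound over the $2n$ events, with your fix correctly bringing the total failure probability down to $1/n$. The only cosmetic difference is that you count occurrences from the $P$-batch alone (a binomial), whereas the paper uses the combined count $N(i)$ with threshold $s_Pp(i)/8$. The paper thereby gets $n^{-4}$ per event, where you get $n^{-6}$, and both constants fit comfortably within $32\log n/s_P$.
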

The proof is given in \cref{app:ub-proofs}.

\subsubsection{Expectation, Variance, and Correctness}\label{sec:var}
Fix any realization of the flattening samples $\mathcal S$ for which
the bounds in \cref{lem:flat} hold. This also determines
$\tilde p,\tilde q,\tilde d$. The testing batches are then
independent samples from these three distributions.
Let $\Delta^2=\|\tilde p-\tilde q\|_2^2$. The vector $\tilde p-\tilde q$ has size
$K\leq2n+2\leq3n$, so
Cauchy--Schwarz gives $\|\tilde p-\tilde q\|_1\leq\sqrt{K}\,\|\tilde p-\tilde q\|_2$.
Since $\|\tilde p-\tilde q\|_1=2\tv(\tilde p,\tilde q)\geq2\varepsilon$, it follows that
\begin{align}\label{eq:delta-lb}
    \Delta^2\geq\frac{\|\tilde p-\tilde q\|_1^2}{K}
    \geq\frac{4\varepsilon^2}{3n}.
\end{align}
The following lemma states the expectation and variance needed for
Chebyshev's inequality, and is proved in \cref{app:norm-bounds}.

\begin{restatable}{lemma}{lemnorms}\label{lem:norms}
    Suppose $s_P\geq s_Q\gtrsim s_D$.
    Conditional on any $\mathcal S$ satisfying \cref{lem:flat}, the statistic
    in \cref{alg:lfht-case1} satisfies
    \begin{align*}
        \E[T_{\LF}]&=
        \begin{cases}
            -\Delta^2 & \text{if }D=P,\\
            \Delta^2 & \text{if }D=Q,
        \end{cases}\\
        \Var[T_{\LF}]&\lesssim(\log n)^2
        \left[\frac1{s_Ps_Qs_D}+\frac{\Delta^2}{s_Qs_D}\right].
    \end{align*}
\end{restatable}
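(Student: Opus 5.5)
Conditional on $\mathcal S$, the Poissonized counts $N^P_j\sim\Poi(s_P\tilde p_j)$, $N^Q_j\sim\Poi(s_Q\tilde q_j)$, $N^D_j\sim\Poi(s_D\tilde d_j)$ are mutually independent across $j$ and across the three batches. This independence drives both parts of the proof.

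\emph{Expectation.} I use the Poisson identities $\E[N(N-1)]=\lambda^2$ and $\E[N^PN^D]=\E N^P\,\E N^D$. These show that the four summands are unbiased for $\|\tilde p\|_2^2$, $\|\tilde q\|_2^2$, $\sum_j\tilde p_j\tilde d_j$ and $\sum_j\tilde q_j\tilde d_j$. Substituting $\tilde d=\tilde p$ or $\tilde d=\tilde q$ then gives $\mp\Delta^2$ directly.

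\emph{Variance.} By independence across $j$, $\Var[T_{\LF}]=\sum_j\Var[T_j]$. I bound $\Var[T_j]\le 4\sum_{\ell}\Var[X_{j,\ell}]$ over the four summands $X_{j,\ell}$, so each summand can be handled separately. The needed Poisson facts are
\[
\Var[N(N-1)]=4\lambda^3+2\lambda^2,
\qquad
\Var[N^PN^D]=\lambda_P\lambda_D(\lambda_P+\lambda_D+1).
\]
Summing over $j$ gives, for instance,
\[
\sum_j\Var\Big[\tfrac{N^P_j(N^P_j-1)}{s_P^2}\Big]\lesssim \frac{\|\tilde p\|_3^3}{s_P}+\frac{\|\tilde p\|_2^2}{s_P^2},
\qquad
\sum_j\Var\Big[\tfrac{N^P_jN^D_j}{s_Ps_D}\Big]\lesssim \frac{\sum_j\tilde p_j^2\tilde d_j}{s_D}+\frac{\sum_j\tilde p_j\tilde d_j^2}{s_P}+\frac{\sum_j\tilde p_j\tilde d_j}{s_Ps_D},
\]
with analogous expressions for $Q$. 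This crude bound loses the cancellation between terms, and that loss is the main obstacle: a $P$-only term such as $\|\tilde p\|_3^3/s_P$ does not carry the small factor $1/(s_Qs_D)$ needed in the lemma.

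\emph{Centering.} To recover the cancellation I will center each count, writing $N^P_j=s_P\tilde p_j+\xi^P_j$ and similarly for $Q$ and $D$. Then
\[
T_j-\E T_j=\text{(linear terms)}+\text{(quadratic terms)},
\]
where the linear part is
\[
\frac{2\tilde p_j-2\tilde d_j}{s_P}\xi^P_j-\frac{2\tilde q_j-2\tilde d_j}{s_Q}\xi^Q_j+\frac{2(\tilde q_j-\tilde p_j)}{s_D}\xi^D_j.
\]
The quadratic part consists of $(\xi^P_j)^2-\xi^P_j$ type terms and products $\xi^P_j\xi^D_j$, each with its natural scaling.

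The linear variances then yield terms like
\[
\sum_j\frac{(\tilde p_j-\tilde d_j)^2\tilde p_j}{s_P}+\frac{(\tilde p_j-\tilde q_j)^2\tilde d_j}{s_D}.
\]
Each vanishes or is bounded by $\Delta^2$ times an $\ell_\infty$ norm. By \cref{lem:flat}, $\|\tilde p\|_\infty\lesssim\log n/s_P$ and $\|\tilde q\|_\infty\lesssim\log n/s_Q$, and $\|\tilde d\|_\infty$ is at most the larger of the two, $\lesssim\log n/s_Q$. Since $s_P\ge s_Q\gtrsim s_D$, every linear term is $\lesssim\log n\,\Delta^2/(s_Qs_D)$.

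The quadratic terms give variances such as $\|\tilde p\|_2^2/s_P^2$, $\|\tilde q\|_2^2/s_Q^2$, $\sum_j\tilde p_j\tilde d_j/(s_Ps_D)$ and $\sum_j\tilde q_j\tilde d_j/(s_Qs_D)$. I bound these using $\|\tilde r\|_2^2\le\|\tilde r\|_\infty$. For the $Q$-terms I first apply the inequality from the proof ideas, $\|\tilde q\|_2^2\le 2\|\tilde p\|_2^2+2\Delta^2$, and then $\|\tilde p\|_2^2\lesssim\log n/s_P$. For example,
\[
\frac{\sum_j\tilde q_j\tilde d_j}{s_Qs_D}\le\frac{\|\tilde q\|_2\|\tilde d\|_2}{s_Qs_D}\lesssim\frac{\log n/s_P+\Delta^2}{s_Qs_D},
\]
and $\|\tilde q\|_2^2/s_Q^2\lesssim(\log n/s_P+\Delta^2)/s_Q^2$, which is at most the claimed bound since $s_Q\gtrsim s_D$.

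In short, the crucial point is to track each term so that every $Q$-norm is converted through this inequality into a $P$-norm plus $\Delta^2$. Collecting all terms gives the stated $(\log n)^2[1/(s_Ps_Qs_D)+\Delta^2/(s_Qs_D)]$, with room to spare in the logarithmic factors.
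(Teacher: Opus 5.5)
Your proof is correct, but it takes a different route from the paper's, and your stated reason for needing that route is mistaken.

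The paper does not center. It applies $\Var[X_1+\cdots+X_4]\le4\sum_i\Var[X_i]$ to the four raw sums and gets the bound of \cref{prop:var} from the exact Poisson moments. It then bounds every term, cubic ones included, by an $\ell_\infty$ norm times an $\ell_2^2$ norm, together with $\|\tilde q\|_2^2\le2\|\tilde p\|_2^2+2\Delta^2$.

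The term you flag is harmless. Since $s_P^2\ge s_Q^2\gtrsim s_Qs_D$,
$\|\tilde p\|_3^3/s_P\le\|\tilde p\|_\infty\|\tilde p\|_2^2/s_P\lesssim(\log n)^2/s_P^3\lesssim(\log n)^2/(s_Ps_Qs_D)$.
Likewise, for example, $\sum_i\tilde q_i^2\tilde d_i/s_D\le\|\tilde d\|_\infty\|\tilde q\|_2^2/s_D\lesssim\frac{\log n}{s_Qs_D}\bigl(\frac{\log n}{s_P}+\Delta^2\bigr)$. This is exactly where the $(\log n)^2$ in the statement comes from: two $\ell_\infty$ bounds multiply in each cubic term. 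So the uncentered bound already suffices; it only loses one logarithmic factor.

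Your centered expansion sends all order-$\lambda^3$ contributions through the linear part. There they carry the weights $(\tilde p_j-\tilde d_j)^2$, $(\tilde q_j-\tilde d_j)^2$, $(\tilde p_j-\tilde q_j)^2$, so each becomes $\Delta^2$ times a single $\ell_\infty$ norm. Combined with $\Var[A+B]\le2\Var[A]+2\Var[B]$ (the linear and quadratic parts are in fact uncorrelated), this gives the sharper bound $\Var[T_{\LF}]\lesssim\log n\,[1/(s_Ps_Qs_D)+\Delta^2/(s_Qs_D)]$. It also shows explicitly that the cubic part of the variance is proportional to the signal.

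The paper's route is more mechanical, since it reuses the raw identities of \cref{lem:poisson-moments}. The extra logarithm it loses is absorbed anyway by the $C\log n$ inflation of the budgets.
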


By \cref{lem:norms}, $T_{\LF}$ has mean $-\Delta^2$ if $D=P$ and $\Delta^2$ if $D=Q$, so it suffices to show that its variance is small compared with $\Delta^4$.  To do so, we write the  condition on $s_D(s_Q+s_D)$ in~\eqref{eq:case1} as
\begin{align*}
    s_Qs_D\geq C(\log n)^2\frac{n\alpha}{\varepsilon^4}
    \gtrsim C(\log n)^2\frac{n^2}{s_P\varepsilon^4},
\end{align*}
where $C$ is a sufficiently large constant. Substituting into
\cref{lem:norms}, we have
\begin{align*}
    \Var[T_{\LF}]
    &\lesssim\frac{1}{C}\left[\frac{\varepsilon^4}{n^2}+\frac{s_P\varepsilon^4\Delta^2}{n^2}\right]\\
    &\leq\frac{1}{C}\left[\frac{\varepsilon^4}{n^2}+\frac{\varepsilon^2\Delta^2}{n}\right]
    &&\text{(using $s_P\leq n$ and $\varepsilon\leq1$)}\\
    &\lesssim\frac{\Delta^4}{C}
    &&\text{(using~\eqref{eq:delta-lb})}.
\end{align*}
We choose $C$ such that this is at most
$\Delta^4/6$. Under either hypothesis, an incorrect answer requires a
deviation of at least $\Delta^2$ from its mean, and hence
\begin{align*}
    \Pr[\text{incorrect sign}\mid\mathcal S]
    \leq\frac{\Var[T_{\LF}]}{\Delta^4}\leq\frac16.
\end{align*}
We proceed to summarize the other possible sources of error.  
The event of \cref{lem:flat} fails with probability at most $1/n\leq1/10$.  To implement the Poissonized testing batches using fixed sample budgets, draw $2s_P$, $2s_Q$, and $2s_D$ samples from $P$, $Q$, and $D$, respectively, and use only the first $\operatorname{Poi}(s_R)$ samples from each source $R\in\{P,Q,D\}$. If any Poisson draw exceeds the corresponding available budget, an arbitrary output is produced and this is considered an error.  Since $\Pr[\Poi(s)>2s]\leq e^{-s/3}$ and every testing budget is $\gtrsim s_D\gtrsim\frac{1}{\varepsilon^2}$, this happens with probability at most $\frac{1}{30}$. The total error is therefore at most
$1/6+1/10+1/30=3/10<1/3$, completing the upper bound.

\section{Lower Bound}\label{sec:lb}

We prove that the three conditions in \cref{thm:char} are necessary up to
constant factors for every $\varepsilon\in(0,1/8]$.
Recall that $\alpha = \max\!\left(1,\frac{n}{\max(s_P,s_D)}\right)$.
When $s_Q=0$, the problem is closeness testing between $P$ and $D$, and the required lower bound follows from~\citet{BV15}. Henceforth, we assume that $s_Q\geq1$ (and thus $s_P\geq1$ under our convention $s_P \geq s_Q$). We also assume that $s_D\geq1$, since a tester with no $D$-samples has the same output distribution under both hypotheses, and hence errs with probability at least $\frac12$ under one of them.

The first two conditions follow by a reduction from the symmetric case,
where the lower bounds of~\citet[Propositions~13--15]{GP24} hold up to constant factors. Suppose an asymmetric tester succeeds
with sample sizes $(s_P,s_Q,s_D)$, where $s_P\geq s_Q$. Given $s_P$ samples
from each of $P$ and $Q$, and $s_D$ samples from $D$, we can run this tester
using all the $P$- and $D$-samples and only the first $s_Q$ of the $Q$-samples.
This gives a symmetric tester with budgets $(s_P,s_P,s_D)$ and the same
success probability. Since $\alpha$ depends only on $s_P$ and $s_D$, it is
unchanged, so these lower bounds give
$s_D \gtrsim 1/\varepsilon^2$ and
$s_P \gtrsim \sqrt{n\alpha}/\varepsilon^2$.

We prove the third condition $s_D(s_Q + s_D) \gtrsim \frac{n\alpha}{\varepsilon^4}$ in three cases:
\begin{enumerate}
    \item \textbf{Case $s_P< s_D$.} In this case the second condition gives $s_D > s_P \gtrsim \frac{\sqrt{n\alpha}}{\varepsilon^2}$, which immediately yields $s_D(s_Q + s_D) \geq s_D^2 \gtrsim \frac{n\alpha}{\varepsilon^4}$.

    \item \textbf{Case $s_D \leq s_P$, $s_P > \frac{n}{100}$.} In this case $\alpha \leq 100$, so it suffices to show that $s_D(s_Q + s_D) \gtrsim \frac{n}{\varepsilon^4}$.

Give the tester $P$ explicitly and increase its $Q$-budget to $s_Q+s_D$.
Both changes make the problem easier. A successful asymmetric tester would
therefore solve one-distribution-known LFHT with $s_Q+s_D$ samples from $Q$
and $s_D$ from $D$. Raising the $Q$-budget to $s_Q+s_D$ makes it at least the $D$-budget, which~\citet[Remark~8]{KPS26} requires. The lower bound of
\citet[Lemma~7 and Remark~8]{KPS26}, using the construction in
\citet[Proposition~13]{GP24}, gives
$s_D(s_Q+s_D) \gtrsim \frac{n}{\varepsilon^4}$.\footnote{That bound assumes error probability $\frac{1}{10}$ rather than $\frac13$, which costs a constant factor in every budget by majority vote.}
\item 
\textbf{Case $s_D \leq s_P \leq \frac{n}{100}$.} Here $\alpha = \frac{n}{s_P}$. It suffices to show 
\begin{align}\label{inq:alfht-lb}
    s_Ps_D(s_Q+s_D)\gtrsim \frac{n^2}{\varepsilon^4},
\end{align}
which we do in the next section.
\end{enumerate}

\subsection{Proof for the Case \texorpdfstring{$s_D \leq s_P \leq \frac{n}{100}$}{sD <= sP <= n/100}}\label{sec:lb-gap}
\paragraph{Overview.}
To prove (\ref{inq:alfht-lb}), we construct two priors $\Lambda_{0}$ and $\Lambda_{1}$ over asymmetric LFHT instances $(P,Q,D)$, with $D=P$ under $\Lambda_{0}$ and $D=Q$ under $\Lambda_{1}$. With high probability, an instance drawn from either prior satisfies $\tv(P,Q)\geq\varepsilon$. We show that, when $s_{P}s_{D}(s_{Q}+s_{D})$ is sufficiently small, the sample distributions induced by $\Lambda_{0}$ and $\Lambda_{1}$ are close in total variation. On the other hand, any tester for $\RLF(\varepsilon)$ must distinguish $D=P$ from $D=Q$, and hence must distinguish the two induced sample distributions with high probability. The contradiction yields the desired lower bound.

We adapt the heavy/light construction of
\citet[Appendix~D.4]{GHP23}.
In that construction, perturbations are independent across light elements and cancel only in expectation. To simplify the calculations, we use the paired perturbations of~\citet{P08}. Within each light pair, the weight added to one element is removed from the other, so the weight vectors $w_P,w_Q,w_D$ defined below have the same total weight in every realization.
The domain is divided into independent pairs, each of which is either \emph{heavy} or \emph{light}. On a heavy pair, $P$, $Q$, and $D$ assign the same masses to the two elements, and hence samples from such a pair contain no information about whether $D=P$ or $D=Q$. Their role is instead to generate many uninformative within-pair collisions and thereby hide the informative collisions occurring on light pairs. On a light pair, $P$ is uniform, whereas $Q$ is tilted according to a hidden random sign $\eta\in\{\pm1\}$, assigning masses proportional to $(1+8\varepsilon\eta,1-8\varepsilon\eta)$.  Since $\eta$ is chosen uniformly at random, after averaging over $\eta$ the distribution of a single sample from a light pair is the same under the two hypotheses. Thus, useful information is revealed only when we get at least two samples from the same light pair. The main informative signals are a $D$--$D$ collision, which reveals whether $D$ itself is tilted, and a $D$--$Q$ collision, which reveals whether $D$ and $Q$ share the same hidden tilt. A $D$--$P$ collision is uninformative because $P$ is uniform on every light pair. At a high level, the factor $s_{D}(s_{Q}+s_{D})$ in the condition $s_{D}(s_{Q}+s_{D})\gtrsim n^{2}/(s_{P}\varepsilon^{4})$ captures the total  $D$--$D$ and $D$--$Q$ collision signal, while the additional factor $s_{P}$ reflects the masking effect created by the heavy pairs.

\paragraph{Construction.}
Let
\begin{align*}
    t_{P}:=100s_{P},\qquad
    t_{Q}:=100s_{Q},\qquad
    t_{D}:=100s_{D},
\end{align*}
and observe that $100\leq \min(t_{Q},t_{D}) \leq \max(t_{Q},t_{D})  \leq t_{P}\leq n.$

We construct two priors $\Lambda_{0}$ and $\Lambda_{1}$ on triples of probability distributions $(P,Q,D)$. The construction first generates nonnegative weight vectors $w_P,w_Q,w_D$ and then normalizes them.

 Let $N:=\lfloor n/2\rfloor$. Split $[2N]\subseteq[n]$ into the pairs $(2j-1,2j)$, $j\in[N]$. Each pair is independently declared \emph{heavy} with probability    $\gamma:=\frac{t_{P}}{4n}$
and \emph{light} otherwise. For every light pair, independently draw   $\eta_{j}\sim\operatorname{Unif}\{\pm1\}$.
We assign the weights on each pair as follows:
\begin{center}
\begin{tabular}{lcc}
    \toprule
    Pair & $(w_P(2j-1),w_P(2j))$ & $(w_Q(2j-1),w_Q(2j))$ \\
    \midrule
    Heavy & $\left(\frac{1}{t_P},\frac{1}{t_P}\right)$ & $\left(\frac{1}{t_P},\frac{1}{t_P}\right)$ \\
    \addlinespace
    Light & $\left(\frac{1}{2n},\frac{1}{2n}\right)$ & $\left(\frac{1+8\varepsilon\eta_j}{2n},\frac{1-8\varepsilon\eta_j}{2n}\right)$ \\
    \bottomrule
\end{tabular}
\end{center}
Since $\varepsilon\leq\frac{1}{8}$, these weights are nonnegative. Under $\Lambda_{0}$, set $w_D:=w_P$, whereas under $\Lambda_{1}$, set $w_D:=w_Q$. The total weight of a pair is the same under $w_P,w_Q,w_D$. Consequently, if $h$ denotes the number of heavy pairs, then all three weight vectors have the same total weight
   $ M:=\frac{2h}{t_{P}}+\frac{N-h}{n}$.
Normalizing these weight vectors gives us the distributions
\begin{align*}
    P:=\frac{w_P}{M},\qquad
    Q:=\frac{w_Q}{M},\qquad
    D:=\frac{w_D}{M},
\end{align*}
and we have $D=P$ under $\Lambda_{0}$ and $D=Q$ under $\Lambda_{1}$.

We define $E:=\big\{h\leq\frac{Nt_{P}}{2n}\big\}$, and show in the following lemma that this event has high probability and ensures valid instances.

\begin{restatable}{lemma}{lemvalidity}\label{lem:validity}
The event $E$ satisfies $\Pr[E^{c}]\leq\frac{1}{25}.$
Moreover, on $E$,
$M\geq\frac{1}{5}
    \;\text{and}\;
    \tv(P,Q)\geq\varepsilon.$
\end{restatable}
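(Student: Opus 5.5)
We prove the three claims in order: the tail bound on $h$, the lower bound on $M$, and the separation $\tv(P,Q)\geq\varepsilon$.

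\textbf{Tail bound.} The count $h$ is $\mathrm{Bin}(N,\gamma)$ with $\gamma=\frac{t_P}{4n}$, so $\E[h]=\frac{Nt_P}{4n}$. The event $E^c$ is $\{h>2\E[h]\}$. We apply the multiplicative Chernoff bound $\Pr[h\geq 2\mu]\leq e^{-\mu/3}$ with $\mu=\E[h]$. Since $n\geq10$, we have $N\geq n/2-1/2\geq 0.45n$, and since $t_P\geq100$, we get $\mu\geq 0.45\cdot 100/4\geq 11$. Hence $e^{-\mu/3}\leq e^{-3.6}<1/25$. If the constants turn out tight, Chebyshev also works: $\Pr[|h-\mu|\geq\mu]\leq 1/\mu\leq 1/11$, which is not below $1/25$, so we rely on Chernoff. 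The estimate $N\geq 0.45n$ with the binomial variance $\mu(1-\gamma)$ then gives the claim.

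\textbf{Lower bound on $M$.} On $E$ we have $h\leq \frac{Nt_P}{2n}$. Since $t_P\leq n$, this gives $h\leq N/2$, so $N-h\geq N/2$. Therefore
\[
M\geq \frac{N-h}{n}\geq \frac{N}{2n}\geq \frac{0.45}{2}>\frac15 .
\]
Only the light-pair contribution is needed here; the heavy term $\frac{2h}{t_P}$ is nonnegative and can be dropped.

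\textbf{Separation.} We have $\tv(P,Q)=\frac{1}{2M}\|w_P-w_Q\|_1$. Heavy pairs contribute zero. Each light pair contributes $2\cdot\frac{8\varepsilon}{2n}=\frac{8\varepsilon}{n}$. Hence
\[
\tv(P,Q)=\frac{(N-h)\,4\varepsilon}{nM}.
\]
It therefore suffices to show $4(N-h)\geq nM=\frac{2hn}{t_P}+N-h$, that is, $3(N-h)\geq \frac{2hn}{t_P}$. On $E$, $\frac{2hn}{t_P}\leq N$ and $N-h\geq N/2$, so
\[
3(N-h)\geq \tfrac{3N}{2}\geq N\geq \frac{2hn}{t_P}.
\]
This closes the argument, with some slack in the constants.

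The main obstacle is only the constant bookkeeping in the tail bound, in particular handling the floor in $N$ and obtaining the $1/25$ threshold. Using $N\geq (n-1)/2$ together with $t_P\geq100$ gives a mean large enough for Chernoff to apply.
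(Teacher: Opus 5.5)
Your proof is correct and follows essentially the same route as the paper. It uses the same multiplicative Chernoff bound $e^{-\E[h]/3}$, with $\E[h]\geq 11.25$ from $n\geq10$ where the paper gets $\E[h]\geq 99/8$ from $n\geq t_P\geq100$, and the same comparison of $nM=\frac{2hn}{t_P}+N-h$ against $N-h$ on $E$ for the separation. The only real differences are that you bound $M$ on $E$ via $N-h\geq N/2$, whereas the paper uses $\frac{2h}{t_P}\geq\frac{h}{n}$ to get $M\geq N/n\geq\frac15$ unconditionally, and that your Chebyshev aside is self-contradictory and should be deleted, since Chernoff alone suffices.
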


In addition, we provide the following lemma stating that a successful tester can be transferred from fixed samples to Poissonized samples.

\begin{restatable}{lemma}{lempoissonized}\label{lem:poissonized-tester}
 Suppose there exists an algorithm $\mathcal{A}$ with sample sizes $(s_{P},s_{Q},s_{D})$ satisfying $s_Q\geq1$ and $1\leq s_D \leq s_P \leq \frac{n}{100}$, and success probability at least $\frac{2}{3}$. Then there exists an algorithm which, when an instance $(P,Q,D)$ is generated according to either prior $\Lambda_{\beta}$, $\beta\in\{0,1\}$, and is given independent $\Poi(Mt_{P})$, $\Poi(Mt_{Q})$, and $\Poi(Mt_{D})$ samples from $P$, $Q$, and $D$, respectively, determines whether $D=P$ or $D=Q$ with error probability less than $\frac{2}{5}$.
\end{restatable}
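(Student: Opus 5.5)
We build the new algorithm $\mathcal{A}'$ by simulation. It receives Poissonized batches of sizes $N_P\sim\Poi(Mt_P)$, $N_Q\sim\Poi(Mt_Q)$, $N_D\sim\Poi(Mt_D)$. If $N_P\geq s_P$, $N_Q\geq s_Q$ and $N_D\geq s_D$, it feeds the first $s_P,s_Q,s_D$ samples of each batch to $\mathcal{A}$ and returns $\mathcal{A}$'s answer. Otherwise it outputs an arbitrary answer, and we count this as an error. Conditional on the instance and on the counts, the retained samples are i.i.d.\ from $P,Q,D$. So whenever the instance is valid, meaning $\tv(P,Q)\geq\varepsilon$ with $D\in\{P,Q\}$, and the counts suffice, $\mathcal{A}$ errs with probability at most $1/3$.

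The error of $\mathcal{A}'$ under $\Lambda_\beta$ is bounded by a union bound over three events:
\begin{itemize}
\item the event $E^c$, which by \cref{lem:validity} has probability at most $1/25$;
\item on $E$, some Poisson count falls below the corresponding budget;
\item on $E$, with sufficient counts, $\mathcal{A}$ errs, which has probability at most $1/3$ because \cref{lem:validity} gives $\tv(P,Q)\geq\varepsilon$ on $E$.
\end{itemize}

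For the count event, on $E$ we have $M\geq 1/5$. Hence each Poisson mean satisfies $Mt_R\geq 20 s_R$ for $R\in\{P,Q,D\}$, and this mean is at least $20$ since $s_R\geq1$. A Chernoff lower-tail bound gives
\[
\Pr[\Poi(\lambda)<\lambda/20]\leq e^{-c\lambda}
\]
for some constant $c$. Explicitly, with $\lambda\geq 20$ the bound $\Pr[\Poi(\lambda)\leq \lambda/20]\leq \exp(-\lambda(1-\tfrac1{20}+\tfrac1{20}\ln\tfrac1{20}))\approx e^{-0.8\lambda}\leq e^{-16}$. Summed over the three sources this is negligible, say below $10^{-6}$.

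Adding the three contributions gives total error at most $1/3+1/25+10^{-6}<2/5$, since $1/3+1/25=28/75\approx0.373$. The only step needing care is the count event. Because $M$ depends on the random number of heavy pairs $h$, the Poisson means are random, so we condition on the instance first and use $M\geq1/5$ on $E$. The Poisson tail itself is routine. Finally, since $\mathcal{A}$ has its guarantee for every valid instance, it holds pointwise for each realization in $E$ and needs no averaging argument.
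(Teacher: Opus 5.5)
Your proposal is correct and follows essentially the same argument as the paper: keep the first $s_R$ samples of each Poissonized batch, output an arbitrary answer if any batch is too short, use $Mt_R\geq 20s_R$ with a Poisson lower-tail bound to make that event negligible, and union-bound with $\Pr[E^c]\leq 1/25$ and the $1/3$ error of $\mathcal{A}$ on valid instances. The differences are cosmetic. The paper notes that $M\geq N/n\geq 1/5$ holds for every instance, not only on $E$. It also uses the cruder tail bound $e^{-\delta^2\lambda/2}$ with $\delta=19/20$, which gives $3e^{-9}<10^{-3}$ in place of your $10^{-6}$.
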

The proofs of \cref{lem:validity} and \cref{lem:poissonized-tester} are deferred to \cref{app:lb-proofs}.

Let $X_i$, $Y_i$, and $Z_i$ denote the number of times element $i\in[n]$ is observed among the $\Poi(Mt_{P})$, $\Poi(Mt_{Q})$, and $\Poi(Mt_{D})$ samples drawn from $P$, $Q$, and $D$, respectively. Without loss of generality, any testing algorithm may be assumed to depend on the samples only through the counts $(X_i,Y_i,Z_i)_{i\in[n]}$, since, conditional on these counts, the ordering of the samples is independent of the underlying distributions. For each $\beta\in\{0,1\}$, let $\mathbb{P}_{\beta}$ denote the distribution of $(X_i,Y_i,Z_i)_{i\in[n]}$ when the instance is generated according to the prior $\Lambda_{\beta}$.

By the well-known Poisson thinning property, for any  generated instance and every $i\in[n]$,
\begin{align*}
X_i&\sim\Poi(Mt_{P}P(i))
=\Poi(t_{P}w_P(i)),\\
Y_i&\sim\Poi(Mt_{Q}Q(i))
=\Poi(t_{Q}w_Q(i)),\\
Z_i&\sim\Poi(Mt_{D}D(i))
=\Poi(t_{D}w_D(i)).
\end{align*}
Since the heavy/light choices and the signs are generated independently across the $N$ pairs, the sample vectors associated with distinct pairs are independent under both priors $\Lambda_0$ and $\Lambda_1$.

For each $j\in[N]$, let $V_j :=\bigl(X_{2j-1},X_{2j},Y_{2j-1},Y_{2j},Z_{2j-1},Z_{2j}\bigr)$ be the counts associated with the pair $(2j-1,2j)$.  As noted above, the vectors $V_1,\ldots,V_N$ are independent and identically distributed.  We thus suppress $j$ in the notation and use $\mathbb{P}_{\beta}^{\mathrm{pair}}$, for $\beta\in\{0,1\}$, for the distribution of $V_j$ under $\Lambda_{\beta}$.  Hence $\mathbb{P}_{\beta} = \bigl(\mathbb{P}_{\beta}^{\mathrm{pair}}\bigr)^{\otimes N}$. The divergence associated with a single pair is bounded via the following lemma, proved in \cref{sec:pairchisquare}.
\begin{restatable}{lemma}{lempairchi}\label{lem:pair-chi-square}
$
    \chi^{2}\!\left(
    \mathbb{P}_{0}^{\mathrm{pair}}
    \middle\|
    \mathbb{P}_{1}^{\mathrm{pair}}
    \right)
    \lesssim
    \frac{
    \varepsilon^{4}
    t_{P}t_{D}(t_{Q}+t_{D})
    }{n^{3}}.
$
\end{restatable}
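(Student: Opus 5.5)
The plan is to write both pair distributions as the same mixture with a common heavy component, and then bound the $\chi^2$ divergence by dividing by the heavy part alone. The heavy part is identical under both priors. Let $H$ be the law of $V$ on a heavy pair, and $L_0,L_1$ its law on a light pair under $\Lambda_0,\Lambda_1$ (with $\eta$ averaged out). Then $\mathbb{P}^{\mathrm{pair}}_\beta=\gamma H+(1-\gamma)L_\beta$. Using $\mathbb{P}^{\mathrm{pair}}_1\geq\gamma H$ pointwise gives
\begin{align*}
\chi^2\!\left(\mathbb{P}^{\mathrm{pair}}_0\middle\|\mathbb{P}^{\mathrm{pair}}_1\right)
=\sum_v\frac{(1-\gamma)^2(L_0(v)-L_1(v))^2}{\mathbb{P}^{\mathrm{pair}}_1(v)}
\leq\frac{1}{\gamma}\sum_v\frac{(L_0(v)-L_1(v))^2}{H(v)} .
\end{align*}
The factor $1/\gamma=4n/t_P$ is how the heavy pairs enter. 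Under $H$ all six coordinates are independent Poisson variables, with means $1,1$ for $X$, $t_Q/t_P$ (twice) for $Y$, and $t_D/t_P$ (twice) for $Z$. Under $L_\beta$ with a fixed $\eta$, the coordinates are also independent Poisson. The $X$-means are $\lambda_P=t_P/(2n)$. The $Y$-means are $\lambda_Q(1\pm8\varepsilon\eta)$ with $\lambda_Q=t_Q/(2n)$. The $Z$-means are $\lambda_D$ under $\beta=0$ and $\lambda_D(1\pm8\varepsilon\eta)$ under $\beta=1$, with $\lambda_D=t_D/(2n)$.

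Next I expand the square and average over two independent copies $\eta,\eta'$ of the sign. Then $\sum_v (L_0-L_1)^2/H$ becomes $\E_{\eta,\eta'}$ of a signed combination of four terms, namely $(\beta,\beta')\in\{0,1\}^2$ with signs $+,-,-,+$. Each term is a product over the six coordinates of sums of the form $\sum_m \poi_a(m)\poi_b(m)/\poi_c(m)$, and each such sum equals $\exp(ab/c-a-b+c)$. The $X$ coordinates give the same constant in all four terms, and it is $O(1)$ because $\lambda_P\leq1/2$. The linear terms $-a-b+c$ do not depend on $\eta$, since the pair masses sum to the same value whether or not the tilt is present. So they also factor out as a common constant, which is $O(1)$ because $t_Q,t_D\leq t_P$.

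The only $\eta$-dependence comes from the cross terms $ab/c$. For the $Y$ coordinates, $\sum_\pm \lambda_Q^2(1\pm8\varepsilon\eta)(1\pm8\varepsilon\eta')\,t_P/t_Q$ equals a constant plus $b\,\xi$. Here $\xi=\eta\eta'$ and $b\asymp\varepsilon^2 t_Pt_Q/n^2$. The $Z$ coordinates contribute a $\xi$-term $a\,\xi$, with $a\asymp\varepsilon^2t_Pt_D/n^2$, only in the $(1,1)$ term. In the mixed terms $(0,1)$ and $(1,0)$, only one factor is tilted, so the tilt cancels in the pair sum. Modulo the common $O(1)$ constants, the sum is
\begin{align*}
\E_\xi\!\left[e^{b\xi}\bigl(e^{a\xi}-1\bigr)\right]=\cosh(a+b)-\cosh(b)\lesssim a(a+2b)
\end{align*}
whenever $a+b\lesssim1$. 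This holds since $t_Q,t_D\leq t_P\leq n$ and $\varepsilon\leq1/8$. Hence the sum is $\lesssim\varepsilon^4t_P^2t_D(t_D+t_Q)/n^4$. Multiplying by $1/\gamma=4n/t_P$ gives the claimed $\varepsilon^4t_Pt_D(t_Q+t_D)/n^3$.

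I expect the main obstacle to be the bookkeeping in the four-term expansion. One must verify that all zeroth-order and $\xi$-free contributions cancel exactly, in particular that the mixed terms carry no $\xi$-dependence. One must also check that the common prefactors, from $X$, the linear exponents, and the $\eta$-independent parts of $ab/c$, are uniformly $O(1)$. The rest is the elementary inequality $\cosh(a+b)-\cosh b\leq\tfrac12 a(a+2b)\cosh(a+b)$.
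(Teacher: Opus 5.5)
Your proposal is correct. After the shared first step (lower-bounding $\mathbb{P}_1^{\mathrm{pair}}$ by its heavy component to get the factor $1/\gamma=4n/t_P$), it takes a genuinely different route from the paper.

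The paper works pointwise. It writes the light-pair difference as $L(x,y,z)B(y,z)$, with $L$ the untilted product law. \cref{lem:perturbation} then shows two things: $B$ vanishes unless $|z|\geq1$ and $|y|+|z|\geq2$, and $|B|\lesssim\varepsilon^2 2^{|y|+|z|}$ by counting binomial coefficients. The resulting $4^{|y|+|z|}$-weighted series is summed with \cref{lem:two-coordinate-weights} for $m\in\{0,1,2\}$.

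You instead expand the square over two independent sign copies $\eta,\eta'$ and use $\sum_m\poi_a(m)\poi_b(m)/\poi_c(m)=e^{ab/c-a-b+c}$. This evaluates the light-pair sum exactly as an $O(1)$ constant times $\cosh(a+b)-\cosh b$. I checked the bookkeeping:
\begin{itemize}
\item the linear exponents are sign-free because the tilt preserves each pair's total mass;
\item the mixed $(0,1)$ and $(1,0)$ terms are $\xi$-free because a single tilt cancels between the two elements;
\item the $Y$-factor $e^{b\xi}$ is common to all four terms;
\item $a,b\leq32\varepsilon^2\leq\tfrac12$, so $\cosh(a+b)-\cosh b\leq\tfrac12a(a+2b)\cosh(a+b)\lesssim a(a+b)$.
\end{itemize}

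Your computation is shorter. It dispenses with both auxiliary lemmas and with the lossy $2^{|y|+|z|}$ bound, and it gives the divergence in closed form. The paper's argument instead makes the collision structure visible term by term: only configurations with a $D$--$D$ or $D$--$Q$ collision contribute. That is the intuition used in the overview of \cref{sec:lb-gap}, and in your version it appears only implicitly as the $a^2$ and $ab$ terms.
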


If an asymmetric LFHT tester succeeds, \cref{lem:poissonized-tester} gives a test $\psi$ distinguishing $\mathbb{P}_0$ and $\mathbb{P}_1$ with error less than $\frac{2}{5}$ under either hypothesis. By Le Cam's testing inequality, its two error probabilities satisfy
\begin{align*}
1-\tv(\mathbb{P}_0,\mathbb{P}_1)
\leq \Pr_{\mathbb{P}_0}[\psi=1]+\Pr_{\mathbb{P}_1}[\psi=0]
<\frac{4}{5},
\end{align*}
so $\tv(\mathbb{P}_0,\mathbb{P}_1)>\frac{1}{5}$. We now show that this is impossible when the sample budget is too small.

For the $N$ independent pairs, tensorization gives the identity
\begin{align*}
1+\chi^{2}(\mathbb{P}_{0}\|\mathbb{P}_{1})
=\left(1+\chi^{2}(\mathbb{P}_{0}^{\mathrm{pair}}\|\mathbb{P}_{1}^{\mathrm{pair}})\right)^N.
\end{align*}
Thus,
\begin{align*}
\log\left(1+\chi^{2}(\mathbb{P}_{0}\|\mathbb{P}_{1})\right)
&\leq N\chi^{2}(\mathbb{P}_{0}^{\mathrm{pair}}\|\mathbb{P}_{1}^{\mathrm{pair}})
&&\bigl(\text{tensorization and } \log(1+x)\leq x\bigr)\\
&\lesssim N\frac{\varepsilon^{4}t_{P}t_{D}(t_{Q}+t_{D})}{n^{3}}
&&\bigl(\text{\cref{lem:pair-chi-square}}\bigr)\\
&\leq\frac{\varepsilon^{4}t_{P}t_{D}(t_{Q}+t_{D})}{n^{2}}
&&\bigl(N\leq n\bigr).
\end{align*}
Consequently, if $s_{P}s_{D}(s_{Q}+s_{D})\leq \frac{c n^{2}}{\varepsilon^{4}}$ for a sufficiently small constant $c$, then $\chi^{2}(\mathbb{P}_{0}\|\mathbb{P}_{1}) < \frac{4}{25}$. Hence $\tv(\mathbb{P}_{0},\mathbb{P}_{1}) \leq \frac{1}{2}\sqrt{\chi^{2}(\mathbb{P}_{0}\|\mathbb{P}_{1})} < \frac{1}{5}$, contradicting the bound above.

This proves (\ref{inq:alfht-lb}), completing the case $s_{D} \leq s_{P} \leq \frac{n}{100}$ of the third condition, and with it the lower-bound direction of \cref{thm:char}.

\section{Application: Non-Adaptive Distribution 2-Clustering}\label{sec:clustering}

We now apply the asymmetric bounds to distribution clustering. Throughout this section and \cref{app:clustering}, we assume that $n\geq10$. The formal problem formulation is as follows:

\begin{framed}
	\noindent\textbf{The Non-Adaptive Distribution 2-Clustering Problem}
	\begin{description}[leftmargin=1em]
		\item[Input:] Sample access to $k$ distributions $\{D_i\}_{i=1}^k$ over domain $[n]$, parameter $\varepsilon \in (0,1]$. The distributions are queried non-adaptively. The per-distribution sample allocation $(T_1, \ldots, T_k)$ is chosen as a (possibly random) function of $n$, $k$, $\varepsilon$, and the minority cluster size $r$ alone, and all $T_i$ samples are drawn from $D_i$ and simultaneously revealed to the learner.
		\item[Promise:] Each input distribution equals either $P$ or $Q$, where $\tv(P,Q)\geq\varepsilon$, and both occur at least once among the $k$ distributions (i.e., non-empty clusters).
		\item[Goal:] With probability at least $\frac{2}{3}$, recover the partition into the $P$- and $Q$-clusters.
		\item[Variants:] Either one cluster distribution is known, or both are unknown.
	\end{description}
\end{framed}

We write $P, Q$ for the two clusters' distributions and $\{\mathcal{I}_P, \mathcal{I}_Q\}$ for the corresponding partition. When one distribution is known, we take it to be $P$. As noted in \citet{KPS26}, it can then be taken to be the uniform distribution $U$, so the partition is $\{\mathcal{I}_U, \mathcal{I}_Q\}$.\footnote{The transformation of~\citet{Goldreich2020} maps a known distribution $P$ on $[n]$ to the uniform distribution on $[6n]$, and maps any $Q$ with $\tv(P,Q)\geq\varepsilon$ to a distribution at distance at least $\frac{\varepsilon}{3}$ from uniform. Applying it to every input changes our bounds only by constant factors.} Throughout, $r$ denotes the size of the smaller of the two clusters, so $r \in \left[1, \frac{k}{2}\right]$, and we focus on the case that $r$ is known. We note that this knowledge is only used to fix the sample allocation, but not to classify the distributions once the samples are drawn. Moreover, instead of exact knowledge of $r$, any lower bound $r_0\leq r$ suffices, with the upper bounds then holding with $r_0$ in place of $r$.  Let $T$ denote the total sample budget, so that $\sum_i T_i \leq T$ always, and $T_{\mathrm{na}}(n, k, r, \varepsilon)$ is the minimum $T$ for which some such algorithm returns the correct partition with probability $\geq \frac{2}{3}$.

Our results, summarized in \cref{tab:na-summary}, establish nearly tight upper and lower bounds on $T_{\mathrm{na}}$.  We outline the algorithm and its analysis here, and defer the details to \cref{app:clustering}. As in the adaptive case of~\citet{KPS26}, the complexity of our algorithms is the sum of the costs of two stages (here ``stages'' only refer to algorithmic steps rather than adaptive sampling steps):
\begin{itemize}[topsep=2ex,itemsep=0ex]
    \item \textbf{Stage 1.} Find one distribution from each cluster, called an exemplar.
    \item \textbf{Stage 2.} Classify every distribution via LFHT, using the two exemplars as the hypotheses.
\end{itemize}
Stage 1 uses a random subset of $\Theta\!\left(\frac{k}{r}\right)$ distributions, which contains a member of each cluster with constant probability. Because the allocation is fixed before the exemplars are identified, every distribution in this subset must receive enough samples to become an exemplar. When both distributions are unknown, one distribution of the subset, which we call the \emph{pivot}, can receive more samples than the others. Stage 2 is then an instance of asymmetric LFHT, with the pivot as $P$, the other exemplar as $Q$, and each distribution being classified as $D$.  We proceed to outline the analysis.

\begin{table}[t]
    \centering
    \renewcommand{\arraystretch}{1.4}
    \setlength{\tabcolsep}{12pt}
    \begin{tabular}{@{}lcc@{}}
        \toprule
        \textbf{Regime} & \textbf{One known distribution} & \textbf{Both distributions unknown} \\
        \midrule
        $n\gtrsim\frac{k^2}{r\varepsilon^4}$ & $\widetilde\Theta\!\left(\frac{k\sqrt n}{\sqrt r\,\varepsilon^2}\right)$ & $\widetilde\Theta\!\left(\frac{k^{2/3}n^{2/3}}{r^{1/3}\varepsilon^{4/3}}\right)$ \\
        $r\log k\lesssim n\lesssim\frac{k^2}{r\varepsilon^4}$ & $\widetilde\Theta\!\left(\frac{k\sqrt n}{\sqrt r\,\varepsilon^2}\right)$ & $\widetilde\Theta\!\left(\frac{k\sqrt n}{\sqrt r\,\varepsilon^2}\right)$ \\
        $n\lesssim r\log k$ & $\widetilde\Theta\!\left(\frac{k}{\varepsilon^2}\right)$ & $\widetilde\Theta\!\left(\frac{k}{\varepsilon^2}\right)$ \\
        \bottomrule
    \end{tabular}
    \caption{Non-adaptive sample complexity $T_{\mathrm{na}}$ for $k\geq200$ and $\varepsilon\in(0,1/8]$. The notation $\widetilde\Theta$ hides polylogarithmic factors. The bounds follow from \cref{thm:subset-c1,thm:na-lb-c1,thm:subset-c2,thm:na-lb-c2}.
    }
    \label{tab:na-summary}
\end{table}

\paragraph{One known distribution.} A random subset of $\Theta\!\left(\frac{k}{r}\right)$ distributions contains at least one member of $\mathcal{I}_Q$ with constant probability. The algorithm finds an exemplar of $\mathcal{I}_Q$ by running a uniformity test on every member of the subset, then uses the exemplar's samples as sample access to $Q$ to classify all $k$ distributions against $(U,Q)$ with asymmetric LFHT at per-test error $\frac{1}{9k}$.

For the lower bound, we provide a reduction in \cref{lem:na-reduction}, used for both variants, that turns a clustering algorithm into a tester for $D=Q$ vs.\ $D=U$. After sorting the distributions by expected allocation, an averaging argument finds a lightly sampled index $j$ such that the algorithm is unlikely to output the candidate partition formed by $j$ together with the $r-1$ least-sampled indices.  Planting $D$ at position $j$, with these other $r-1$ distributions simulating $Q$, turns the clustering algorithm into a tester whose two sample sizes are an $O\!\left(\frac{r}{k}\right)$- and an $O\!\left(\frac{1}{k}\right)$-fraction of the budget respectively, thus allowing us to invoke a known lower bound for testing.

By formalizing the above outlines, for $k\geq200$ and $\varepsilon\in(0,1/8]$, \cref{thm:subset-c1,thm:na-lb-c1} give the following sample complexity with one known distribution:
\begin{align*}
T_{\mathrm{na}}(n, k, r, \varepsilon) \;=\; \widetilde\Theta\!\left(\frac{k\sqrt n}{\sqrt r\,\varepsilon^2} + \frac{k}{\varepsilon^2}\right).
\end{align*}

\paragraph{Both distributions unknown.} The upper bound keeps the two-stage structure. Stage~1 draws a sketch from a pivot $i_A$ and runs the unequal-samples closeness test of~\citet{DK16} against every other member of a random size-$\Theta\!\left(\frac{k}{r}\right)$ subset, which yields an exemplar $i_B$ of the other cluster. Stage~2 classifies all $k$ distributions against $(D_{i_A}, D_{i_B})$ by asymmetric LFHT.

The lower bound with one known distribution also applies when both are unknown, and this is combined with a separate argument specific to the both-unknown case.  That argument again applies the reduction of \cref{lem:na-reduction}, this time into a tester for $D = P$ vs.~$D = Q$.
After ordering the indices by expected allocation, the reduction places $Q$ at the $r-1$ least-sampled indices, the planted distribution $D$ at another lightly sampled index $j$, and $P$ at all remaining indices. With $T$ samples total, the number of queries is $O(Tr/k)$ to the $Q$-oracle, $O(T/k)$ to the $D$-oracle, and (trivially) at most $T$ to the $P$-oracle.  The three-way tradeoff $s_P s_D(s_Q + s_D) \gtrsim \frac{n^2}{\varepsilon^4}$ then translates into $\frac{T^3 r}{k^2} \gtrsim \frac{n^2}{\varepsilon^4}$, thus giving an $\Omega\!\left(\frac{k^{2/3} n^{2/3}}{r^{1/3}\,\varepsilon^{4/3}}\right)$ lower bound.

We note that the three-way tradeoff is essential. In particular, planting a pair from the unequal-samples closeness-testing lower bound of~\citet{BV15} yields only $\Omega\!\left(\frac{k^{2/3} n^{2/3}}{r^{2/3}\,\varepsilon^{4/3}}\right)$, which is worse by a factor of $r^{1/3}$.

By formalizing the above outlines, under the same assumptions as the one-known setting, \cref{thm:subset-c2,thm:na-lb-c2} give the following characterization with both distributions unknown:
\begin{align*}
T_{\mathrm{na}}(n, k, r, \varepsilon) \;=\; \widetilde\Theta\!\left(\frac{k^{2/3} n^{2/3}}{r^{1/3}\,\varepsilon^{4/3}} + \frac{k\sqrt n}{\sqrt r\,\varepsilon^2} + \frac{k}{\varepsilon^2}\right).
\end{align*}
Observe that at $r = 1$ the first term is $\frac{k^{2/3} n^{2/3}}{\varepsilon^{4/3}}$, and at $r = \frac{k}{2}$ it is the LFHT-derived rate $\frac{k^{1/3} n^{2/3}}{\varepsilon^{4/3}}$ of~\citet{KPS26}.

\paragraph{Adaptivity gap.}
Let $T_{\mathrm a}$ denote the adaptive sample complexity when both cluster
distributions are unknown and $r$ is known. Comparing our bounds with
those of~\citet{KPS26} gives
$T_{\mathrm{na}}/T_{\mathrm a}=\widetilde O(k^{1/4})$
throughout the above parameter range.  As a specific example, the upper limit is attained (up to logarithmic
factors) when $n=k$, $r\asymp\sqrt{k}$, and $\varepsilon$ is constant:
\begin{align*}
    T_{\mathrm a}=\widetilde\Theta(k),\qquad
    T_{\mathrm{na}}=\widetilde\Theta(k^{5/4}).
\end{align*}
The details are given in \cref{app:adaptivity-gap}.

\section{Conclusion}\label{sec:conclusion}
In this paper, we characterized the sample budgets $(s_P,s_Q,s_D)$ for which asymmetric likelihood-free hypothesis testing is possible, up to logarithmic factors. From this characterization, we were able to derive tight sample complexity bounds for non-adaptive distribution 2-clustering.  
Among other things, we demonstrated that allowing unequal samples in $(s_P,s_Q)$ can polynomially reduce the required $s_D$; for instance, with $n$ samples from one reference distribution and $n^{3/4}$ from the other, $\widetilde O(n^{1/4})$ target samples suffice, whereas prior work  required $\Omega(\sqrt n)$.  Moreover, using asymmetric LFHT as a stepping stone, we established the optimal sample complexity of non-adaptive 2-clustering up to logarithmic factors, revealing an adaptivity gap of up to $\widetilde O(k^{1/4})$.

\section*{Acknowledgements}
This work is supported by the National Research Foundation (NRF) Singapore and the Ministry of Digital Development and Information (MDDI) under the AI Visiting Professorship (Award AIVP-2024-003).

\section*{AI Disclosure}
The lower bound in \cref{sec:lb} was developed with substantial help from an AI model (Claude). We reworked the argument ourselves to make it readable and to give intuition. The model also helped us check the proofs in most other parts of the paper. We take full responsibility for the correctness, presentation, and communication of the results.

\bibliographystyle{plainnat}
\bibliography{biblio}

\appendix
\crefalias{section}{appendix}
\crefalias{subsection}{subappendix}

\newpage
\section{Proofs for the Upper Bound}\label{app:ub-proofs}

\subsection{Flattening}

We restate and prove the flattening lemma from \cref{sec:flatten}.

\lemflat*
\begin{proof}
We first prove the bound for $P$. Let $N(i)$ count the copies of $i$ in
$\mathcal S$, so $B_i=1+N(i)$. The count $N(i)$ is a sum of independent
indicators from the two flattening batches, so
\begin{align*}
    \E[N(i)]
    =\lceil s_P/2\rceil p(i)+\lceil s_Q/2\rceil q(i)
    \geq s_Pp(i)/2.
\end{align*}
Each of the $B_i$ pieces created from $i$ has mass $p(i)/B_i$, so we bound
this ratio for every $i$.

If $s_Pp(i)<32\log n$, the original mass is already small enough:
\begin{align*}
    \frac{p(i)}{B_i}\leq p(i)<\frac{32\log n}{s_P}.
\end{align*}
Otherwise, we use concentration to ensure that $i$ is split into enough
pieces. Since $\E[N(i)]\geq s_Pp(i)/2\geq16\log n$, the Chernoff bound gives
\begin{align*}
    \Pr\!\left[N(i)<\frac{s_Pp(i)}8\right]
    \leq\exp\!\left(-\frac{9s_Pp(i)}{64}\right)
    \leq n^{-4}.
\end{align*}
Whenever $N(i)\geq s_Pp(i)/8$, the mass of each piece satisfies
\begin{align*}
    \frac{p(i)}{B_i}\leq\frac8{s_P}
    \leq\frac{32\log n}{s_P}.
\end{align*}
The same argument applies to $Q$. A union bound over the $n$ elements
for both distributions gives total failure probability at most
$2n\cdot n^{-4}\leq1/n$, proving the lemma.
\end{proof}

\subsection{Poisson Moments and Variance}

This subsection proves a general variance bound for $T_{\LF}$. We adapt
the quadratic-statistic analysis of \citet[Proposition~5 and Appendix~D]{GP24}
to independent Poisson batches with unequal sizes.

\begin{lemma}\label{lem:poisson-moments}
	Let $N \sim \Poi(\lambda)$ and, where stated, $N' \sim \Poi(\mu)$ independent of $N$. Then
	\begin{align}
		\E[N(N-1)] \;&=\; \lambda^2, \label{eq:poi-mom1}\\
		\Var[N(N-1)] \;&=\; 4\lambda^3 + 2\lambda^2, \label{eq:poi-id1}\\
		\E[NN'] \;&=\; \lambda \mu, \label{eq:poi-id3}\\
		\Var[NN'] \;&=\; \lambda^2 \mu + \lambda \mu^2 + \lambda \mu. \label{eq:poi-id2}
	\end{align}
\end{lemma}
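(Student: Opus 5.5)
We prove the four identities directly from the Poisson probability mass function, through factorial moments. The key fact is that for $N\sim\Poi(\lambda)$ the falling factorial moments are $\E[N(N-1)\cdots(N-m+1)]=\lambda^m$ for every integer $m\geq1$. To see this, write the expectation as $\sum_{x\geq m}\poi_\lambda(x)\,\frac{x!}{(x-m)!}$. Then factor out $\lambda^m$ and reindex $x\mapsto x-m$, which leaves the total mass of $\Poi(\lambda)$, namely $1$. Taking $m=2$ gives \eqref{eq:poi-mom1} immediately. Independence then gives \eqref{eq:poi-id3}, since $\E[NN']=\E[N]\,\E[N']=\lambda\mu$.

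For \eqref{eq:poi-id1}, we compute $\E[(N(N-1))^2]$ by rewriting $N^2(N-1)^2$ in the falling-factorial basis. Writing $N^{\underline m}=N(N-1)\cdots(N-m+1)$, we have
$$
N(N-1)\cdot N(N-1)=N^{\underline 4}+4N^{\underline 3}+2N^{\underline 2}.
$$
This follows from $N(N-1)\bigl[(N-2)(N-3)+4(N-2)+2\bigr]=N(N-1)\bigl[N^2-N\bigr]$. Hence $\E[(N(N-1))^2]=\lambda^4+4\lambda^3+2\lambda^2$, and subtracting $(\lambda^2)^2$ yields $4\lambda^3+2\lambda^2$.

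For \eqref{eq:poi-id2}, we again use independence: $\E[(NN')^2]=\E[N^2]\,\E[N'^2]=(\lambda^2+\lambda)(\mu^2+\mu)$, using $\E[N^2]=N^{\underline2}$-moment plus mean. Subtracting $\lambda^2\mu^2$ leaves $\lambda^2\mu+\lambda\mu^2+\lambda\mu$.

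There is no real obstacle here. The only step needing care is verifying the falling-factorial expansion of $N^2(N-1)^2$, which we check by expanding both sides as polynomials in $N$.
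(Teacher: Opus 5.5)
Your proposal is correct and follows the paper's proof essentially line for line. Both use the falling-factorial moment identity $\E[N(N-1)\cdots(N-m+1)]=\lambda^m$, the same expansion $x^2(x-1)^2=x(x-1)(x-2)(x-3)+4x(x-1)(x-2)+2x(x-1)$ for \eqref{eq:poi-id1}, and independence together with $\E[N^2]=\lambda^2+\lambda$ for \eqref{eq:poi-id3} and \eqref{eq:poi-id2}.
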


\begin{proof}
	For every $m\geq1$, a direct calculation gives
	\begin{align}\label{eq:poi-fact}
		\E[N(N-1)\cdots(N-m+1)]
		=\sum_{k\geq m}\frac{k!}{(k-m)!}\cdot\frac{e^{-\lambda}\lambda^k}{k!}
		=\lambda^m\sum_{k\geq m}\frac{e^{-\lambda}\lambda^{k-m}}{(k-m)!}
		=\lambda^m.
	\end{align}
    We then establish the four claims as follows:
    \begin{itemize}
        \item For \eqref{eq:poi-mom1}, we simply apply~\eqref{eq:poi-fact} with $m=2$.
        \item For \eqref{eq:poi-id1}, expanding the right-hand side shows that
    	\begin{align*}
    		x^2(x-1)^2=x(x-1)(x-2)(x-3)+4x(x-1)(x-2)+2x(x-1).
    	\end{align*}
    	Applying~\eqref{eq:poi-fact} with $m=4,3,2$ to the three terms gives
    	$\E\!\left[(N(N-1))^2\right]=\lambda^4+4\lambda^3+2\lambda^2$.
    	Subtracting $(\E[N(N-1)])^2 = \lambda^4$ gives~\eqref{eq:poi-id1}.
        \item For \eqref{eq:poi-id3}, independence gives $\E[NN'] = \E[N]\E[N'] = \lambda\mu$.
        \item For \eqref{eq:poi-id2}, independence gives
	\begin{align*}
		\Var[NN'] \;=\; \E[N^2]\,\E[N'^2] - (\E[N]\E[N'])^2
		\;=\; (\lambda^2 + \lambda)(\mu^2 + \mu) - \lambda^2\mu^2,
	\end{align*}
	where we applied $\E[N^2] = \E[N(N-1)] + \E[N] = \lambda^2 + \lambda$. Expanding and simplifying gives the desired expression $\lambda^2\mu + \lambda\mu^2 + \lambda\mu$.
    \end{itemize}
\end{proof}

All expectations and variances below are conditional on the flattening
samples $\mathcal S$.

\begin{proposition}\label{prop:var}
	Let the counts
	\begin{align*}
		N^P_i \sim \Poi(s_P\, \tilde p_i), \quad N^Q_i \sim \Poi(s_Q\, \tilde q_i), \quad N^D_i \sim \Poi(s_D\, \tilde d_i)
	\end{align*}
	be independent across $i \in [K]$ and across $P, Q, D$, and let
	$T_{\LF}$ be the statistic of \cref{alg:lfht-case1}. Then
	\begin{align*}
		\E[T_{\LF}] = \|\tilde p - \tilde d\|_2^2 - \|\tilde q - \tilde d\|_2^2
	\end{align*}
	and
	\begin{multline}\label{eq:var-norm-form}
		\Var[T_{\LF}] \leq C\,\bigg[\, \frac{\|\tilde p\|_3^3 + \sum_i \tilde p_i \tilde d_i^2}{s_P} + \frac{\|\tilde q\|_3^3 + \sum_i \tilde q_i \tilde d_i^2}{s_Q} + \frac{\sum_i \tilde p_i^2 \tilde d_i + \sum_i \tilde q_i^2 \tilde d_i}{s_D} \\
		{} + \frac{\|\tilde p\|_2^2}{s_P^2} + \frac{\|\tilde q\|_2^2}{s_Q^2} + \frac{\sum_i \tilde p_i \tilde d_i}{s_P s_D} + \frac{\sum_i \tilde q_i \tilde d_i}{s_Q s_D} \,\bigg].
	\end{multline}
\end{proposition}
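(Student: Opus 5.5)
The plan is to decompose $T_{\LF}$ coordinatewise and exploit independence across coordinates $i\in[K]$ and across the three batches. Write $T_{\LF}=\sum_i T_i$ with
\[
T_i=\frac{N^P_i(N^P_i-1)}{s_P^2}-\frac{N^Q_i(N^Q_i-1)}{s_Q^2}-2\frac{N^P_iN^D_i}{s_Ps_D}+2\frac{N^Q_iN^D_i}{s_Qs_D}.
\]
By Poissonization the triples $(N^P_i,N^Q_i,N^D_i)$ are independent across $i$. Hence $\Var[T_{\LF}]=\sum_i\Var[T_i]$, and it suffices to bound each $\Var[T_i]$.

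For the expectation, apply \eqref{eq:poi-mom1} and \eqref{eq:poi-id3} of \cref{lem:poisson-moments} termwise. This gives $\E[T_i]=\tilde p_i^2-\tilde q_i^2-2\tilde p_i\tilde d_i+2\tilde q_i\tilde d_i$. Summing over $i$ and adding and subtracting $\|\tilde d\|_2^2$ yields $\|\tilde p-\tilde d\|_2^2-\|\tilde q-\tilde d\|_2^2$.

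For the variance, use $\Var[\sum_{m=1}^4 A_m]\leq 4\sum_m\Var[A_m]$, which discards covariances between the four summands at the cost of a constant. The four variances then follow from \cref{lem:poisson-moments}:
\begin{itemize}
\item The first term has variance $(4s_P^3\tilde p_i^3+2s_P^2\tilde p_i^2)/s_P^4$, which contributes $\tilde p_i^3/s_P+\tilde p_i^2/s_P^2$.
\item The $Q$-term contributes $\tilde q_i^3/s_Q+\tilde q_i^2/s_Q^2$ in the same way.
\item The cross term $N^P_iN^D_i$, using \eqref{eq:poi-id2} with $\lambda=s_P\tilde p_i$ and $\mu=s_D\tilde d_i$, scaled by $4/(s_Ps_D)^2$, gives $\tilde p_i^2\tilde d_i/s_D+\tilde p_i\tilde d_i^2/s_P+\tilde p_i\tilde d_i/(s_Ps_D)$.
\item The $Q$--$D$ cross term gives the analogous three terms with $\tilde q$ and $s_Q$.
\end{itemize}
Summing over $i$ collects exactly the seven groups in \eqref{eq:var-norm-form}, with some absolute constant $C$.

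No step is genuinely hard once independence across coordinates is in place. The main point to check is that Poissonization makes the $N^R_i$ independent across $i$ as well as across batches, since a fixed batch size would introduce multinomial correlations. This holds here because the testing batch sizes are $\Poi(s_R)$ and the flattening map sends each sample to $(i,a)$ independently. The crude bound on the covariances is harmless because the goal is only an upper bound up to a constant.
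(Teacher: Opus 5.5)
Your proposal is correct and follows essentially the paper's proof: the expectation comes from the same Poisson moment identities in \cref{lem:poisson-moments}, and the variance from $\Var[\sum_{m=1}^4 A_m]\leq 4\sum_m\Var[A_m]$ combined with independence across coordinates. The only difference is the order of steps: you split into coordinates first and apply the four-term inequality within each coordinate, whereas the paper applies it to the four aggregated sums $T^{(P)},T^{(Q)},T^{(PD)},T^{(QD)}$ and then uses independence across $i$ inside each sum. Both orders yield the same bound \eqref{eq:var-norm-form}.
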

\begin{proof}
	Throughout the proof, we shorten $\sum_{i \in [K]}$ to $\sum_i$.

	\paragraph{Expectation.}
	For the statistic in \cref{alg:lfht-case1}, \eqref{eq:poi-mom1} gives $\frac{\E[N^P_i(N^P_i-1)]}{s_P^2}=\tilde p_i^2$ and similarly $\tilde q_i^2$ for the $Q$-term. By independence, $\frac{\E[N^P_iN^D_i]}{s_Ps_D}=\tilde p_i\tilde d_i$ and $\frac{\E[N^Q_iN^D_i]}{s_Qs_D}=\tilde q_i\tilde d_i$. Summing over $i$ gives
	\begin{align*}
		\E[T_{\LF}]
			&= \sum_i \big(\tilde p_i^2 - \tilde q_i^2 - 2\tilde p_i \tilde d_i + 2\tilde q_i \tilde d_i\big) \\
			&= \|\tilde p - \tilde d\|_2^2 - \|\tilde q - \tilde d\|_2^2.
	\end{align*}

	\paragraph{Variance.}
	Counts for distinct $i$ values are independent, so the variance of each of
	the four components below is the sum of individual variances across $i$.
	We bound $\Var[T_{\LF}]$ using the decomposition
	\begin{align}\label{eq:Tdecomp}
		T_{\LF} = T^{(P)} - T^{(Q)} - 2 T^{(PD)} + 2 T^{(QD)},
	\end{align}
	where
	\begin{align*}
		T^{(P)} &:= \sum_i \frac{N^P_i(N^P_i-1)}{s_P^2}, & T^{(Q)} &:= \sum_i \frac{N^Q_i(N^Q_i-1)}{s_Q^2}, \\
		T^{(PD)} &:= \sum_i \frac{N^P_i N^D_i}{s_P s_D}, & T^{(QD)} &:= \sum_i \frac{N^Q_i N^D_i}{s_Q s_D}.
	\end{align*}
	Applying the inequality
	$\Var[X_1+X_2+X_3+X_4]\leq4\sum_{j=1}^4\Var[X_j]$
	to the four signed terms in~\eqref{eq:Tdecomp} gives
	\begin{align}\label{eq:var-decomp}
		\Var[T_{\LF}] \lesssim \Var[T^{(P)}] + \Var[T^{(Q)}]
		+ \Var[T^{(PD)}] + \Var[T^{(QD)}].
	\end{align}

	\paragraph{Variances of $T^{(P)}$ and $T^{(Q)}$.}
	Applying~\eqref{eq:poi-id1} with $\lambda=s_P\tilde p_i$ gives
	\begin{align*}
		\Var[T^{(P)}] = \frac{1}{s_P^4}\sum_i \big(4(s_P\tilde p_i)^3 + 2(s_P\tilde p_i)^2\big) = \frac{4\,\|\tilde p\|_3^3}{s_P} + \frac{2\,\|\tilde p\|_2^2}{s_P^2}.
	\end{align*}
	An identical argument for $Q$ gives $\Var[T^{(Q)}] = \frac{4\|\tilde q\|_3^3}{s_Q} + \frac{2\|\tilde q\|_2^2}{s_Q^2}$.

	\paragraph{Variances of $T^{(PD)}$ and $T^{(QD)}$.}
	Applying~\eqref{eq:poi-id2} with $\lambda=s_P\tilde p_i$ and $\mu=s_D\tilde d_i$ gives
	\begin{align*}
		\Var[T^{(PD)}] &= \frac{1}{s_P^2 s_D^2}\sum_i \big((s_P\tilde p_i)^2(s_D\tilde d_i) + (s_P\tilde p_i)(s_D\tilde d_i)^2 + (s_P\tilde p_i)(s_D\tilde d_i)\big) \\
		&= \frac{\sum_i \tilde p_i^2 \tilde d_i}{s_D} + \frac{\sum_i \tilde p_i \tilde d_i^2}{s_P} + \frac{\sum_i \tilde p_i \tilde d_i}{s_P\, s_D},
	\end{align*}
	and analogously $\Var[T^{(QD)}] = \frac{\sum_i \tilde q_i^2 \tilde d_i}{s_D} + \frac{\sum_i \tilde q_i \tilde d_i^2}{s_Q} + \frac{\sum_i \tilde q_i \tilde d_i}{s_Q s_D}$.

	Substituting these four variance expressions into~\eqref{eq:var-decomp} and absorbing the numerical coefficients into $C$ gives~\eqref{eq:var-norm-form}.
	\end{proof}

\subsection{Norm Bounds}\label{app:norm-bounds}

We now combine \cref{prop:var} with \cref{lem:flat} to prove \cref{lem:norms}, which is restated as follows.

\lemnorms*
\begin{proof}
    By \cref{prop:var}, the expectation is $-\Delta^2$ when $D=P$ and
    $\Delta^2$ when $D=Q$. It remains to bound the variance.
    On the event of \cref{lem:flat}, we claim that
	\begin{align}\label{eq:norm-bounds}
		\|\tilde p\|_\infty \lesssim \frac{\log n}{s_P}, \qquad
		\|\tilde q\|_\infty,\ \|\tilde d\|_\infty \lesssim \frac{\log n}{s_Q}, \qquad
		\|\tilde p\|_2^2 \lesssim \frac{\log n}{s_P}, \qquad
		\|\tilde q\|_2^2,\ \|\tilde d\|_2^2 \lesssim \frac{\log n}{s_P} + \Delta^2.
	\end{align}
	The bounds on $\|\tilde p\|_\infty$ and $\|\tilde q\|_\infty$ are from \cref{lem:flat}, and the bound on $\|\tilde p\|_2^2$ follows from $\|\tilde p\|_2^2 \leq \|\tilde p\|_\infty \sum_i \tilde p_i = \|\tilde p\|_\infty$. For $\tilde q$, the triangle inequality and $(a+b)^2\leq2a^2+2b^2$ give
	\begin{align*}
		\|\tilde q\|_2^2
		=\|\tilde p+(\tilde q-\tilde p)\|_2^2
		\leq2\|\tilde p\|_2^2+2\Delta^2.
	\end{align*}
	Since $\tilde d \in \{\tilde p, \tilde q\}$ and $s_Q \leq s_P$, the bounds on $\tilde d$ follow from those on $\tilde p$ and $\tilde q$.

	We bound each term of the variance bound~\eqref{eq:var-norm-form} of \cref{prop:var} using~\eqref{eq:norm-bounds} and the following elementary inequalities:
	\begin{enumerate}[label=\textnormal{(N\arabic*)}, leftmargin=2.5em, topsep=2pt, itemsep=2pt]
		\item\label{n:third} $\|\tilde r\|_3^3 \leq \|\tilde r\|_\infty \|\tilde r\|_2^2$.
		\item\label{n:mixed} $\sum_i u_i v_i^2 \leq \|u\|_\infty \|v\|_2^2$ and $\sum_i u_i^2 v_i \leq \|v\|_\infty \|u\|_2^2$.
		\item\label{n:pd} $\sum_i \tilde p_i \tilde d_i \leq \|\tilde p\|_\infty \sum_i \tilde d_i = \|\tilde p\|_\infty$.
		\item\label{n:qd} $\sum_i \tilde q_i \tilde d_i \leq \|\tilde q\|_2 \|\tilde d\|_2 \leq \frac{1}{2}\big(\|\tilde q\|_2^2 + \|\tilde d\|_2^2\big)$.
	\end{enumerate}
	Term by term, we have the following:
	\begin{align*}
		\frac{\|\tilde p\|_3^3 + \sum_i \tilde p_i \tilde d_i^2}{s_P} &\leq \frac{\|\tilde p\|_\infty \big(\|\tilde p\|_2^2 + \|\tilde d\|_2^2\big)}{s_P} \lesssim \frac{(\log n)^2}{s_P^3} + \frac{(\log n)\,\Delta^2}{s_P^2} && \text{by \ref{n:third}, \ref{n:mixed},} \\
		\frac{\|\tilde q\|_3^3 + \sum_i \tilde q_i \tilde d_i^2}{s_Q} &\leq \frac{\|\tilde q\|_\infty \big(\|\tilde q\|_2^2 + \|\tilde d\|_2^2\big)}{s_Q} \lesssim \frac{(\log n)^2}{s_P s_Q^2} + \frac{(\log n)\,\Delta^2}{s_Q^2} && \text{by \ref{n:third}, \ref{n:mixed},} \\
		\frac{\sum_i \tilde p_i^2 \tilde d_i + \sum_i \tilde q_i^2 \tilde d_i}{s_D} &\leq \frac{\|\tilde d\|_\infty \big(\|\tilde p\|_2^2 + \|\tilde q\|_2^2\big)}{s_D} \lesssim \frac{(\log n)^2}{s_P s_Q s_D} + \frac{(\log n)\,\Delta^2}{s_Q s_D} && \text{by \ref{n:mixed},} \\
		\frac{\|\tilde p\|_2^2}{s_P^2} &\lesssim \frac{\log n}{s_P^3}, \qquad\qquad \frac{\|\tilde q\|_2^2}{s_Q^2} \lesssim \frac{\log n}{s_P s_Q^2} + \frac{\Delta^2}{s_Q^2} && \text{by \eqref{eq:norm-bounds},} \\
		\frac{\sum_i \tilde p_i \tilde d_i}{s_P s_D} &\leq \frac{\|\tilde p\|_\infty}{s_P s_D} \lesssim \frac{\log n}{s_P^2 s_D}, && \\
		\frac{\sum_i \tilde q_i \tilde d_i}{s_Q s_D} &\leq \frac{\|\tilde q\|_2^2 + \|\tilde d\|_2^2}{2\, s_Q s_D} \lesssim \frac{\log n}{s_P s_Q s_D} + \frac{\Delta^2}{s_Q s_D} && \text{by \ref{n:pd}, \ref{n:qd}.}
	\end{align*}
	Since $s_P \geq s_Q \gtrsim s_D$, the denominators satisfy $s_P^3,\ s_P s_Q^2,\ s_P^2 s_D \gtrsim s_P s_Q s_D$ and $s_P^2 \geq s_Q^2 \gtrsim s_Q s_D$, so summing the bounds gives the lemma.
\end{proof}

\section{Proofs for the Lower Bound}\label{app:lb-proofs}

We restate and prove the lemmas deferred from \cref{sec:lb-gap}, using the construction
and notation from that section.

\lemvalidity*
\begin{proof}
The number of heavy pairs satisfies
\begin{align*}
    h\sim\operatorname{Bin}\!\left(N,\frac{t_{P}}{4n}\right),
    \qquad
    \mathbb{E}[h]=\frac{Nt_{P}}{4n}.
\end{align*}
The complement $E^{c}$ is the event $h>2\mathbb{E}[h]$. By a Chernoff bound,
\begin{align*}
    \Pr[E^{c}]
    \leq
    \exp\!\left(-\frac{\mathbb{E}[h]}{3}\right).
\end{align*}
Recalling that $100\leq t_{P}\leq n$ and $N=\lfloor n/2\rfloor$, we have
\begin{align*}
    \mathbb{E}[h]
    =
    \frac{Nt_{P}}{4n}
    \geq
    \frac{99}{8}.
\end{align*}
Consequently,
\begin{align*}
    \Pr[E^{c}]
    \leq
    e^{-33/8}
    <
    \frac{1}{25}.
\end{align*}

Since $t_P\leq n$, the total mass satisfies $M\geq N/n\geq1/5$.
On $E$, we have $2h/t_P\leq N/n$ and $h\leq N/2$, which gives
\begin{align*}
    M=\frac{2h}{t_P}+\frac{N-h}{n}
    \leq\frac{N}{n}+\frac{N-h}{n}
    \leq\frac{3(N-h)}{n}.
\end{align*}
The weights coincide on heavy pairs, and each light pair contributes
$8\varepsilon/n$ to their $\ell_1$ distance. Hence,
\begin{align*}
    \tv(P,Q)
    =\frac{4(N-h)\varepsilon}{nM}
    \geq\frac{4\varepsilon}{3}
    \geq\varepsilon.
\end{align*}
\end{proof}

\lempoissonized*
\begin{proof}
Let $K_{P}$, $K_{Q}$, and $K_{D}$ denote the numbers of available samples from $P$, $Q$, and $D$, so that, conditional on the generated instance, $K_{R}\sim\Poi(Mt_{R})$ independently for $R\in\{P,Q,D\}$. Let $F$ be the event that $K_{R}\geq s_{R}$ for every $R$. The new algorithm $\mathcal{B}$ retains the first $s_{R}$ samples from each $R$ and runs $\mathcal{A}$ on them if $F$ occurs, and outputs an arbitrary answer otherwise.

The construction gives $M\geq N/n\geq\frac{1}{5}$, so $Mt_{R}\geq 20s_{R}$ for every $R$. The Poisson lower-tail bound $\Pr[K\leq(1-\delta)\lambda]\leq e^{-\delta^{2}\lambda/2}$ for $K\sim\Poi(\lambda)$, applied with $\delta=\frac{19}{20}$, gives
\begin{align*}
    \Pr[K_{R}<s_{R}]
    \leq
    \Pr\left[K_{R}\leq\frac{Mt_{R}}{20}\right]
    \leq
    e^{-9s_{R}}
    \leq
    e^{-9},
\end{align*}
so by a union bound $\Pr[F^{c}]\leq 3e^{-9}<10^{-3}$.

Conditional on a fixed generated instance and on $F$, the retained observations are exactly $s_{P}$, $s_{Q}$, and $s_{D}$ independent samples from $P$, $Q$, and $D$. On $E\cap F$, the retained samples form a valid LFHT instance, so $\mathcal{A}$ fails with probability at most $\frac{1}{3}$. Under either prior, we therefore have
\begin{align*}
    \Pr[\mathcal{B}\text{ fails}]
    \leq
    \frac{1}{3}+\Pr[E^{c}]+\Pr[F^{c}]
    \leq
    \frac{1}{3}+\frac{1}{25}+10^{-3}
    <
    \frac{2}{5}.
\end{align*}
\end{proof}

\subsection{Proof of \texorpdfstring{\cref{lem:pair-chi-square}}{Lemma \ref{lem:pair-chi-square}}}\label{sec:pairchisquare}

We restate the lemma as follows, and then prove it via three supporting lemmas.

\lempairchi*

Recall from \cref{sec:lb-gap} that $100\leq t_Q,t_D\leq t_P\leq n$.
For $R\in\{P,Q,D\}$, define
\begin{align*}
    \lambda_{R}:=\frac{t_{R}}{2n},\qquad
    \mu_{R}:=\frac{t_{R}}{t_{P}}.
\end{align*}
Hence, a heavy element of weight $1/t_P$ has expected count $\mu_R$ among
the $R$-samples. For light elements, the weights $1/(2n)$ and
$(1\pm8\varepsilon)/(2n)$ give expected counts $\lambda_R$ and
$\lambda_R(1\pm8\varepsilon)$, respectively.
In particular, $\mu_{P}=1$.

Fix a pair $(2j-1,2j)$ of domain elements from the construction.
Let $x=(x_1,x_2)$ record how often these two elements
appear among the $P$-samples, and define $y=(y_1,y_2)$ and $z=(z_1,z_2)$
similarly for the $Q$- and $D$-samples. For $x,y,z\in\mathbb{N}^{2}$, define
\begin{align*}
    H(x,y,z)
    &:={}
    \prod_{i=1}^{2}
    \poi_{1}(x_{i})
    \poi_{\mu_{Q}}(y_{i})
    \poi_{\mu_{D}}(z_{i}),\\
    L(x,y,z)
    &:={}
    \prod_{i=1}^{2}
    \poi_{\lambda_{P}}(x_{i})
    \poi_{\lambda_{Q}}(y_{i})
    \poi_{\lambda_{D}}(z_{i}).
\end{align*}
Observe that $H(x,y,z)$ equals the probability of observing these counts for a heavy
pair, and the function $L$ gives the corresponding probability when $w_P,w_Q,w_D$
all assign weight $1/(2n)$ to each element of the pair.

On a light pair, $w_Q$ instead assigns weights $(1+8\varepsilon)/(2n)$
and $(1-8\varepsilon)/(2n)$ in a random order. We account for both
possible orders with the function $g$, defined for $a,b\in\mathbb{N}$ by
\begin{align*}
    g(a,b)
    &:={}
    \frac{1}{2}(1+8\varepsilon)^{a}(1-8\varepsilon)^{b}
    +
    \frac{1}{2}(1-8\varepsilon)^{a}(1+8\varepsilon)^{b}.
\end{align*}
\begin{lemma}\label{lem:one-pair-laws}
For every $x,y,z\in\mathbb{N}^{2}$,
\begin{align*}
    \mathbb{P}_{0}^{\mathrm{pair}}(x,y,z)
    &=
    \gamma H(x,y,z)
    +(1-\gamma)L(x,y,z)g(y_{1},y_{2}),\\
    \mathbb{P}_{1}^{\mathrm{pair}}(x,y,z)
    &=
    \gamma H(x,y,z)
    +(1-\gamma)L(x,y,z)
    g(y_{1}+z_{1},y_{2}+z_{2}).
\end{align*}
\end{lemma}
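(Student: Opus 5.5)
The plan is to condition on the type of the pair (heavy or light) and, for a light pair, on the hidden sign $\eta$. Then we compute the resulting product of Poisson probabilities directly and average over the prior.

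\emph{Heavy pairs.} The pair is heavy with probability $\gamma$. By the Poisson thinning formulas of \cref{sec:lb-gap}, the six counts are independent with $X_i\sim\Poi(t_P\cdot\frac1{t_P})=\Poi(1)$, $Y_i\sim\Poi(\mu_Q)$ and $Z_i\sim\Poi(\mu_D)$. This holds under both priors because $w_P=w_Q=w_D$ on heavy pairs. The joint probability is therefore exactly $H(x,y,z)$, which accounts for the $\gamma H$ term in both laws.

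\emph{Light pairs.} The pair is light with probability $1-\gamma$; condition further on $\eta=\pm1$, each with probability $\frac12$. The $P$-counts are $\Poi(\lambda_P)$ on each element. The $Q$-counts are $\Poi(\lambda_Q(1+8\varepsilon\eta))$ and $\Poi(\lambda_Q(1-8\varepsilon\eta))$. The key algebraic step is to factor a tilted Poisson mass as
\[
\poi_{\lambda(1+\delta)}(m)=\poi_\lambda(m)\,(1+\delta)^m e^{-\lambda\delta}.
\]
Applying this to both elements, the exponential factors $e^{-\lambda_Q\cdot8\varepsilon\eta}$ and $e^{+\lambda_Q\cdot8\varepsilon\eta}$ cancel, precisely because the perturbations are paired. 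What remains is the untilted product times $(1+8\varepsilon\eta)^{y_1}(1-8\varepsilon\eta)^{y_2}$.

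Under $\Lambda_0$ the $D$-counts are untilted, with $\Poi(\lambda_D)$ on each element. Multiplying everything gives $L(x,y,z)(1+8\varepsilon\eta)^{y_1}(1-8\varepsilon\eta)^{y_2}$, and averaging over $\eta$ yields $L\,g(y_1,y_2)$.

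Under $\Lambda_1$ the $D$-counts carry the same tilt $\eta$. The same factorization applies, and the $\lambda_D$ exponentials cancel as well. The tilt factors then combine into $(1+8\varepsilon\eta)^{y_1+z_1}(1-8\varepsilon\eta)^{y_2+z_2}$, and averaging over $\eta$ gives $L\,g(y_1+z_1,y_2+z_2)$.

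Finally, the law of total probability over heavy, light with $\eta=1$, and light with $\eta=-1$ gives both displayed formulas. I expect no real obstacle. The only point needing care is to check that the exponential normalizations cancel exactly within each pair, which they do since the weights $(1\pm8\varepsilon\eta)/(2n)$ sum to $1/n$. The $\Lambda_1$ case also requires noting that $Y$ and $Z$ share the same $\eta$, which is what makes the exponents add.
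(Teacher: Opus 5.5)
Your proposal is correct and follows essentially the same route as the paper: condition on heavy versus light and on the sign $\eta$, observe that the paired tilt multiplies the untilted product $L$ by $(1+8\varepsilon\eta)^{y_1}(1-8\varepsilon\eta)^{y_2}$, add the $D$-exponents under $\Lambda_1$ because $Y$ and $Z$ share the orientation, and average. Your factorization $\poi_{\lambda(1+\delta)}(m)=\poi_\lambda(m)(1+\delta)^m e^{-\lambda\delta}$ simply makes explicit the exponential cancellation that the paper attributes to the tilt preserving the total weight of the pair.
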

\begin{proof}
We first consider a light pair. Recall that $L(x,y,z)$ is the probability
of the counts when all three weight vectors give the two elements equal
weight $1/(2n)$. Suppose that the first element has the larger $Q$-weight.
The tilt preserves the total weight of the pair and multiplies
the first and second $Q$-weights by $1+8\varepsilon$ and
$1-8\varepsilon$, respectively. The probability of observing counts
$y_1,y_2$ is therefore multiplied by
\begin{align*}
    (1+8\varepsilon)^{y_1}(1-8\varepsilon)^{y_2}.
\end{align*}
If the second element has the larger $Q$-weight, we interchange $1+8\varepsilon$ and $1-8\varepsilon$.

Recall the priors $\Lambda_0$ and $\Lambda_1$ defined in \cref{sec:lb-gap}.    
Under $\Lambda_0$, both $w_P$ and $w_D$ give each element weight $1/(2n)$.
Averaging over the two possible orders of the $Q$-weights gives the count
probability $L(x,y,z)g(y_1,y_2)$. 
Under $\Lambda_1$, we have $D=Q$, so the $D$-samples use the same orientation
as the $Q$-samples. The first element is observed $y_1+z_1$ times across
these two batches, and the second is observed $y_2+z_2$ times. The same
argument therefore gives $L(x,y,z)g(y_1+z_1,y_2+z_2)$.

A heavy pair has count distribution $H$ under both priors. Combining the
heavy and light cases, which occur with probabilities $\gamma$ and
$1-\gamma$, gives the stated formulas.
\end{proof}
For $y=(y_{1},y_{2})$ and $z=(z_{1},z_{2})$, let
\begin{align*}
    |y|:=y_{1}+y_{2},\qquad
    |z|:=z_{1}+z_{2}.
\end{align*}
To compare the two light-pair probabilities, let $B$ be the difference
between the two values of $g$:
\begin{align*}
    B(y,z)
    &:={}
    g(y_{1},y_{2})
    -g(y_{1}+z_{1},y_{2}+z_{2}).
\end{align*}
\begin{lemma}\label{lem:perturbation}
For every $y,z\in\mathbb{N}^{2}$,
\begin{enumerate}[label=\textnormal{(\arabic*)},leftmargin=2em]
    \item $B(y,z)=0$ unless $|z|\geq1$ and $|y|+|z|\geq2$.
    \item $|B(y,z)| \lesssim \varepsilon^{2}2^{|y|+|z|}.$
\end{enumerate}
\end{lemma}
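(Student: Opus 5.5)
The proof works directly with the sign-averaged representation of $g$. Set $\delta:=8\varepsilon\in(0,1]$ and $f_{a,b}(t):=(1+t)^{a}(1-t)^{b}$. Then
\begin{align*}
    g(a,b)=\tfrac12\bigl(f_{a,b}(\delta)+f_{a,b}(-\delta)\bigr).
\end{align*}
Thus $g(a,b)$ is the even part of the polynomial $f_{a,b}$, evaluated at $\delta$. Both parts of \cref{lem:perturbation} reduce to elementary facts about the coefficients of $f_{a,b}$.

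\textbf{Part (1).} If $|z|=0$, the two arguments of $g$ in the definition of $B$ coincide, so $B(y,z)=0$. If instead $|z|\geq1$ but $|y|+|z|\leq1$, then $|y|=0$ and $|z|=1$. In that case $B(y,z)=g(0,0)-g(z_1,z_2)$. We have $g(0,0)=1$, and also $g(1,0)=g(0,1)=\tfrac12\bigl((1+\delta)+(1-\delta)\bigr)=1$. Hence $B=0$ here as well.

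The general principle behind this: whenever $a+b\leq1$, the polynomial $f_{a,b}$ has degree at most one. Its even part is therefore the constant $f_{a,b}(0)=1$. This is exactly the statement that a single observation from a light pair carries no information.

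\textbf{Part (2).} First we bound $|g(a,b)-1|$ for general $m=a+b$. Write $f_{a,b}(t)=\sum_{k=0}^{m}c_k t^k$, where $c_0=1$. The even part removes all odd powers, so
\begin{align*}
    g(a,b)-1=\sum_{k\geq2,\ k\text{ even}}c_k\delta^k .
\end{align*}
The coefficient bound comes from comparing with $(1+t)^m$. The product $(1+t)^a(1-t)^b$ has coefficients bounded in absolute value by those of $(1+t)^{a}(1+t)^{b}=(1+t)^m$, so $|c_k|\leq\binom{m}{k}$. Since $\delta\leq1$, we have $\delta^k\leq\delta^2$ for all $k\geq2$. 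Therefore
\begin{align*}
    |g(a,b)-1|\leq\delta^2\sum_{k\geq2}\binom{m}{k}\leq\delta^2\,2^{m}.
\end{align*}

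Now apply this bound to both terms of $B$ by the triangle inequality:
\begin{align*}
    |B(y,z)|\leq|g(y_1,y_2)-1|+|g(y_1+z_1,y_2+z_2)-1|\leq\delta^2\bigl(2^{|y|}+2^{|y|+|z|}\bigr)\leq 128\,\varepsilon^2\,2^{|y|+|z|}.
\end{align*}
This is the claimed bound.

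The only point needing care is the coefficient domination $|c_k|\leq\binom{m}{k}$ and the use of $\delta\leq1$ to factor out $\delta^2$. Both are routine, so I do not expect a genuine obstacle. The cancellation of the first-order term, which is what produces the factor $\varepsilon^2$ rather than $\varepsilon$, comes for free from the symmetrization over the hidden sign $\eta$.
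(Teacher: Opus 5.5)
Your proof is correct and takes the same approach as the paper: treat $g(a,b)$ as the even part of $(1+v)^a(1-v)^b$ in $v=8\varepsilon\leq 1$, note that the constant and odd-degree terms drop out, bound the absolute coefficient sum by $2^{a+b}$, and factor out $v^2$. The only cosmetic difference is that you bound $|g-1|$ for each term and then apply the triangle inequality, whereas the paper bounds the absolute coefficient sum of $B$ directly; both give the same constant $2(8\varepsilon)^2$.
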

\begin{proof}
If $|z|=0$, the two terms defining $B(y,z)$ coincide. If
$|y|+|z|\leq1$, both values of $g$ equal one, since $g(a,b)=1$ for
$a+b\leq1$. This proves the first claim.

For the second claim, view $B(y,z)$ as a polynomial in $v=8\varepsilon$.
Its constant terms cancel, and averaging over the two signs removes all
odd powers, so every surviving term has degree at least two.
By the binomial theorem, the sum of the absolute coefficients of
$(1+v)^a(1-v)^b$ is at most
\begin{align*}
    \sum_{r=0}^a\sum_{s=0}^b\binom ar\binom bs=2^{a+b}.
\end{align*}
Averaging over the signs cannot increase this sum. Hence, the sum of the
absolute coefficients of $B(y,z)$ is at most
$2^{|y|}+2^{|y|+|z|}$. Since $v\leq1$ and all surviving powers are at
least two,
\begin{align*}
    |B(y,z)|
    \leq(8\varepsilon)^2\bigl(2^{|y|}+2^{|y|+|z|}\bigr)
    \leq2(8\varepsilon)^2 2^{|y|+|z|} \lesssim \varepsilon^{2}2^{|y|+|z|}.
\end{align*}
\end{proof}

\begin{lemma}\label{lem:two-coordinate-weights}
For every $R\in\{P,Q,D\}$ and $m\in\{0,1,2\}$,
\begin{align*}
    \sum_{|u|\geq m}4^{|u|}\prod_{i=1}^{2}
    \frac{\poi_{\lambda_R}(u_i)^2}{\poi_{\mu_R}(u_i)}
    \lesssim\left(\frac{t_Rt_P}{n^2}\right)^m.
\end{align*}
\end{lemma}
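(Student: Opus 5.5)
The sum factorizes over the two coordinates once we drop the constraint $|u|\geq m$. The plan is to handle the constraint by bounding the sum over $|u|\geq m$ by a sum over configurations in which a prescribed set of $m$ "units" is present. Each unit will contribute a factor of order $\lambda_R^2/\mu_R=t_Rt_P/(4n^2)$.

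First I compute the one-dimensional building block. For $a\in\mathbb{N}$,
\begin{align*}
    \frac{\poi_{\lambda}(a)^2}{\poi_{\mu}(a)}
    =e^{\mu-2\lambda}\frac{1}{a!}\left(\frac{\lambda^2}{\mu}\right)^a.
\end{align*}
Set $\rho:=4\lambda_R^2/\mu_R=t_Rt_P/n^2\leq1$, using $t_R\leq t_P\leq n$. Then
\begin{align*}
    \sum_{a\geq k}4^a\frac{\poi_{\lambda_R}(a)^2}{\poi_{\mu_R}(a)}
    =e^{\mu_R-2\lambda_R}\sum_{a\geq k}\frac{\rho^a}{a!}
    \leq e^{\mu_R}\rho^k e^{\rho}.
\end{align*}
The prefactor is bounded because $\mu_R=t_R/t_P\leq1$ and $\lambda_R\geq0$, so $e^{\mu_R-2\lambda_R}\leq e$. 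Also $e^{\rho}\leq e$. Hence the one-coordinate tail starting at $k$ is at most $e^2\rho^k$, and this holds for $k=0,1,2$.

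Second, I reduce the two-coordinate constraint to one-coordinate tails.
\begin{itemize}
    \item For $m=0$, the sum is the product of two full sums, so it is $\lesssim 1$.
    \item For $m=1$, the event $|u|\geq1$ is contained in $\{u_1\geq1\}\cup\{u_2\geq1\}$. A union bound gives at most $2\cdot e^2\rho\cdot e^2\lesssim\rho$.
    \item For $m=2$, the event $|u|\geq2$ is contained in $\{u_1\geq2\}\cup\{u_2\geq2\}\cup\{u_1\geq1,u_2\geq1\}$. The three pieces contribute $e^2\rho^2\cdot e^2$, the same again, and $(e^2\rho)^2$. Together these give $\lesssim\rho^2$.
\end{itemize}
All summands are nonnegative, so bounding by the union is legitimate. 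This yields $\lesssim(t_Rt_P/n^2)^m$.

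The only point needing care is that the constants stay absolute. This relies on $\mu_R\leq1$, which comes from $t_R\leq t_P$, and on $\rho\leq1$, which comes from $t_P\leq n$. Both facts are recorded in the standing assumptions $100\leq t_Q,t_D\leq t_P\leq n$, so I expect no real obstacle. The lemma is essentially a direct computation once the Poisson ratio identity above is written down.
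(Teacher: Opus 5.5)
Your proof is correct. It shares the paper's core computation: the identity $\poi_{\lambda}(a)^2/\poi_{\mu}(a)=e^{\mu-2\lambda}(\lambda^2/\mu)^a/a!$, the rescaling $4^a(\lambda_R^2/\mu_R)^a=\rho^a$ with $\rho=t_Rt_P/n^2\leq 1$, and absolute constants obtained from $\mu_R\leq 1$ and $\rho\leq 1$. The difference is in how you handle the constraint $|u|\geq m$.

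The paper groups terms by the total $k=|u|$ and uses $\sum_{a+b=k}\frac{1}{a!\,b!}=\frac{2^k}{k!}$. This turns the two-coordinate sum exactly into the one-dimensional series $e^{2\mu_R-4\lambda_R}\sum_{k\geq m}(2\rho)^k/k!$. The constraint then restricts a single index, and one tail bound $e^{2\rho}(2\rho)^m/m!$ covers every $m$ at once.

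You instead keep the product structure and bound one-coordinate tails. You then cover $\{|u|\geq m\}$ by the product sets $\{u_1\geq j,\ u_2\geq m-j\}$ for $j=0,\ldots,m$, paying a union-bound factor of $m+1$. Your route avoids the convolution identity and is just as short for $m\leq 2$, but it needs a separate covering for each $m$. The resulting constants are comparable and do not matter for how the lemma is used.
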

\begin{proof}
A direct substitution of the Poisson probability formula gives
\begin{align*}
    \frac{\poi_{\lambda_R}(k)^2}{\poi_{\mu_R}(k)}
    =e^{\mu_R-2\lambda_R}\frac{(t_Rt_P/(4n^2))^k}{k!}.
\end{align*}
It follows that
\begin{align*}
    \sum_{|u|\geq m}4^{|u|}\prod_{i=1}^{2}
    \frac{\poi_{\lambda_R}(u_i)^2}{\poi_{\mu_R}(u_i)}
    &=e^{2\mu_R-4\lambda_R}\sum_{k\geq m}
      \left(\frac{t_Rt_P}{n^2}\right)^k
      \sum_{a+b=k}\frac{1}{a!b!}\\
    &=e^{2\mu_R-4\lambda_R}\sum_{k\geq m}\frac{(2t_Rt_P/n^2)^k}{k!}
    &&\left(\sum_{a+b=k}\frac{1}{a!b!}=\frac{2^k}{k!}\right)\\
    &\leq e^{2\mu_R-4\lambda_R+2t_Rt_P/n^2}\frac{(2t_Rt_P/n^2)^m}{m!}
    &&\bigl(k!\geq m!(k-m)!\bigr)\\
    &\lesssim\left(\frac{t_Rt_P}{n^2}\right)^m
    &&\bigl(t_R\leq t_P\leq n\bigr).
\end{align*}
\end{proof}

\begin{proof}[Proof of \cref{lem:pair-chi-square}]
By \cref{lem:one-pair-laws}, we have
\begin{align*}
    \mathbb P_0^{\mathrm{pair}}(x,y,z)-\mathbb P_1^{\mathrm{pair}}(x,y,z)
    &=(1-\gamma)L(x,y,z)B(y,z),\\
    \mathbb P_1^{\mathrm{pair}}(x,y,z)&\geq\gamma H(x,y,z).
\end{align*}
Substituting this into the definition
of $\chi^2$ gives
\begin{align*}
    \chi^2(\mathbb P_0^{\mathrm{pair}}\|\mathbb P_1^{\mathrm{pair}})
    &=\sum_{x,y,z}
       \frac{\bigl(\mathbb P_0^{\mathrm{pair}}(x,y,z)-\mathbb P_1^{\mathrm{pair}}(x,y,z)\bigr)^2}
            {\mathbb P_1^{\mathrm{pair}}(x,y,z)}\\
    &\leq\frac{(1-\gamma)^2}{\gamma}\sum_{x,y,z}
       \frac{L(x,y,z)^2}{H(x,y,z)}B(y,z)^2\\
       &\leq\frac{1}{\gamma}\sum_{x,y,z}
       \frac{L(x,y,z)^2}{H(x,y,z)}B(y,z)^2.
\end{align*}
By the definitions of $L$ and $H$,
\begin{align*}
    \frac{L(x,y,z)^2}{H(x,y,z)}
    =\prod_{i=1}^2
    \frac{\poi_{\lambda_P}(x_i)^2}{\poi_1(x_i)}
    \frac{\poi_{\lambda_Q}(y_i)^2}{\poi_{\mu_Q}(y_i)}
    \frac{\poi_{\lambda_D}(z_i)^2}{\poi_{\mu_D}(z_i)}.
\end{align*}
Since $B(y,z)$ does not depend on $x$, we can sum over the $P$-counts
first. Applying \cref{lem:two-coordinate-weights} with $R=P$ (and hence $\mu_R = 1$) and $m=0$ gives
\begin{align*}
    \sum_x\prod_{i=1}^2\frac{\poi_{\lambda_P}(x_i)^2}{\poi_1(x_i)}
    \leq\sum_x4^{|x|}\prod_{i=1}^2
    \frac{\poi_{\lambda_P}(x_i)^2}{\poi_1(x_i)}\lesssim1.
\end{align*}
Since $\gamma=\frac{t_P}{4n}$, we therefore obtain
\begin{align*}
    \chi^2(\mathbb P_0^{\mathrm{pair}}\|\mathbb P_1^{\mathrm{pair}})
    &\lesssim\frac n{t_P}\sum_{y,z}
       \left(\prod_{i=1}^2
       \frac{\poi_{\lambda_Q}(y_i)^2}{\poi_{\mu_Q}(y_i)}
       \frac{\poi_{\lambda_D}(z_i)^2}{\poi_{\mu_D}(z_i)}\right)B(y,z)^2.
\end{align*}

By \cref{lem:perturbation}, $B(y,z)=0$ unless $|z|\geq1$ and
$|y|+|z|\geq2$. Hence, if $|z|\geq2$, there is no restriction on $y$.
If $|z|=1$, we need $|y|\geq1$.
The same lemma gives $B(y,z)^2\lesssim\varepsilon^4 4^{|y|}4^{|z|}$.
After this substitution, the summand is a product of a function of $y$
and a function of $z$, so the two sums can be evaluated separately.
For $|z|\geq2$, \cref{lem:two-coordinate-weights} gives
\begin{align*}
    &\sum_{\substack{y,z\\|z|\geq2}}
       \left(\prod_{i=1}^2
       \frac{\poi_{\lambda_Q}(y_i)^2}{\poi_{\mu_Q}(y_i)}
       \frac{\poi_{\lambda_D}(z_i)^2}{\poi_{\mu_D}(z_i)}\right)B(y,z)^2\\
    &\qquad\lesssim\varepsilon^4
       \underbrace{\left(\sum_y4^{|y|}\prod_{i=1}^2
       \frac{\poi_{\lambda_Q}(y_i)^2}{\poi_{\mu_Q}(y_i)}\right)}_{\lesssim1\quad(m=0)}
       \underbrace{\left(\sum_{|z|\geq2}4^{|z|}\prod_{i=1}^2
       \frac{\poi_{\lambda_D}(z_i)^2}{\poi_{\mu_D}(z_i)}\right)}_{\lesssim(t_Dt_P/n^2)^2\quad(m=2)}\\
    &\qquad\lesssim\varepsilon^4\left(\frac{t_Dt_P}{n^2}\right)^2.
\end{align*}
For $|z|=1$ and $|y|\geq1$, every term is nonnegative, so enlarging
the $D$-sum from $|z|=1$ to $|z|\geq1$ can only increase it.
We can then apply \cref{lem:two-coordinate-weights} with $m=1$ to both sums:
\begin{align*}
    &\sum_{\substack{|y|\geq1\\|z|=1}}
       \left(\prod_{i=1}^2
       \frac{\poi_{\lambda_Q}(y_i)^2}{\poi_{\mu_Q}(y_i)}
       \frac{\poi_{\lambda_D}(z_i)^2}{\poi_{\mu_D}(z_i)}\right)B(y,z)^2\\
    &\qquad\lesssim\varepsilon^4
       \underbrace{\left(\sum_{|y|\geq1}4^{|y|}\prod_{i=1}^2
       \frac{\poi_{\lambda_Q}(y_i)^2}{\poi_{\mu_Q}(y_i)}\right)}_{\lesssim t_Qt_P/n^2\quad(m=1)}
       \underbrace{\left(\sum_{|z|\geq1}4^{|z|}\prod_{i=1}^2
       \frac{\poi_{\lambda_D}(z_i)^2}{\poi_{\mu_D}(z_i)}\right)}_{\lesssim t_Dt_P/n^2\quad(m=1)}\\
    &\qquad\lesssim\varepsilon^4\frac{t_Qt_P}{n^2}\frac{t_Dt_P}{n^2}.
\end{align*}
Adding these two bounds gives
\begin{align*}
    \chi^2(\mathbb P_0^{\mathrm{pair}}\|\mathbb P_1^{\mathrm{pair}})
    \lesssim\frac n{t_P}\varepsilon^4
    \left[\left(\frac{t_Dt_P}{n^2}\right)^2
    +\frac{t_Qt_P}{n^2}\frac{t_Dt_P}{n^2}\right] =\frac{\varepsilon^4t_Pt_D(t_Q+t_D)}{n^3}.
\end{align*}
\end{proof}

\subsection{Average-Case Lower Bound}

For later use, we state the following remark.

\begin{remark}[Average-case form of \eqref{inq:alfht-lb}]\label{rem:gap-avg}
We claim that the lower bound~\eqref{inq:alfht-lb} also holds if the tester is required to have error probability at most $\frac{1}{3}$ on average under each conditional prior $\Lambda_\beta(\cdot\mid E)$, where $\beta \in \{0,1\}$, $E$ is the high-probability event in \cref{lem:validity}, and $\Lambda_\beta$ are the priors specified in Section \ref{sec:lb-gap}.  The analysis is unchanged except for the Poissonization step in \cref{lem:poissonized-tester}.  For that step, let $e(I)$ be the error probability of the tester on a fixed instance $I$, and let $F$ be the sufficient-samples event from the proof of \cref{lem:poissonized-tester}. Conditional on $I$, the sample counts are independent of the retained sample values, and hence the contribution to the conditional error probability on the event $F$ satisfies
\begin{align*}
    \mathbb{E}_{I\sim\Lambda_\beta(\cdot\mid E)}
    \bigl[e(I)\Pr(F\mid I)\bigr]
    \leq
    \mathbb{E}_{I\sim\Lambda_\beta(\cdot\mid E)}[e(I)]
    \leq\frac13.
\end{align*}
Adding $\Pr(E^c)\leq1/25$ and $\Pr(F^c\mid E)<10^{-3}$  then gives the same error bound as before, namely $1/3+1/25+10^{-3}<2/5$, under each unconditioned prior. The rest of the proof in \cref{sec:lb-gap} is unchanged.
\end{remark}

\section{Non-Adaptive Distribution Clustering}\label{app:clustering}

This appendix gives the algorithms and proofs for the 2-clustering results
in \cref{sec:clustering}.

\subsection{Upper Bounds with One Known Distribution}\label{sec:case1}

This subsection proves the upper bounds with one known distribution summarized in \cref{sec:clustering}, using the model and quantities $r$, $T$, $T_{\mathrm{na}}$ defined therein. The matching lower bound is proved in \cref{app:clustering-lower-known}.  Recall from \cref{sec:clustering} and \cite{KPS26} that it suffices to consider the case that the known distribution is uniform, and accordingly denoted by $U$.

\subsubsection{Baseline}

Uniformity testing in the sparse regime was studied by \citet{P08}. We use the identity tester of \citet[Proposition~2.7]{DK16}, specialized to uniformity, which applies throughout our parameter range.

\begin{restatable}{theorem}{thmnaiveone}\label{thm:naive_1}
$T_{\mathrm{na}}(n, k, r, \varepsilon) = O\!\left(\frac{k\sqrt{n}\log k}{\varepsilon^2}\right)$ when one distribution is known.
\end{restatable}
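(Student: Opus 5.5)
The plan is the naive baseline: run a uniformity test on every one of the $k$ distributions independently, with a per-test error small enough that a union bound over all $k$ tests goes through. Concretely, each $D_i$ receives the same fixed allocation
\[
T_i = m := C\,\frac{\sqrt n}{\varepsilon^2}\log k,
\]
so the allocation is trivially non-adaptive and depends only on $n,k,\varepsilon$. On the samples of $D_i$ we run the identity tester of \citet[Proposition~2.7]{DK16} with $U$ as the known distribution, and assign $i$ to $\mathcal I_U$ if it accepts and to $\mathcal I_Q$ if it rejects. The total budget is $km=O\!\left(\frac{k\sqrt n\log k}{\varepsilon^2}\right)$, which is the claimed bound.

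The key step is to reach per-test error $\delta=\frac{1}{3k}$. At a constant error of $1/3$, uniformity testing needs $O(\sqrt n/\varepsilon^2)$ samples once $\sqrt n/\varepsilon^2\gtrsim 1/\varepsilon^2$; this always holds since $n\geq 1$, and it is what makes the tester applicable throughout our parameter range. We amplify by drawing $O(\log k)$ independent batches of size $O(\sqrt n/\varepsilon^2)$ from $D_i$, running the tester on each, and taking a majority vote. A Chernoff bound then gives error at most $e^{-\Omega(\log k)}\leq \frac{1}{3k}$ for a suitable constant $C$. Under the promise, each $D_i$ is either exactly $U$ (completeness) or equals $Q$ with $\tv(Q,U)\geq\varepsilon$ (soundness), so each individual test decides correctly with probability at least $1-\frac{1}{3k}$.

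A union bound over the $k$ tests shows that all classifications are simultaneously correct with probability at least $2/3$. When this happens the output partition is exactly $\{\mathcal I_U,\mathcal I_Q\}$; the non-empty cluster promise is not even needed. The only real point to check is that the cited tester's guarantee holds for every $n$ and $\varepsilon$ in range, including sparse $n$, and that median/majority amplification gives the $\log k$ factor rather than something worse. Both are standard, so I expect no serious obstacle.
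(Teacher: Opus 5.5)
Your proposal is correct and matches the paper's proof: both run the identity tester of \citet[Proposition~2.7]{DK16} against $U$ on each of the $k$ inputs, amplify to per-test error $O(1/k)$ at a cost of $O(\sqrt n\log k/\varepsilon^2)$ samples each, and finish with a union bound. The only differences are cosmetic: you target error $\frac{1}{3k}$ rather than the paper's $\frac{1}{9k}$, and you spell out the majority-vote amplification that the paper leaves implicit.
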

\begin{proof}
	Since the known distribution is $U$, each input distribution satisfies $D_i = U$ or $\tv(D_i, U) \geq \varepsilon$. Run this uniformity test on each input distribution at per-test error $\frac{1}{9k}$, costing $O\!\left(\frac{\sqrt n \log k}{\varepsilon^2}\right)$ samples per input distribution, and assign $i$ to $\mathcal{I}_U$ or $\mathcal{I}_Q$ by the outcome. A union bound over the $k$ input distributions gives success $\geq \frac{8}{9}$, for a total of $O\!\left(\frac{k\sqrt n \log k}{\varepsilon^2}\right)$.
\end{proof}

\subsubsection{Improved Upper Bound}

With the known distribution being $U$, the task is to find the hidden partition
$\{\mathcal{I}_U, \mathcal{I}_Q\}$. \Cref{alg:case1} implements the two stages sketched in \cref{sec:clustering}.

\begin{algorithm2e}[h!]
\caption{Non-adaptive 2-clustering with one known distribution.}
\label{alg:case1}
\LinesNumbered
\KwIn{Parameters $s_0, s_1, s_2 \in \mathbb{N}$.}
\KwOut{Partition of $[k]$.}
Set $\ell=\lceil18\log(9k)\rceil$\;
Pick a uniformly random subset $S \subset [k]$ with $|S| = \min\left\{k,\ \left\lceil \frac{3k}{r} \right\rceil\right\}$\;
Draw $s_0 + \ell(s_1 + s_2)$ samples from each $i \in S$ and $\ell s_2$ samples from each
$i \in [k] \setminus S$\;
\For{$i \in S$}{
  Run a uniformity test on the first $s_0$ samples from $i$, with per-test error
  $\frac{1}{9|S|}$\;
}
Let $i^*$ be any rejected distribution if one exists. Otherwise, return an arbitrary
size-$(r, k-r)$ partition (or alternatively declare failure)\;
\For{$i \in [k]$}{
  Run asymmetric LFHT for $(U,D_{i^*})$ in $\ell$ independent repetitions, each using samples generated from $U$ and disjoint batches of $s_1$ samples from $i^*$ and $s_2$ from $D_i$, and classify $D_i$ by majority vote\;
}
\Return the partition
\end{algorithm2e}

\begin{restatable}{theorem}{thmsubsetcone}\label{thm:subset-c1}
For $r \in \left[1, \frac{k}{2}\right]$, Algorithm~\ref{alg:case1} solves 2-clustering
with success probability $\geq 2/3$ using
\begin{align*}
T = \begin{cases}
\widetilde O\!\left(\dfrac{k\sqrt{n}}{\sqrt{r}\,\varepsilon^2}\right) & \text{if } n \gtrsim r\log k, \\
\widetilde O\!\left(\dfrac{k\log k}{\varepsilon^2}\right) & \text{if } n \lesssim r\log k
\end{cases}
\end{align*}
samples.
\end{restatable}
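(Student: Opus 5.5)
We will prove \cref{thm:subset-c1} by choosing $s_0,s_1,s_2$ and bounding three failure events: Stage 1 finds no exemplar; some uniformity test errs; some majority vote in Stage 2 errs. The small-$n$ branch $n\lesssim r\log k$ will be handled by the baseline \cref{thm:naive_1}. When $n\lesssim r\log k$ we have $\sqrt n\lesssim\sqrt{r\log k}$, but this does not by itself give the claimed bound. Instead, in that regime we will still run \cref{alg:case1} with $s_1\asymp n/\varepsilon^2$ and $s_2\asymp 1/\varepsilon^2$. We then check that the Stage 1 cost $\frac{k}{r}\cdot\frac{n}{\varepsilon^2}\lesssim\frac{k\log k}{\varepsilon^2}$ (up to the factor $\ell$), so that the Stage 2 cost $k\ell/\varepsilon^2$ dominates. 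Hence it is enough to fix the parameters and check feasibility via \cref{thm:char}.

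\textbf{Stage 1.} The set $S$ has size $\lceil 3k/r\rceil$ (or $S=[k]$). Since $|\mathcal I_Q|$ and $|\mathcal I_U|$ are both at least $r$, the probability that $S$ misses $\mathcal I_Q$ is at most $(1-r/k)^{3k/r}\le e^{-3}<1/9$ by sampling without replacement. We take $s_0\asymp\sqrt n\log(k/r)/\varepsilon^2$, so that each uniformity test of \citet[Proposition~2.7]{DK16}, amplified to error $\frac1{9|S|}$, succeeds. A union bound over $S$ then shows that with probability at least $8/9$ every test is correct, and any rejected $i^*$ lies in $\mathcal I_Q$. The Stage 1 cost is $|S|\,s_0=\widetilde O(k\sqrt n/(r\varepsilon^2))$, which is dominated by the claimed bound.

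\textbf{Stage 2.} This is the main step. Conditional on $D_{i^*}=Q$, each repetition is an asymmetric LFHT instance in which $P=U$ is known, so $s_P$ may be taken arbitrarily large (say $s_P\ge n$, giving $\alpha=1$), with $s_Q=s_1$ and $s_D=s_2$. By \cref{thm:char}, a single run has error at most $1/3$ provided that, up to polylogarithmic factors,
\[
s_2\gtrsim\frac{1}{\varepsilon^2},\qquad s_2(s_1+s_2)\gtrsim\frac{n}{\varepsilon^4},
\]
and the condition $s_P\gtrsim\sqrt n/\varepsilon^2$ is free. The $\ell=\Theta(\log k)$ repetitions are independent because they use disjoint sample batches. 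By Hoeffding, the majority vote then errs with probability at most $\frac1{9k}$, and a union bound over all $k$ classifications gives probability at least $8/9$. The repetitions must use disjoint exemplar batches of size $s_1$, which is why $i^*$ draws $\ell s_1$ samples. Since the exemplar budget must be paid by all of $S$, the total cost is
\[
T\asymp \ell\Big(\frac{k}{r}(s_0+s_1)+k\,s_2\Big).
\]

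\textbf{Optimization.} We minimize $\frac{k}{r}s_1+ks_2$ subject to $s_1s_2\gtrsim n/\varepsilon^4$ and $s_2\gtrsim1/\varepsilon^2$. Balancing the two terms gives $s_1=\sqrt{nr}/\varepsilon^2$ and $s_2=\sqrt{n/r}/\varepsilon^2$, for a total of $\widetilde O(k\sqrt n/(\sqrt r\varepsilon^2))$. This choice is valid when $s_2\ge1/\varepsilon^2$, i.e.\ $n\gtrsim r$. When $n\lesssim r\log k$ we instead set $s_2=1/\varepsilon^2$ and $s_1=n/\varepsilon^2$. The total is then $\ell\big(\frac{kn}{r\varepsilon^2}+\frac{k}{\varepsilon^2}\big)=\widetilde O(k\log k/\varepsilon^2)$, since $n/r\lesssim\log k$ here.

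The main obstacle is applying \cref{thm:char} with a known $P$ and using $U$-samples in its tester. This requires simulating arbitrarily many $U$-samples and confirming that the upper bound does not charge the $P$-budget to the total. The other delicate point is the overall failure budget $1/9+1/9+1/9\le1/3$, which also has to absorb the constant probability that $S$ misses $\mathcal I_Q$.
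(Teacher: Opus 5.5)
Your proposal is correct and follows essentially the paper's proof: the same coverage and union-bound argument for Stage~1, the same use of \cref{thm:char} with $U$ as a freely simulated, heavily sampled reference (so $\alpha=1$ and only $s_2\gtrsim 1/\varepsilon^2$, $s_2(s_1+s_2)\gtrsim n/\varepsilon^4$ remain), the same balancing $s_1=rs_2$ with $s_2\asymp\sqrt n/(\sqrt r\,\varepsilon^2)$, and the same $1/3$ failure budget. The only deviation is your small-$n$ choice $s_1=n/\varepsilon^2$ in place of the paper's $s_1=r/\varepsilon^2$, and both work; two presentational fixes: drop your opening sentence deferring the small-$n$ branch to \cref{thm:naive_1}, which contradicts what you then do (that baseline can exceed the claim by a $\sqrt r$ factor), and relabel the term $\frac{k}{r}\cdot\frac{n}{\varepsilon^2}$, which is the Stage~2 exemplar reservation rather than the Stage~1 cost.
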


\begin{proof}
Stage~1 uses $s_0=O(\sqrt n\log k/\varepsilon^2)$ samples per member of $S$,
for a total of $\widetilde O(k\sqrt n/(r\varepsilon^2))$. With probability at
least $7/9$, the subset contains a member of $\mathcal{I}_Q$ and all its
uniformity tests are correct, so $i^*\in\mathcal{I}_Q$.

For Stage~2, use $U$ as the more heavily sampled reference in
\cref{thm:char}. Since $U$ is known, its samples can be generated directly
without using the input-distribution budget. Generating enough of them
makes $\alpha=1$ and satisfies the condition on the larger reference budget.
By \cref{thm:char}, it suffices to take any pair $(s_1,s_2)$ satisfying the following and inflate it by a polylogarithmic factor:
\begin{align*}
    s_2\gtrsim\frac1{\varepsilon^2},\qquad
    s_2(s_1+s_2)\gtrsim\frac n{\varepsilon^4}.
\end{align*}
Accordingly, we choose
\begin{align*}
    s_2=\widetilde\Theta\!\left(
        \max\left\{\frac1{\varepsilon^2},
                    \frac{\sqrt n}{\sqrt r\,\varepsilon^2}\right\}\right),
    \qquad s_1=r s_2.
\end{align*}
The two terms in the maximum ensure the conditions $s_2\gtrsim1/\varepsilon^2$ and
$s_1s_2=r s_2^2\gtrsim n/\varepsilon^4$ prior to the polylogarithmic increase, and the $\widetilde\Theta$ notation captures the required increase.

The $\ell s_1$ samples must be reserved for every member of $S$, since $i^*$
is selected only after sampling. Since $\ell=\Theta(\log k)$, Stage~2 costs
\begin{align*}
    \ell\bigl(|S|s_1+ks_2\bigr)\asymp\ell\left(\frac{k}{r}s_1+ks_2\right)
    \asymp\ell ks_2
    =\widetilde O\!\left(\frac{k\sqrt n}{\sqrt r\,\varepsilon^2}
                      +\frac{k}{\varepsilon^2}\right).
\end{align*}
This also covers Stage~1 since $r\geq1$, and gives the two stated regimes.
The Stage~2 batches are independent of the samples used to choose $i^*$.
For each classification, the $\ell$ repetitions use disjoint samples and
independent internal randomness. Hoeffding's inequality bounds the
majority-vote error by $e^{-\ell/18}\leq1/(9k)$. The reference batches may
be reused across classifications. A union bound over the $k$
classifications gives Stage~2 failure at most $1/9$, and total failure
at most $2/9+1/9=1/3$.
\end{proof}

\subsection{Upper Bounds with Both Distributions Unknown}\label{sec:case2}

The non-adaptive setting is as in \cref{sec:clustering}. Here both cluster distributions $P, Q$ are
unknown, and the task is to find the hidden partition $\{\mathcal{I}_P, \mathcal{I}_Q\}$. We prove the upper bounds here and the matching lower bound in \cref{app:clustering-lower-unknown}.

\subsubsection{Baseline}

Here the analog of \cref{thm:naive_1} is given as follows.

\begin{restatable}{theorem}{thmnaivetwo}\label{thm:naive_2}
$T_{\mathrm{na}}(n, k, r, \varepsilon) = O\!\left(k\log k \cdot \max\!\left(\tfrac{n^{2/3}}{\varepsilon^{4/3}},\,\tfrac{\sqrt{n}}{\varepsilon^2}\right)\right)$
when both distributions are unknown.
\end{restatable}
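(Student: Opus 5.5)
The approach mirrors \cref{thm:naive_1}: classify each input distribution independently, here against a pivot, using the unequal-samples closeness tester of \citet[Proposition~2.11]{DK16}. Fix index $1$ as the pivot; since both clusters are nonempty, $D_1$ is either $P$ or $Q$, and every other $D_i$ either equals $D_1$ or is at TV distance at least $\varepsilon$ from it. So it suffices to run a closeness test between $D_1$ and each $D_i$, $i\neq 1$, at per-test error $\frac{1}{9k}$. We then put $i$ in the same part as $1$ exactly when the test accepts. A union bound over the $k-1$ tests shows that the returned partition is correct with probability at least $8/9$.

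For the sample count, we give every distribution the same budget $s=\Theta\bigl(\max\{n^{2/3}/\varepsilon^{4/3},\sqrt n/\varepsilon^2\}\bigr)$ per test, which is the equal-samples closeness-testing complexity. This satisfies both conditions in~\eqref{eq:baselineB}. The first condition, $\min\gtrsim\sqrt n/\varepsilon^2$, holds directly. The second condition, $s^3\gtrsim n^2/\varepsilon^4$, holds because $s\gtrsim n^{2/3}/\varepsilon^{4/3}$. Hence each test has error at most $1/3$.

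To reach error $\frac{1}{9k}$, we amplify by majority vote over $\ell=\lceil 18\log(9k)\rceil$ independent repetitions. Each repetition uses fresh disjoint batches of $s$ samples from $D_i$ and from the pivot. Hoeffding's inequality then gives error at most $e^{-\ell/18}\leq\frac1{9k}$, exactly as in \cref{thm:subset-c1}. The pivot's batches can be reused across all $i$, because the union bound does not need independence between different tests. So the pivot needs only $\ell s$ samples, not $k\ell s$.

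The allocation is fixed in advance ($\ell s$ samples from each of the $k$ distributions), so the procedure is non-adaptive. The total is $k\ell s=O\bigl(k\log k\cdot\max\{n^{2/3}/\varepsilon^{4/3},\sqrt n/\varepsilon^2\}\bigr)$. The only point needing care is checking that the DK16 guarantee covers the whole range $\varepsilon\in(0,1]$ and all $n$ with the stated max-form budget, including the sparse and dense regimes. I expect this to follow directly from~\eqref{eq:baselineB} as used in Case~1 of \cref{sec:ub}.
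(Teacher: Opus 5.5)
Your proposal is correct and follows essentially the same route as the paper's proof: fix a pivot, run a closeness test between the pivot and every other distribution at per-test error $\frac{1}{9k}$ while reusing the pivot's samples across tests, and apply a union bound. The only cosmetic difference is the citation. The paper invokes the equal-sample closeness tester of \citet{CDVV14} directly, whereas you instantiate the unequal-sample tester of \citet{DK16} with equal budgets and check the conditions in \eqref{eq:baselineB}; both give the same per-test rate.
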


\begin{proof}
    Fix an arbitrary index $i_A \in [k]$, and let its distribution $D_{i_A}$ be the one given more samples, recalling that we refer to $i_A$ as the pivot.  By assumption in the problem formulation, the other $k-1$ distributions all satisfy $D_i = D_{i_A}$ or $\tv(D_i, D_{i_A}) \geq \varepsilon$. Run a closeness test between $D_{i_A}$ and each $D_i$ \citep{CDVV14} at per-test error $\frac{1}{9k}$, costing $O\!\left(\max\!\left(\frac{n^{2/3}}{\varepsilon^{4/3}},\, \frac{\sqrt n}{\varepsilon^2}\right)\log k\right)$ samples per test (the pivot's samples are drawn once and reused), and cluster $[k]$ according to the outcomes. A union bound over the $k - 1$ tests gives success $\geq \frac{8}{9}$, for a total of $O\!\left(k\log k \max\!\left(\frac{n^{2/3}}{\varepsilon^{4/3}},\, \frac{\sqrt n}{\varepsilon^2}\right)\right)$.
\end{proof}

\subsubsection{Improved Upper Bounds}\label{sec:case2-improved}

We improve on the baseline bound with \cref{alg:case2}, which implements the two stages sketched in \cref{sec:clustering}. A random size-$\Theta\!\left(\frac{k}{r}\right)$ subset $S$ supplies an exemplar of each cluster, an element $i_A$ of $S$ serving as the pivot, and asymmetric LFHT then classifies all $k$ distributions against the two exemplars.

Stage~1 uses the closeness tester of~\citet{DK16} with unequal sample sizes, which we denote by $\UST$. Given $m$ samples from one distribution and $m'\leq m$ from another, it decides whether the two are equal or at total variation distance at least $\varepsilon$, with error at most $\frac13$, provided that
\begin{align*}
m'\gtrsim\frac{\sqrt n}{\varepsilon^2}\qquad\text{and}\qquad m'^2m\gtrsim\frac{n^2}{\varepsilon^4}.
\end{align*}
These are the conditions in~\eqref{eq:baselineB}. Taking the majority of $O(\log k)$ independent runs reduces the error to $\frac{1}{9|S|}$, and we will use this guarantee in the proof below.

\begin{algorithm2e}[h!]
\caption{Non-adaptive 2-clustering with both distributions unknown.}
\label{alg:case2}
\LinesNumbered
\KwIn{Parameters $s_a, s_b, s_P, s_Q, s_D \in \mathbb{N}$.}
\KwOut{Partition of $[k]$.}
Set $\ell=\lceil18\log(9k)\rceil$\;
Pick a uniformly random subset $S \subset [k]$ with $|S| = \min\left\{k,\ \left\lceil \frac{3k}{r} \right\rceil\right\}$, and let $i_A$ be any element of $S$\;
Draw $s_a + \ell(s_P + s_D)$ samples from $i_A$, $s_b + \ell(s_Q + s_D)$ samples from each
$i \in S \setminus \{i_A\}$, and $\ell s_D$ samples from each $i \in [k] \setminus S$\;
\For{$i \in S \setminus \{i_A\}$}{
  Run $\UST$ for $(D_{i_A}, D_i)$ using the $s_a$ samples from $i_A$ as sketch and the
  $s_b$ samples from $i$ as test, at per-test error $\frac{1}{9|S|}$\;
}
Let $i_B$ be any $i \in S \setminus \{i_A\}$ whose test rejected if one exists.
Otherwise return an arbitrary size-$(r, k-r)$ partition (or alternatively declare failure)\;
\For{$i \in [k]$}{
  Run asymmetric LFHT for $(D_{i_A},D_{i_B})$ in $\ell$ independent repetitions, each using disjoint batches of $s_P$ samples from $i_A$, $s_Q$ from $i_B$, and $s_D$ from $D_i$, and classify $D_i$ by majority vote\;
}
\Return the partition
\end{algorithm2e}

\begin{restatable}{theorem}{thmsubsetctwo}\label{thm:subset-c2}
For $r \in \left[1, \frac{k}{2}\right]$, Algorithm~\ref{alg:case2} solves 2-clustering
with success probability $\geq 2/3$ using
\begin{align*}
T = \begin{cases}
\widetilde O\!\left(\dfrac{k^{2/3}\, n^{2/3}}{r^{1/3}\,\varepsilon^{4/3}}\right) & \text{if } n \gtrsim \dfrac{k^2}{r\varepsilon^4}, \\
\widetilde O\!\left(\dfrac{k\sqrt{n}}{\sqrt{r}\,\varepsilon^2}\right) & \text{if } r\log k \lesssim n \lesssim \dfrac{k^2}{r\varepsilon^4}, \\
\widetilde O\!\left(\dfrac{k\log k}{\varepsilon^2}\right) & \text{if } n \lesssim r\log k
\end{cases}
\end{align*}
samples.
\end{restatable}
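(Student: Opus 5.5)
The plan follows the proof of \cref{thm:subset-c1}: first choose the parameters $(s_a,s_b,s_P,s_Q,s_D)$ of \cref{alg:case2}, then bound the failure probability of each stage, then sum the costs.

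\textbf{Correctness.} Since $|S|=\lceil 3k/r\rceil$ and the smaller cluster has size $r$, the random subset $S$ contains a member of each cluster with probability at least $8/9$. This holds for either cluster identity of the pivot $i_A$. On this event, we run $\UST$ at per-test error $\frac{1}{9|S|}$ and take a union bound over $S$. With further probability $8/9$, every test is correct, so $i_B$ exists and lies in the other cluster, giving $\tv(D_{i_A},D_{i_B})\geq\varepsilon$.

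Stage 2 then applies \cref{thm:char} with $P=D_{i_A}$, $Q=D_{i_B}$, and target $D_i$. Its batches are independent of Stage 1. We use $\ell$ majority-vote repetitions, and Hoeffding plus a union bound over the $k$ classifications give failure at most $1/9$. This matches the argument for \cref{thm:subset-c1}, for total failure at most $1/3$.

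\textbf{Cost and parameter choices.} The total is
\[
T\asymp s_a+\frac{k}{r}s_b+\ell\Bigl(s_P+\frac{k}{r}s_Q+ks_D\Bigr),
\]
because $s_Q$ must be reserved for every member of $S$. The constraints come from two sources. $\UST$ requires $s_b\gtrsim\sqrt n/\varepsilon^2$ and $s_b^2s_a\gtrsim n^2/\varepsilon^4$, taking $s_a\geq s_b$. \Cref{thm:char} requires $s_D\gtrsim1/\varepsilon^2$, $s_P\gtrsim\sqrt{n\alpha}/\varepsilon^2$, and $s_D(s_Q+s_D)\gtrsim n\alpha/\varepsilon^4$, with $\alpha=\max\{1,n/\max(s_P,s_D)\}$, all up to polylogarithmic factors. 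I will balance the two stages separately and treat the three regimes.

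\emph{Regime $n\gtrsim k^2/(r\varepsilon^4)$.} Take $s_a=s_P=\tau$ with $\tau\asymp k^{2/3}n^{2/3}/(r^{1/3}\varepsilon^{4/3})$ (times logs), and $s_b=s_Q=r\tau/k$.
\begin{itemize}
\item The cost is $\widetilde O(\tau)$.
\item Closeness: $s_b^2s_a=r^2\tau^3/k^2\asymp rn^2/\varepsilon^4\geq n^2/\varepsilon^4$. Also $s_b=r\tau/k\gtrsim\sqrt n/\varepsilon^2$, because this is equivalent to $r^{2/3}n^{1/6}\gtrsim k^{1/3}\varepsilon^{-2/3}$, which the regime condition implies.
\item LFHT: check whether $\tau\leq n$. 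If so, $\alpha=n/\tau$, and we need $s_Ds_Q\gtrsim n^2/(\tau\varepsilon^4)$, i.e.\ $s_D\gtrsim kn^2/(r\tau^2\varepsilon^4)\asymp \tau/k$, so the $ks_D$ term also costs $\tau$. We further need $s_P\gtrsim\sqrt{n\alpha}/\varepsilon^2=n/(\sqrt\tau\varepsilon^2)$, i.e.\ $\tau^{3/2}\gtrsim n/\varepsilon^2$, which the regime condition gives again. If instead $\tau>n$, then $\alpha=1$ and the constraints are weaker.
\end{itemize}

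\emph{Regime $r\log k\lesssim n\lesssim k^2/(r\varepsilon^4)$.} The per-distribution term dominates. Take $s_D\asymp \sqrt n/(\sqrt r\varepsilon^2)$, $s_Q=s_b=rs_D$, and $s_P=s_a\asymp\max\{n, k s_D\}$, so that $\alpha=1$ and the pivot costs at most $\widetilde O(ks_D)$.
\begin{itemize}
\item LFHT: $s_Qs_D=rs_D^2\asymp n/\varepsilon^4$, as required.
\item Closeness: $s_b=\sqrt{rn}/\varepsilon^2\gtrsim\sqrt n/\varepsilon^2$. For $s_b^2s_a\gtrsim n^2/\varepsilon^4$ it suffices that $s_a\gtrsim n/(r\varepsilon^4)\cdot(1/\,\cdot)$; with $s_a\geq n$ this reduces to $rn/\varepsilon^4\cdot n\geq n^2/\varepsilon^4$, which holds.
\item Cost of $s_P=n$: we need $n\lesssim k\sqrt n/(\sqrt r\varepsilon^2)$, i.e.\ $n\lesssim k^2/(r\varepsilon^4)$, which is exactly the regime boundary.
\end{itemize}

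\emph{Regime $n\lesssim r\log k$.} Take $s_D\asymp 1/\varepsilon^2$ and $s_Q$ so that $s_Qs_D\gtrsim n/\varepsilon^4$. This gives $(k/r)s_Q\lesssim k/\varepsilon^2$.

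\textbf{Main obstacle.} The delicate step is the first regime. There I must verify every side condition of \cref{thm:char} (the value of $\alpha$, and $s_P$ versus $\sqrt{n\alpha}/\varepsilon^2$) and the $\UST$ threshold $s_b\gtrsim\sqrt n/\varepsilon^2$ across sub-cases $\tau\lessgtr n$. I also need to make sure the polylogarithmic inflation required by \cref{thm:char} and the $\ell$ repetitions only cost $\widetilde O$ factors. Where a side condition fails at a regime boundary, I would fall back on the adjacent regime's bound, since the maximum of the three terms is what is claimed.
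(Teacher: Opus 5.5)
Your overall structure matches the paper's. The Stage~1/Stage~2 correctness argument and the cost accounting are the same. In regimes~1--2 your Stage~2 budgets agree up to constants with the paper's unified choice $s_D=\widetilde\Theta(\max\{1/\varepsilon^2,\sqrt n/(\sqrt r\,\varepsilon^2),(n^2/(rk\varepsilon^4))^{1/3}\})$, $s_Q=rs_D$, $s_P=ks_D$. (In regime~2, $\max\{n,ks_D\}\asymp ks_D$ exactly because $n\lesssim k^2/(r\varepsilon^4)$.) Setting $s_a=s_P$ and $s_b=s_Q$ in Stage~1 is a harmless variation.

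However, one step in regime~1 is wrong. You claim that if $\tau>n$ then $\alpha=1$ ``and the constraints are weaker.'' The opposite is true. Since $n\alpha=\max\{n,n^2/s_P\}\geq n$, for $s_P=\tau>n$ the third condition of \cref{thm:char} becomes $s_Qs_D\gtrsim n/\varepsilon^4$. That is stronger than the requirement $s_Qs_D\gtrsim n^2/(\tau\varepsilon^4)$ you verified. Your choices give $s_Qs_D=r\tau^2/k^2\asymp n^2/(\tau\varepsilon^4)$, so the stronger condition would fail whenever $\tau\gg n$.

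What rescues the argument is that, for the un-inflated $\tau$, $\tau\leq n$ is equivalent to $n\geq k^2/(r\varepsilon^4)$. So if you check the conditions of \cref{thm:char} before the polylogarithmic inflation, as the paper does, the regime condition excludes the bad sub-case. This is where the regime condition really enters Stage~2. It does not enter through $s_P\gtrsim\sqrt{n\alpha}/\varepsilon^2$, which reduces to $k^2\gtrsim r$ and always holds.

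The paper avoids the case split by putting $\sqrt n/(\sqrt r\,\varepsilon^2)$ into the maximum defining $s_D$. Then $rs_D^2\gtrsim n/\varepsilon^4$ and $krs_D^3\gtrsim n^2/\varepsilon^4$ hold in every regime. Together they give $s_Qs_D\gtrsim\max\{n,n^2/s_P\}/\varepsilon^4=n\alpha/\varepsilon^4$, whatever $\alpha$ is.

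Separately, regime~3 is incomplete. You must also take $s_P\gtrsim\max\{n,s_Q,\sqrt n/\varepsilon^2\}$; otherwise $\alpha>1$ and $s_Qs_D\gtrsim n/\varepsilon^4$ does not suffice. You also need to specify Stage~1, for example $s_b\asymp\sqrt n/\varepsilon^2$ and $s_a\asymp\max\{n,s_b\}$. All of these costs are $\widetilde O(k/\varepsilon^2)$ once you use $n\lesssim r\log k$. Note that $(k/r)s_Q\asymp kn/(r\varepsilon^2)$ is $O(k\log k/\varepsilon^2)$, not $O(k/\varepsilon^2)$.

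Finally, in regime~1 you skipped the check $s_D=\tau/k\gtrsim1/\varepsilon^2$. It reduces to $n^2\gtrsim rk/\varepsilon^2$, which follows from the regime condition.
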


\begin{proof}
In Stage~1, choose
\begin{align*}
s_b=\widetilde\Theta\!\left(\max\left\{\frac{\sqrt n}{\varepsilon^2},\ \left(\frac{n^2r}{k\varepsilon^4}\right)^{1/3}\right\}\right),
\qquad s_a=\frac{k}{r}\,s_b,
\end{align*}
where the logarithmic factor covers the repetitions needed for error $\frac{1}{9|S|}$. Since $s_a\geq s_b$, the guarantee of $\UST$ requires $s_b\gtrsim\frac{\sqrt n}{\varepsilon^2}$ and $s_b^2s_a=\frac{k}{r}s_b^3\gtrsim\frac{n^2}{\varepsilon^4}$, which both hold. The Stage~1 cost is
\begin{align*}
s_a + |S|\, s_b \asymp \frac{k}{r}\, s_b = \widetilde O\!\left(\max\!\left(\left(\frac{nk}{r\varepsilon^2}\right)^{2/3},\; \frac{k\sqrt n}{r\varepsilon^2}\right)\right).
\end{align*}
Unless $S = [k]$, we have $|S| \geq \frac{3k}{r}$, and each cluster has at least $r$ members, so a given cluster has no member in $S$ with probability at most $\left(1 - \frac{r}{k}\right)^{|S|} \leq e^{-3} < \frac{1}{18}$. Hence $S$ contains a member of each cluster except with probability $\frac{1}{9}$, and a union bound over the $|S|-1$ tests, each at error $\frac{1}{9|S|}$, gives total Stage~1 failure $\leq \frac{2}{9}$. On Stage~1 success, some $i \in S \setminus \{i_A\}$ lies in the cluster opposite $i_A$ and its test rejects, while every same-cluster test accepts, so $i_B$ is defined and lies in the opposite cluster.

In Stage~2, apply \cref{thm:char} with the pivot $i_A$ as the more heavily
sampled reference, so one run uses sample sizes $(s_P,s_Q,s_D)$.
Choose
\begin{align*}
    s_D&=\widetilde\Theta\!\left(\max\left\{
        \frac1{\varepsilon^2},\quad
        \frac{\sqrt n}{\sqrt r\,\varepsilon^2},\quad
        \left(\frac{n^2}{rk\varepsilon^4}\right)^{1/3}
        \right\}\right),\\
    s_Q&=r s_D,\qquad s_P=k s_D,
\end{align*}
where the polylogarithmic factor in $\widetilde\Theta$ is the one required by \cref{thm:char}.
The second and third terms in the maximum give the following conditions (before the required logarithmic increase):
\begin{align*}
    s_Qs_D=r s_D^2\gtrsim\frac n{\varepsilon^4},\qquad
    s_Ps_Qs_D=kr s_D^3\gtrsim\frac{n^2}{\varepsilon^4},
\end{align*}
which together imply $s_Qs_D\gtrsim n\alpha/\varepsilon^4$ for
$\alpha=\max\{1,n/s_P\}$. Moreover, $s_P\geq s_Q\geq s_D$ gives
$s_P^2\geq s_Qs_D\gtrsim n\alpha/\varepsilon^4$, and the first term
ensures $s_D\gtrsim1/\varepsilon^2$. The LFHT test analyzed in
\cref{sec:ub} therefore has error probability at most $1/3$ with these budgets.

Fix the choices and samples from Stage~1, and suppose it succeeds.
Then $i_A$ and $i_B$ belong to different clusters. The reserved Stage~2
samples are independent of the Stage~1 samples. For each input $D_i$,
the $\ell$ LFHT runs use disjoint batches and independent internal
randomness. Hoeffding's inequality therefore bounds the majority-vote
error by $e^{-\ell/18}\leq1/(9k)$. We use the same reference samples
for all $k$ inputs. By a union bound, the probability of any wrong
classification is at most $k\cdot1/(9k)=1/9$.

We must allocate samples before choosing $i_B$, so we reserve
$\ell s_Q$ reference samples for
every candidate in $S\setminus\{i_A\}$. Since $\ell=\Theta(\log k)$, the
Stage~2 cost is
\begin{align*}
    \ell\bigl(s_P+(|S|-1)s_Q+ks_D\bigr)
    &\asymp\ell\left(s_P+\frac{k}{r}s_Q+ks_D\right)
    \asymp\ell k s_D\\*
    &=\widetilde O\!\left(
        \frac{k^{2/3}n^{2/3}}{r^{1/3}\varepsilon^{4/3}}
        +\frac{k\sqrt n}{\sqrt r\,\varepsilon^2}
        +\frac{k}{\varepsilon^2}\right).
\end{align*}
Since $r\geq1$, the Stage~1 terms satisfy
\begin{align*}
    \left(\frac{nk}{r\varepsilon^2}\right)^{2/3}
    &\leq\frac{k^{2/3}n^{2/3}}{r^{1/3}\varepsilon^{4/3}},\qquad
    \frac{k\sqrt n}{r\varepsilon^2}
    \leq\frac{k\sqrt n}{\sqrt r\,\varepsilon^2}.
\end{align*}
Thus Stage~1 is covered by the same bound. Comparing the first two terms
gives the threshold $n=k^2/(r\varepsilon^4)$. For $n\lesssim r\log k$,
all three terms are $\widetilde O(k/\varepsilon^2)$. This gives the stated regimes. The total
failure probability is at most $2/9+1/9=1/3$.
\end{proof}

\subsection{Lower Bounds}\label{app:clustering-lower}

The two lower bounds use a common reduction, \cref{lem:na-reduction}, from a clustering algorithm to an LFHT tester. A clustering algorithm with total budget $T$ gives an LFHT tester with $s_P=T$, $s_Q=O(\frac{Tr}{k})$, and $s_D=O(\frac{T}{k})$. We then apply the LFHT lower bounds with one known distribution and with both unknown.

We may assume that the clustering algorithm succeeds with probability at least $\frac{99}{100}$.  For instance, repeating it a constant number of times and taking the majority output achieves this, multiplies $T$ by a constant, and keeps the algorithm non-adaptive.

\subsubsection{The Reduction from LFHT}

To turn a clustering algorithm into an LFHT tester, we place the unknown distribution $D$ at a lightly sampled index and use the output partition to distinguish $D=P$ from $D=Q$. This follows the ideas of~\citet[Appendix~E.2, Lemmas~22--23]{KPS26}.

We order the indices by expected number of samples. The tester places copies of $Q$ at the first $r-1$ indices, the unknown distribution $D$ at one more index $j$ with few expected samples, and copies of $P$ everywhere else. If $D=Q$, the small cluster is the first $r-1$ indices together with $j$, and the algorithm returns this partition with high probability. If $D=P$, the instance has only $r-1$ copies of $Q$ and need not satisfy the clustering promise. We do not assume correctness on this instance. Instead, an averaging argument lets us choose $j$ so that the algorithm rarely returns that same candidate partition. The output therefore reliably reveals whether $D=Q$ or $D=P$.

Non-adaptivity lets us control the $Q$-budget as well. Since the allocation is fixed before observing any samples, the copies of $Q$ at the least-sampled indices have total expected sample count at most $\frac{T(r-1)}{k}$ under either LFHT hypothesis. Capping their sample count at a constant multiple of this bound gives a $Q$-budget of $O(\frac{Tr}{k})$, with only a small probability of aborting. Together with the lightly sampled index $j$, this gives budgets $(T,O(\frac{Tr}{k}),O(\frac{T}{k}))$, whereas the reduction of \citet{KPS26} allows $T$ samples from each reference. The smaller $Q$-budget lets us apply the asymmetric lower bounds of \cref{sec:lb}.

The formal reduction is stated as follows.

\begin{lemma}\label{lem:na-reduction}
Let $k\geq200$ and $1\leq r\leq k/2$, and let $\mathcal{A}$ be a non-adaptive algorithm for 2-clustering, with one known distribution or with both unknown. Suppose its (possibly random) allocation $(T_1,\ldots,T_k)$ satisfies $\sum_i T_i\leq T$, and it returns the correct size-$(r,k-r)$ partition with probability at least $\frac{99}{100}$ on every valid instance. Let $\Lambda$ be a distribution over pairs $(P,Q)$ with $\tv(P,Q)\geq\varepsilon$, where $P$ is fixed and known in the first variant. Then there exists an algorithm $\mathcal{A}'$ that decides whether $D=P$ or $D=Q$ using
\begin{align*}
s_P = T, \qquad
s_Q = \min\left(\left\lceil \frac{40\,Tr}{k} \right\rceil,\, T\right), \qquad
s_D = \min\left(\left\lceil \frac{160\,T}{k} \right\rceil,\, T\right)
\end{align*}
samples from $P$, $Q$, and $D$, respectively. Under each hypothesis, its error averaged over $(P,Q)\sim\Lambda$ is at most $\frac{1}{10}$. Moreover, $s_D\leq4s_Q$ and $T\geq k/5$, and hence
\begin{align}\label{eq:na-reduction-budgets}
    s_P=T,\qquad s_Q=O(Tr/k),\qquad s_D=O(T/k).
\end{align}
\end{lemma}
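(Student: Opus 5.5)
The plan is to build $\mathcal{A}'$ by simulating $\mathcal{A}$ on a planted instance, following the outline above and the argument of \citet[Lemmas~22--23]{KPS26}. Let $\bar T_i:=\E[T_i]$ be the expected allocation; it depends only on $(n,k,r,\varepsilon)$ and the algorithm's internal randomness, not on the data. Sort the indices so that $\bar T_{\pi(1)}\leq\cdots\leq\bar T_{\pi(k)}$, and let $A=\{\pi(1),\ldots,\pi(r-1)\}$. Since $\sum_i\bar T_i\leq T$, averaging gives $\sum_{i\in A}\bar T_i\leq T(r-1)/k$. Moreover, at least $3k/4$ indices satisfy $\bar T_i\leq 4T/k$. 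Let $J$ be the set of indices outside $A$ with $\bar T_i\leq 4T/k$, so $|J|\geq 3k/4-r+1\geq k/4$.

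For $j\in J$, let $\mathcal{C}_j$ be the candidate partition $\{A\cup\{j\},[k]\setminus(A\cup\{j\})\}$. Let $I_0$ be the instance with $Q$ on $A$ and $P$ everywhere else; it may violate the promise. The events ``output $=\mathcal{C}_j$'' are disjoint across $j$, so under $I_0$ (averaged over $\Lambda$ and the internal randomness),
\[
\sum_{j\in J}\Pr[\text{output}=\mathcal{C}_j]\leq1.
\]
Hence some $j\in J$ has probability at most $4/k\leq 1/50$. The key observation is that under $D=P$ the planted instance is exactly $I_0$. So fixing this $j$ (which depends only on the allocation law and $\Lambda$, not on samples), the tester declares $D=Q$ iff the output is $\mathcal{C}_j$. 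Under $D=P$ it errs with probability at most $1/50$. Under $D=Q$ the instance is valid, with small cluster $A\cup\{j\}$ of size $r$, and $\mathcal{A}$ errs with probability at most $1/100$.

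For the simulation, $\mathcal{A}'$ draws the allocation $(T_i)$ itself. It then feeds $T_j$ samples of $D$ to index $j$, $T_i$ samples of $Q$ to each $i\in A$, and $T_i$ samples of $P$ to the rest. In the known-$P$ variant these last samples are generated directly. The $P$-usage is at most $T$, the $D$-usage is $T_j$, and the $Q$-usage is $\sum_{i\in A}T_i$.

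To obtain the fixed budgets $s_D,s_Q$, abort (and count it as an error) whenever $T_j>s_D$ or $\sum_{i\in A}T_i>s_Q$. By Markov's inequality, $\Pr[T_j>160T/k]\leq \bar T_j k/(160T)\leq 1/40$ and $\Pr[\sum_A T_i>40Tr/k]\leq 1/40$. When $s_Q=T$ or $s_D=T$, no abort is needed since $T_i\leq T$. These abort probabilities are the same under both hypotheses, because the allocation is independent of the data; this is exactly where non-adaptivity is used. The total error is then at most $1/50+1/20<1/10$ under each hypothesis.

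The remaining claims are arithmetic. First, $T\geq k/5$ holds since a correct algorithm must sample most indices at least once. Otherwise more than $4k/5$ indices would receive zero samples with noticeable probability, and two unsampled indices could be swapped between clusters undetectably, forcing an error of about $1/2$. This needs a short symmetrization argument and is the step I expect to be fiddliest. Second, $s_D\leq 4s_Q$ follows because $160T/k\leq 4\cdot 40Tr/k$ once $r\geq1$, with ceilings and the cap at $T$ preserving the inequality. Finally, given $T\geq k/5$, the ceilings are absorbed, which yields \eqref{eq:na-reduction-budgets}.
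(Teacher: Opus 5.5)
Your reduction is the paper's: sort by $\E[T_i]$, plant $Q$ on the $r-1$ least-sampled indices, and plant $D$ at an index $j$ with $\E[T_j]\leq 4T/k$ for which $\mathcal{A}$ rarely outputs $\mathcal{C}_j$ on the comparison instance. You then cap the $Q$- and $D$-usage by Markov and use non-adaptivity to make the abort events hypothesis-independent. Two of your choices differ from the paper but are harmless variants. You pick $j$ as the minimizer over $J$ with $|J|\geq k/4$, where the paper intersects two majority sets. You count aborts as errors rather than answering $D=P$. You assert that the events $\{\text{output}=\mathcal{C}_j\}$ are disjoint without checking that the $\mathcal{C}_j$ are pairwise distinct. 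This is easy, including for $r=k/2$, but worth a line.

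The one genuinely different step is $T\geq k/5$. The paper gets it from the tester you have already built. Couple the two hypotheses through the same $(P,Q)$, allocation, internal randomness and reference samples. On $\{T_j=0\}$ the two answers of $\mathcal{A}'$ coincide, so the two error probabilities sum to at least $\Pr[T_j=0]$. This gives $\Pr[T_j=0]\leq\frac15$, hence $\frac45\leq\E[T_j]\leq\frac{4T}{k}$.

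Your swap argument on $\mathcal{A}$ itself also works, but you leave it as a sketch, and it needs no symmetrization. If $T<k/5$, then $\sum_i\Pr[T_i\geq1]\leq\sum_i\E[T_i]<k/5$, so more than $k/2$ indices have $\Pr[T_i\geq1]<\frac25$. Take two such indices $a,b$, and two valid instances that differ only by swapping $a$ and $b$ across the clusters. With probability more than $\frac15$ neither index is sampled. On that event the outputs coincide while the correct partitions differ (since $k>2$), so one instance has error above $\frac1{10}>\frac1{100}$. Your ``error of about $1/2$'' is an overstatement, but it is not needed.

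The paper's route is shorter because it recycles $j$ and the error bounds of $\mathcal{A}'$. Yours is self-contained in $\mathcal{A}$'s correctness guarantee.
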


\begin{proof}
Let $\hat{\mathcal{I}}$ be the output of $\mathcal{A}$.
Relabel the indices so that $\mathbb{E}[T_1]\leq\cdots\leq\mathbb{E}[T_k]$.
The allocation is non-adaptive, so its distribution does not depend on the
instance.

\paragraph{Choice of $j$.}
For each $j \in \{r, \ldots, k\}$, let $S_j := \{1, \ldots, r-1\} \cup \{j\}$ and $\Pi_j:=\{S_j,[k]\setminus S_j\}$. These candidate partitions are distinct, including when $r=k/2$. Consider the random instance $(P,Q) \sim \Lambda$, $D_i = Q$ for $i < r$, $D_i = P$ for $i \geq r$. This comparison instance has only $r-1$ copies of $Q$ and need not satisfy the clustering promise. We use only the distribution of the algorithm's output on it, without assuming correctness. We show that there exists $j \in \{r, \ldots, k\}$ such that:
\begin{itemize}
	\item $\Pr[\hat{\mathcal{I}} = \Pi_j] \leq \frac{4}{k}$, and
	\item $\mathbb{E}[T_j] \leq \frac{4T}{k}$.
\end{itemize}
The events $\{\hat{\mathcal{I}} = \Pi_j\}_{j \geq r}$ are mutually exclusive, so
$\sum_{j \geq r} \Pr[\hat{\mathcal{I}} = \Pi_j] \leq 1$, and by averaging, more than half
of the $j \geq r$ satisfy $\Pr[\hat{\mathcal{I}} = \Pi_j] \leq \frac{2}{k-r+1} \leq \frac{4}{k}$
(using $r \leq \frac{k}{2}$). Likewise $\sum_{j \geq r} \mathbb{E}[T_j] \leq T$, so more than
half satisfy $\mathbb{E}[T_j] \leq \frac{4T}{k}$. Fix any $j$ satisfying both.

\paragraph{Algorithm $\mathcal{A}'$.} $\mathcal{A}'$ requests $s_P$, $s_Q$, and $s_D$ samples from the $P$-, $Q$-, and $D$-oracles, and runs $\mathcal{A}$ on the simulated instance
\begin{align*}
D_i := Q \ \ (i < r), \qquad D_j := D, \qquad D_i := P \ \ (i \notin S_j),
\end{align*}
forwarding its sample requests to the requested samples. If $\mathcal{A}$ requests more than $s_Q$ samples in total from $\{D_i\}_{i < r}$, or more than $s_D$ samples from $D_j$, then $\mathcal{A}'$ aborts and returns $D=P$. Otherwise it returns $D=Q$ if $\hat{\mathcal{I}} = \Pi_j$, and $D=P$ otherwise. The requests to the $P$-oracle never exceed $s_P=T$.

Recall that $s_Q$ and $s_D$ take the form $\min(\dotsc,T)$.  Whenever $T$ achieves the minimum, an abort is impossible. Otherwise, the two respective abort probabilities are bounded via Markov's inequality:
\begin{align*}
\Pr\Big[\sum_{i < r} T_i > s_Q\Big] \leq \frac{\sum_{i < r} \mathbb{E}[T_i]}{\frac{40\,Tr}{k}} \leq \frac{\frac{T(r-1)}{k}}{\frac{40\,Tr}{k}} \leq \frac{1}{40},
\qquad
\Pr[T_j > s_D] \leq \frac{\mathbb{E}[T_j]}{\frac{160\,T}{k}} \leq \frac{\frac{4T}{k}}{\frac{160\,T}{k}} = \frac{1}{40},
\end{align*}
using $\sum_{i < r} \mathbb{E}[T_i] \leq \frac{T(r-1)}{k}$ (the smallest $r-1$ of the sorted expectations sum to at most that much, since $\sum_i \mathbb{E}[T_i] \leq T$) and the choice of $j$.

\paragraph{Case $D = P$.} An abort returns $D=P$, which is correct, and otherwise the simulated instance is the one used to pick $j$, so the error probability is at most
$\Pr[\hat{\mathcal{I}} = \Pi_j] \leq \frac{4}{k} \leq \frac{1}{50}$ (for $k \geq 200$).

\paragraph{Case $D = Q$.} The instance is valid for $\mathcal{A}$ with true
partition $\Pi_j$, so by its success guarantee and the two abort bounds, the error is at most $\frac{1}{100} + \frac{2}{40} < \frac{1}{10}$.

Finally, $\left\lceil \frac{160\,T}{k} \right\rceil \leq 4\left\lceil \frac{40\,Tr}{k} \right\rceil$ since $r\geq1$, and $s_D\leq T$, so $s_D\leq4s_Q$ in each case of the two minima.

To control rounding in the budgets, couple the two hypotheses using the same pair $(P,Q)$, allocation, internal randomness, and reference samples. If $T_j=0$, the simulated observations and the tester's answer coincide under both hypotheses, so at least one of the two answers is wrong. Since each error probability is at most $1/10$, we must have $\Pr[T_j\geq1]\geq4/5$. Hence $4T/k\geq\E[T_j]\geq4/5$, giving $T\geq k/5$. The ceilings in $s_Q,s_D$ can therefore be absorbed into the bounds in~\eqref{eq:na-reduction-budgets}.
\end{proof}

\subsubsection{Application with One Known Distribution}\label{app:clustering-lower-known}

Here we take the large-cluster distribution to be $U$ and use the following existing average-case LFHT lower bound.

\begin{lemma}\label{lem:lfht-pu}
\emph{(\citet[Proposition~13]{GP24})} 
For $\varepsilon\in(0,1/8]$, there exists a distribution $\Lambda_{\mathrm{alt}}$ over pairs $(U,Q)$ with $\tv(U,Q)\geq\varepsilon$ satisfying the following. Any tester deciding $D=Q$ vs.\ $D=U$ with average error at most $1/10$ under each hypothesis, using $s_Q$ samples from $Q$ and $s_D$ from $D$ with $s_Q\gtrsim s_D$, must satisfy
\begin{align*}
    s_Q\gtrsim\frac{\sqrt n}{\varepsilon^2},\qquad
    s_Qs_D\gtrsim\frac{n}{\varepsilon^4},\qquad
    s_D\gtrsim\frac1{\varepsilon^2}.
\end{align*}
\end{lemma}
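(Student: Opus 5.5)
This lemma is cited from \citet[Proposition~13]{GP24}, so the plan is to re-derive it from that construction and check that it gives an average-case statement in the form stated, with a fixed prior $\Lambda_{\mathrm{alt}}$.

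The prior $\Lambda_{\mathrm{alt}}$ is the paired Paninski perturbation of the uniform distribution. We take $P=U$ and split $[n]$ into $n/2$ pairs. On each pair, $Q$ assigns masses $\frac{1\pm 8\varepsilon\eta_j}{n}$ with independent uniform signs $\eta_j$. Then $\tv(U,Q)\geq\varepsilon$ holds deterministically, and the prior is the distribution of $(U,Q)$.

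Under Poissonization (justified exactly as in \cref{lem:poissonized-tester}), the $Q$- and $D$-counts of distinct pairs are independent under either hypothesis. The per-pair laws are $L\cdot g(y_1,y_2)$ when $D=U$ and $L\cdot g(y_1+z_1,y_2+z_2)$ when $D=Q$, in the notation of \cref{lem:one-pair-laws}, but now without heavy pairs and without $P$-counts.

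Tensorization reduces the problem to the per-pair $\chi^2$. This is where the analysis differs from \cref{lem:pair-chi-square}. With no heavy mass to divide by, the denominator is $L\,g(\cdot)\geq L(1-8\varepsilon)^{|y|+|z|}$, which is comparable to $L$ on typical counts. The numerator is $B(y,z)^2\lesssim\varepsilon^4 4^{|y|+|z|}$ and vanishes unless $|z|\geq1$ and $|y|+|z|\geq2$ (\cref{lem:perturbation}).

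Summing against Poisson weights with means $\lambda_Q=s_Q/n$ and $\lambda_D=s_D/n$ gives a per-pair divergence of order
\begin{align*}
\varepsilon^4\left(\lambda_D^2+\lambda_Q\lambda_D\right)+\text{higher-order terms}.
\end{align*}
The higher-order terms are controlled while $\lambda_Q,\lambda_D\lesssim1$. Multiplying by $n/2$ pairs yields a total of order $\varepsilon^4 s_D(s_Q+s_D)/n$. Since $s_Q\gtrsim s_D$, this is at most a constant times $\varepsilon^4s_Qs_D/n$, so a small total forces $s_Qs_D\gtrsim n/\varepsilon^4$.

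The main obstacle is the regime $s_Q\gtrsim n$, where $\lambda_Q$ is large and the crude $4^{|y|}$ bound blows up. My first approach there is to switch to the standard argument in which $Q$ is effectively learned. That is, we give the tester $Q$ exactly, which only helps it, and reduce to identity testing $D=U$ versus $D=Q$ against a Paninski $Q$. With $\eta$ known, distinguishing requires $s_D\gtrsim1/\varepsilon^2$ by a direct two-point $\chi^2$ bound. In that regime $s_Qs_D\gtrsim n/\varepsilon^4$ is implied only if $s_Q\lesssim n/\varepsilon^2$, so the remaining subcase has to be handled by the $s_D$ bound together with the first condition.

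The remaining two conditions then follow separately:
\begin{itemize}
\item $s_D\gtrsim1/\varepsilon^2$ comes from the two-point bound on a single pair with $Q$ revealed.
\item $s_Q\gtrsim\sqrt n/\varepsilon^2$ follows from $s_Qs_D\gtrsim n/\varepsilon^4$ together with $s_D\lesssim s_Q$, since then $s_Q^2\gtrsim s_Qs_D$.
\end{itemize}
Finally, because the error guarantee is only averaged over $\Lambda_{\mathrm{alt}}$, Le Cam's inequality applies directly to the two mixtures. This is exactly the form of \citet[Proposition~13]{GP24}.
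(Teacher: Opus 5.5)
\textbf{Gap: the product bound is not proved when $s_Q\gtrsim n$.} Your $\chi^2$ computation is essentially fine when $s_Q\lesssim n$, up to a constant issue noted at the end. But the lemma puts no upper limit on $s_Q$, and your handling of $s_Q\gtrsim n$ does not establish the product bound.

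Revealing $Q$ gives only $s_D\gtrsim1/\varepsilon^2$, hence $s_Qs_D\gtrsim s_Q/\varepsilon^2$. This implies $s_Qs_D\gtrsim n/\varepsilon^4$ only when $s_Q\gtrsim n/\varepsilon^2$; the inequality in your text is reversed. On the intermediate range $n\lesssim s_Q\lesssim n/\varepsilon^2$, the other two conditions do not imply the third. For instance, when $n\geq1/\varepsilon^4$, the budgets $s_Q=n$ and $s_D=C/\varepsilon^2$ satisfy $s_D\gtrsim1/\varepsilon^2$, $s_Q\gtrsim\sqrt n/\varepsilon^2$ and $s_Q\gtrsim s_D$. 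Yet $s_Qs_D=Cn/\varepsilon^2\ll n/\varepsilon^4$. For the Paninski prior, the true requirement in this range really is $s_D\gtrsim n/(\varepsilon^4 s_Q)$, which reaches $1/\varepsilon^4$ at $s_Q=n$.

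Your derivation of $s_Q\gtrsim\sqrt n/\varepsilon^2$ runs through the product bound. So that condition is also unproved when $n\lesssim s_Q\lesssim\sqrt n/\varepsilon^2$, a range that is nonempty when $n\lesssim1/\varepsilon^4$. This regime matters for the paper: in \cref{thm:na-lb-c1}, where $s_Q\asymp Tr/k$, it is exactly what is needed when $r\lesssim n\lesssim r/\varepsilon^4$.

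\textbf{Missing ingredient.} You need a per-pair bound that saturates in $\lambda_Q$, rather than growing like $e^{O(\lambda_Q)}$ as the $4^{|y|}$ bound does. Work with $\chi^2(\mathbb{P}_1^{\mathrm{pair}}\|\mathbb{P}_0^{\mathrm{pair}})$ and condition on the $Q$-counts $y$. Writing $p_\pm$ for the law of $y$ given $\eta=\pm1$, the marginal of $y$ is $\frac12(p_++p_-)$ under both hypotheses. Given $y$, the $D$-counts are $\Poi(\lambda_D)^{\otimes2}$ under $D=U$, and a mixture over the posterior of $\eta$ under $D=Q$. The identity $\sum_k\poi_{a(1+\delta)}(k)\poi_{a(1+\delta')}(k)/\poi_a(k)=e^{a\delta\delta'}$ then gives
\[
\chi^2\bigl(\mathbb{P}_1^{\mathrm{pair}}\big\|\mathbb{P}_0^{\mathrm{pair}}\bigr)=\cosh c-1+\E_y\bigl[m(y)^2\bigr]\sinh c,\qquad c=128\varepsilon^2\lambda_D,\quad m(y)=\E[\eta\mid y].
\]
By the Poisson Bhattacharyya formula, $\E_y[m(y)^2]\leq\sum_y\bigl(\sqrt{p_+(y)}-\sqrt{p_-(y)}\bigr)^2\leq256\,\varepsilon^2\lambda_Q$.

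For $c\leq1$, the per-pair divergence is therefore $\lesssim\varepsilon^4\lambda_D(\lambda_D+\lambda_Q)$, uniformly in $\lambda_Q$. For $c>1$, you already have $s_Qs_D\gtrsim s_D^2\gtrsim n^2/\varepsilon^4$. Tensorizing over the pairs yields $s_D(s_Q+s_D)\gtrsim n/\varepsilon^4$ in all regimes at once. Your deductions of $s_Q\gtrsim\sqrt n/\varepsilon^2$ and $s_D\gtrsim1/\varepsilon^2$ then go through.

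\textbf{Smaller defect.} With your tilt $1\pm8\varepsilon$, the lower bound $(1-8\varepsilon)^{|y|+|z|}$ on the denominator vanishes at $\varepsilon=1/8$. There $\chi^2(\mathbb{P}_0^{\mathrm{pair}}\|\mathbb{P}_1^{\mathrm{pair}})=\infty$, since $\mathbb{P}_1^{\mathrm{pair}}$ forbids $Q$- and $D$-samples on opposite elements of a pair. Using the $\chi^2$ direction above, or a tilt such as $1\pm2\varepsilon$, avoids this.
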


\begin{restatable}{theorem}{thmnalbcone}\label{thm:na-lb-c1}
For every $r \in \left[1, \frac{k}{2}\right]$, $k \geq 200$, and $\varepsilon\in(0,1/8]$, with one distribution known,
\begin{align*}
T_{\mathrm{na}}(n, k, r, \varepsilon) \;=\; \Omega\!\left(\frac{k\sqrt n}{\sqrt r\,\varepsilon^2} + \frac{k}{\varepsilon^2}\right).
\end{align*}
\end{restatable}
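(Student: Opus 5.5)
The plan is to combine the reduction of \cref{lem:na-reduction} (one-known variant, with $P=U$ fixed and known) with the average-case lower bound of \cref{lem:lfht-pu}.

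First, we boost the clustering algorithm to success probability $\frac{99}{100}$. This is done by a constant number of repetitions with a majority vote, which keeps it non-adaptive and changes $T$ only by a constant factor. We then apply \cref{lem:na-reduction} with $\Lambda=\Lambda_{\mathrm{alt}}$, the prior over pairs $(U,Q)$ from \cref{lem:lfht-pu}. This gives a tester for $D=Q$ versus $D=U$ with average error at most $\frac1{10}$ under each hypothesis, using $s_Q=O(Tr/k)$ samples from $Q$ and $s_D=O(T/k)$ samples from $D$. The $P$-samples are samples from the known $U$ and can be generated internally, so they cost nothing and can be ignored. The lemma also guarantees $s_D\leq4s_Q$, which supplies the hypothesis $s_Q\gtrsim s_D$ needed to invoke \cref{lem:lfht-pu}.

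Second, we read off the consequences. The condition $s_D\gtrsim 1/\varepsilon^2$ together with $s_D=O(T/k)$ gives $T\gtrsim k/\varepsilon^2$. The condition $s_Qs_D\gtrsim n/\varepsilon^4$ gives $\frac{Tr}{k}\cdot\frac{T}{k}\gtrsim \frac{n}{\varepsilon^4}$, that is, $T\gtrsim \frac{k\sqrt n}{\sqrt r\,\varepsilon^2}$. Summing the two bounds (the maximum is within a factor of two of the sum) yields the theorem.

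The main obstacle is the rounding in the ceilings. The expressions $\lceil 40Tr/k\rceil$ and $\lceil160T/k\rceil$ are $O(Tr/k)$ and $O(T/k)$ only once $T\gtrsim k$. This is handled by the lemma's conclusion $T\geq k/5$, so \eqref{eq:na-reduction-budgets} holds as stated. A second point to check is that \cref{lem:lfht-pu} applies to average-case error over $\Lambda_{\mathrm{alt}}$, which is exactly the guarantee produced by the reduction. The $\min(\cdot,T)$ caps only make the budgets smaller, so they do not affect the argument.
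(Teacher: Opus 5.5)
Your proposal is correct and matches the paper's proof: boost the clustering algorithm to success probability $\frac{99}{100}$, apply \cref{lem:na-reduction} with $P=U$ (generating the $U$-samples internally) and $\Lambda=\Lambda_{\mathrm{alt}}$, and use $s_D\leq 4s_Q$ to invoke \cref{lem:lfht-pu}. Substituting \eqref{eq:na-reduction-budgets} then gives $n/\varepsilon^4\lesssim T^2r/k^2$ and $1/\varepsilon^2\lesssim T/k$. You also handle the ceilings through $T\geq k/5$, as the lemma does.
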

\begin{proof}
Apply \cref{lem:na-reduction} with $P=U$ and $(U,Q)\sim\Lambda_{\mathrm{alt}}$, and generate the $U$-samples directly. Since $s_D\leq4s_Q$, \cref{lem:lfht-pu} applies. The two required bounds follow by substituting~\eqref{eq:na-reduction-budgets}:
\begin{equation*}
    \frac{n}{\varepsilon^4}\lesssim s_Qs_D
    \lesssim\frac{T^2r}{k^2},\qquad 
    \frac1{\varepsilon^2}\lesssim s_D\lesssim\frac Tk.
\end{equation*}
\end{proof}

Together with \cref{thm:subset-c1}, this proves the characterization with one known distribution in \cref{sec:clustering}.

\subsubsection{Application with Both Distributions Unknown}\label{app:clustering-lower-unknown}

The known-distribution lower bound still applies when both distributions are unknown. To obtain the additional term, we apply the same reduction to the hard pairs from \cref{sec:lb-gap}.

\begin{restatable}{theorem}{thmnalbctwo}\label{thm:na-lb-c2}
For every $r \in \left[1, \frac{k}{2}\right]$, $k \geq 200$, and $\varepsilon\in(0,1/8]$, with both distributions unknown,
\begin{align*}
T_{\mathrm{na}}(n, k, r, \varepsilon) \;=\; \Omega\!\left(\frac{k\sqrt n}{\sqrt r\,\varepsilon^2} \;+\; \frac{k}{\varepsilon^2}\right),
\end{align*}
and, when additionally $n \gtrsim \frac{k^2}{r\varepsilon^4}$,
\begin{align*}
T_{\mathrm{na}}(n, k, r, \varepsilon) \;=\; \Omega\!\left(\frac{k^{2/3}\, n^{2/3}}{r^{1/3}\,\varepsilon^{4/3}}\right).
\end{align*}
\end{restatable}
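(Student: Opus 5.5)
The first bound holds because a both-unknown algorithm can be run when one distribution happens to be known: it simply ignores that knowledge. Formally, any non-adaptive algorithm for the both-unknown variant is also a valid algorithm for the one-known variant with the same budget. Hence $T_{\mathrm{na}}$ for both-unknown is at least $T_{\mathrm{na}}$ for one-known, and \cref{thm:na-lb-c1} gives $\Omega\bigl(\frac{k\sqrt n}{\sqrt r\,\varepsilon^2}+\frac{k}{\varepsilon^2}\bigr)$ directly.

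For the second bound I would apply \cref{lem:na-reduction} with $\Lambda$ the law of $(P,Q)$ under the heavy/light construction of \cref{sec:lb-gap}, conditioned on the event $E$ of \cref{lem:validity}. That conditioning makes every instance satisfy $\tv(P,Q)\geq\varepsilon$, as the lemma requires. The lemma produces a tester with average error at most $\frac1{10}\leq\frac13$ under each conditional prior, using budgets $s_P=T$, $s_Q=O(Tr/k)$ and $s_D=O(T/k)$, with $s_D\leq 4s_Q$. By \cref{rem:gap-avg}, the bound \eqref{inq:alfht-lb} applies to such average-case testers. Since $s_Q+s_D=O(s_Q)$, it gives
\begin{align*}
\frac{n^2}{\varepsilon^4}\lesssim s_Ps_D(s_Q+s_D)\lesssim T\cdot\frac{T}{k}\cdot\frac{Tr}{k}=\frac{T^3r}{k^2},
\end{align*}
which rearranges to $T\gtrsim k^{2/3}n^{2/3}/(r^{1/3}\varepsilon^{4/3})$.

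The main obstacle is that \eqref{inq:alfht-lb} was proved only for $s_D\leq s_P\leq n/100$, and its construction needs $t_Q,t_D\leq t_P\leq n$, with integer budgets at least $1$ so that $t_R\geq100$. The conditions $s_D\leq s_P$ and $s_Q\leq s_P$ hold because both are at most $T$. The conditions $s_Q,s_D\geq1$ hold because $T\geq k/5$ and the budgets include ceilings. The condition $s_P=T\leq n/100$ is handled by contradiction. If $T>n/100$, I would check that $T$ already exceeds the target $k^{2/3}n^{2/3}r^{-1/3}\varepsilon^{-4/3}$ up to constants. This comparison is where the extra hypothesis $n\gtrsim k^2/(r\varepsilon^4)$ is needed. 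Its role is less obvious than it looks: the ratio $n\big/\bigl(k^{2/3}n^{2/3}r^{-1/3}\varepsilon^{-4/3}\bigr)=(nr\varepsilon^4/k^2)^{1/3}$ is at least a constant precisely when $n\gtrsim k^2/(r\varepsilon^4)$. So in that regime $T>n/100$ implies $T\gtrsim k^{2/3}n^{2/3}/(r^{1/3}\varepsilon^{4/3})$, and we are done. Otherwise $T\leq n/100$, and the chi-square argument of \cref{sec:lb-gap} applies to the reduced tester.

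One remaining detail is that the reduction's budgets are $\min(\cdot,T)$ with ceilings. To keep the constants in \eqref{eq:na-reduction-budgets} valid, I would absorb the ceilings using $T\geq k/5$ and note that taking the minimum with $T$ only lowers $s_Q$ and $s_D$. Lowering these budgets is harmless, because \eqref{inq:alfht-lb} is a necessary condition on the tester's budgets, and the product upper bound $s_Ps_D(s_Q+s_D)\lesssim T^3r/k^2$ survives.
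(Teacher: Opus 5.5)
Your proposal is correct and matches the paper's proof step for step. The first bound comes from \cref{thm:na-lb-c1}. For the second, you dispose of $T>n/100$ using $T^*\lesssim n$ in the regime $n\gtrsim k^2/(r\varepsilon^4)$, then apply \cref{lem:na-reduction} with the heavy/light prior conditioned on $E$ and use the average-case bound of \cref{rem:gap-avg} to get $n^2/\varepsilon^4\lesssim T^3r/k^2$; the only cosmetic difference is that you bound $s_Q+s_D$ via $s_D\leq 4s_Q$ where the paper uses $r\geq1$, which amounts to the same thing.
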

\begin{proof}
The first bound is \cref{thm:na-lb-c1}, since a clustering algorithm for two unknown distributions can also be used when one is known. For the additional bound, suppose $n\gtrsim k^2/(r\varepsilon^4)$ and set
\begin{align*}
    T^*:=\frac{k^{2/3}n^{2/3}}{r^{1/3}\varepsilon^{4/3}}.
\end{align*}
This regime gives $T^*\lesssim n$, so $T>n/100$ already implies $T\gtrsim T^*$. We henceforth assume that $T\leq n/100$.

Apply \cref{lem:na-reduction} with $\Lambda$ the common distribution of $(P,Q)$ under $\Lambda_0(\cdot\mid E)$ and $\Lambda_1(\cdot\mid E)$. The resulting tester has average error at most $1/10$ under each hypothesis and budgets $1\leq s_Q,s_D\leq s_P=T\leq n/100$. The average-case lower bound in \cref{rem:gap-avg} therefore applies; using~\eqref{eq:na-reduction-budgets} and $r\geq1$, it gives
\begin{align*}
    \frac{n^2}{\varepsilon^4}
    \lesssim s_Ps_D(s_Q+s_D)
    \lesssim T\cdot\frac Tk\cdot\frac{Tr}{k}
    =\frac{T^3r}{k^2}.
\end{align*}
Rearranging yields $T\gtrsim T^*$.
\end{proof}

Together with \cref{thm:subset-c2}, this proves the characterization with both distributions unknown in \cref{sec:clustering}.

\subsection{Proof of the Adaptivity Gap}\label{app:adaptivity-gap}

\citet[Table~1 and Appendix~F]{KPS26} give the adaptive complexity
\begin{align*}
    T_{\mathrm a}=\widetilde\Theta\!\left(
        \frac{k^{2/3}n^{2/3}}{r^{2/3}\varepsilon^{4/3}}
        +\frac{k\sqrt n}{r\varepsilon^2}
        +\frac{k^{1/3}n^{2/3}}{\varepsilon^{4/3}}
        +\frac{\sqrt{nk}}{\varepsilon^2}
        +\frac{k}{\varepsilon^2}\right).
\end{align*}
The term $k/\varepsilon^2$ is shared. For the other terms, canceling the
common factors in $n$ and $\varepsilon$ gives the ratios
\begin{align*}
    \frac{k^{2/3}/r^{1/3}}
         {\max\{k^{2/3}/r^{2/3},\,k^{1/3}\}}
    &=\min\{r^{1/3},\,(k/r)^{1/3}\}\leq k^{1/6},\\
    \frac{k/\sqrt r}{\max\{k/r,\,\sqrt k\}}
    &=\min\{\sqrt r,\,\sqrt{k/r}\}\leq k^{1/4}.
\end{align*}
Hence, $T_{\mathrm{na}}/T_{\mathrm a}=\widetilde O(k^{1/4})$.
The example in \cref{sec:clustering} attains this bound up to logarithmic factors.

\section{Extension to Multiple Hypotheses}\label{sec:d-ary}

The problem extends naturally to $d\geq 2$ candidates. Let $P_1,\ldots,P_d$ be distributions on $[n]$, and let $D=P_{j^*}$ for an unknown $j^*\in[d]$ with $\tv(D,P_i)\geq\varepsilon$ for every $i\neq j^*$. Given $s_i$ samples from each $P_i$ and $s_D$ samples from $D$, the task is to output $j^*$. Let $s_{(1)}\leq s_{(2)}$ be the two smallest of $s_1,\ldots,s_d$.

\begin{restatable}{proposition}{propdary}\label{prop:d-ary}
	Let $n\geq 11$, $\varepsilon\in(0,1/8]$ and $2\leq d\leq n$.
	The set $\RLF^{(d)}(\varepsilon)$ of feasible sample budgets satisfies
	\begin{gather*}
		\RLF^{(d)}(\varepsilon)
		\asymp_{\log n}
		\left\{
		(s_1,\ldots,s_d,s_D):
		s_D \geq \frac{1}{\varepsilon^2},\quad
		s_{(2)} \geq \frac{\sqrt{n\alpha}}{\varepsilon^2},\quad
		s_D\bigl(s_{(1)}+s_D\bigr) \geq \frac{n\alpha}{\varepsilon^4}
		\right\},\\
		\text{where }
		\alpha=\max\Bigl\{1,\frac{n}{\max\{s_{(2)},s_D\}}\Bigr\}.
	\end{gather*}
\end{restatable}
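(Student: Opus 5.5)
Both directions reduce to the binary characterization of \cref{thm:char}: the upper bound via a tournament of binary tests, the lower bound by embedding a hard binary instance. The indices $(1),(2)$ refer to the two smallest budgets, and $\alpha$ is computed from $s_{(2)}$.

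\textbf{Upper bound.} Inflate all budgets by a factor $C\log n$ (so $\log d\le\log n$ is absorbed), and run a sequential elimination tournament. Every pair $(i,i')$ other than the pair of the two smallest indices has both budgets at least $s_{(2)}$. Let $i$ denote the larger-budget index of the pair. For a pair $(i,i')$ containing $j^*$, the tests only need to pass when one of the two hypotheses is correct. The larger budget in each pair is at least $s_{(2)}$, so the second condition of \cref{thm:char} holds, with $\alpha$ only decreasing. The smaller budget is at least $s_{(1)}$, so the third condition holds. Hence each binary test $(P_i,P_{i'})$ succeeds, after boosting by majority over $O(\log d)$ repetitions, with error $1/(10d)$.

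This guarantee is only needed when $D\in\{P_i,P_{i'}\}$, so I use a knockout scheme. Keep a current champion, compare it with the next candidate, and retain the winner. Whenever $j^*$ is involved in a comparison, it wins with probability $1-1/(10d)$. Comparisons not involving $j^*$ may go either way, but they are harmless: once $j^*$ enters the bracket it never loses. A union bound over $d-1$ comparisons gives overall error $\le 1/10$.

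The one subtlety is that $D$-samples are reused across comparisons. The union bound is applied only to comparisons involving $j^*$, each of which is individually correct with high probability, so the dependence between comparisons does not matter.

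\textbf{Lower bound.} Suppose a $d$-ary tester succeeds. The first condition holds by restricting to $d=2$ with known $P_i$: set all other $P_i$ to be known far-away distributions whose samples can be simulated. For the second and third conditions, embed a binary instance in the two smallest-budget slots, labelling them $P:=P_{(2)}$ and $Q:=P_{(1)}$ so that $s_P\ge s_Q$. The remaining $d-2$ hypotheses are set to known distributions, for instance point masses or distributions supported on fresh domain elements. This requires slightly enlarging the domain, which is where the condition $n\ge11$ and the constraint $d\le n$ would enter, possibly with only a constant-factor loss in $\varepsilon$ or $n$. The planted distributions must be at distance $\ge\varepsilon$ from both $P$ and $Q$. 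Their samples can be simulated, and with probability one $D$ is never among them, so the $d$-ary tester yields a binary tester with budgets $(s_{(2)},s_{(1)},s_D)$. \Cref{thm:char} then gives exactly the stated conditions, with $\alpha$ computed from $s_{(2)}$.

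\textbf{Main obstacle.} The domain-sharing issue is the main obstacle. The planted hypotheses need to be far from $P$ and $Q$ without destroying the hard instance of \cref{sec:lb}. I would reserve a single element $n$ and use point masses there, which requires scaling $P$ and $Q$ onto $[n-1]$ and costs only constants. With a single reserved element, the planted hypotheses cannot be distinct point masses, so they should be mixtures, e.g.\ $\frac12\delta_n+\frac12 P$-like distributions, which are at distance $\ge1/4\ge\varepsilon$ from both $P$ and $Q$. Duplicated identical candidates are fine, since $D$ is never one of them.
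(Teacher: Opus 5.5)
Your proposal is correct and follows the paper's proof. The upper bound applies \cref{thm:char} pairwise, amplified to error $O(1/d)$, with a union bound over only the comparisons involving $j^*$ (your knockout bracket instead of the paper's ``wins all comparisons'' rule is inessential); the lower bound plants the binary hard instance on $[n-1]$ in the slots with budgets $s_{(2)}\geq s_{(1)}$, and $n\geq 11$ is needed only so that \cref{thm:char} applies on $[n-1]$. Since the promise only requires $\tv(D,P_i)\geq\varepsilon$ and duplicates are allowed, the paper fills the other $d-2$ slots with $\delta_n$ (at distance $1$ from $D$), so your mixture detour is unnecessary; any such mixture would in any case need a known component rather than the unknown $P$, or its samples could not be simulated.
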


We observe that all but one candidate must be sampled at the rate $\frac{\sqrt{n\alpha}}{\varepsilon^2}$, resembling identity testing, and only the least-sampled candidate trades against $s_D$ in the way $Q$ does in \cref{thm:char}. Up to the logarithmic factors in \cref{prop:d-ary}, the least-sampled candidate needs no samples when $s_D\gtrsim\frac{\sqrt{n\alpha}}{\varepsilon^2}$ and is identified by elimination. The upper bound loses a factor $O(\log d)$ in each sample size. Since $d\leq n$, this factor is hidden by $\asymp_{\log n}$.

\begin{proof}
    For the upper bound, we separately apply \cref{thm:char} to every pair $(P_i,P_j)$, with suitable amplification described below, and output the candidate that wins all of its comparisons. (If no such candidate exists, an error is declared.) The amplification modification is that we use $O(\log d)$ independent repetitions to reduce the error probability from $1/3$ to at most $\frac{1}{3d}$, with the repetitions reused across all pairwise comparisons.  Candidate $j^*$ satisfies the promise of every comparison it takes part in, so by a union bound over these $d-1$ comparisons it wins all of them, and every other candidate loses its comparison against $j^*$. Comparisons between two wrong candidates are inconsequential, so they need not be included in the union bound. Since $\RLF(\varepsilon)$ is closed under increasing any sample size, every pair is feasible once the pair with sizes $(s_{(2)},s_{(1)})$ is, and the amplification costs a factor $O(\log d)$ in each sample size.

	For the lower bound, we reduce from the pairwise problem with sizes $(s_{(2)},s_{(1)},s_D)$ on the domain $[n-1]$. Given such an instance $(P,Q,D)$, place $P$ in the slot with $s_{(2)}$ samples and $Q$ in the slot with $s_{(1)}$ samples, and fill each of the remaining $d-2$ slots with the point mass $\delta_n$, whose samples we can generate ourselves. Since $D$ is supported on $[n-1]$, we have $\tv(D,\delta_n)=1$. Together with $\tv(P,Q)\geq\varepsilon$, this makes the result a valid instance whose correct answer is the slot of $D$. The $d$-candidate tester therefore decides whether $D=P$ or $D=Q$, which gives a pairwise tester on $[n-1]$ with sizes $(s_{(2)},s_{(1)},s_D)$. The lower-bound direction of \cref{thm:char} on $[n-1]$ gives the three conditions with $n-1$ in place of $n$, and since $n-1\geq\frac{n}{2}$, this changes them only by constant factors.
\end{proof}

\end{document}